\renewcommand*{\backrefalt}[4]{%
    \ifcase #1 \footnotesize{(Not cited.)}%
    \or        \footnotesize{(Cited on page~#2.)}%
    \else      \footnotesize{(Cited on pages~#2.)}%
    \fi}
\def\1{\bm{1}}
\DeclareMathAlphabet{\mathsfit}{\encodingdefault}{\sfdefault}{m}{sl}
\SetMathAlphabet{\mathsfit}{bold}{\encodingdefault}{\sfdefault}{bx}{n}
\newcommand{\sigmoid}{\mathrm{sigmoid}}
\newcommand{\lambdan}{\lambda_n}
\newcommand{\Gn}{G_{n}}
\newcommand{\Gsn}{G_{*,n}}
\newcommand{\nun}{\nu_{n}}
\newcommand{\nusn}{\nu^{*}_{n}}
\newcommand{\da}{\Delta a_{}}
\newcommand{\das}{\Delta a^{*}_{}}
\newcommand{\db}{\Delta b_{}}
\newcommand{\dbs}{\Delta b^{*}_{}}
\newcommand{\dnu}{\Delta \nu_{}}
\newcommand{\dnus}{\Delta \nu^{*}_{}}
\newcommand{\hGn}{\widehat{G}_{n}}
\newcommand{\lGn}{\overline{G}_{n}}
\newcommand{\Gs}{G_{*}}
\newcommand{\lambdas}{\lambda^*}
\newcommand{\nus}{\nu^{*}}
\newcommand{\taus}{\tau^{*}}
\newcommand{\etas}{\eta^{*}}
\newcommand{\betas}{\beta^{*}}
\newcommand{\dtwo}{{
{D_2}(\Gn, \Gsn)
}}
\newcommand{\dtwobarl}{{
\overline{D_2}(G,\Gs)
}}
\newcommand{\done}{D_1(\Gn,\Gsn)}
\newcommand{\ysigmaonen}{ Y|\sigma((a_{1,n})^{\top}X+b_{1,n}),\nu_{1,n} }
\newcommand{\ysigmatwon}{ Y|\sigma((a_{2,n})^{\top}X+b_{2,n}),\nu_{2,n} }
\newcommand{\yhns}{ Y|h(X,\eta_n^*),\nu_n^*) }
\newcommand{\yha}{ Y|h(X,\eta),\nu}
\newcommand{\yhonen}{ Y|h(X,\eta_{1,n}),\nu_{1,n} }
\newcommand{\yhtwon}{ Y|h(X,\eta_{2,n}),\nu_{2,n} }
\newcommand{\xetas}{ X,\etasn }
\newcommand{\lbg}{ G}
\newcommand{\lbgs}{ \Gs}
\newcommand{\hlbgn}{ \hGn}
\newcommand{\llbgn}{ \lGn}
\newcommand{\plbg}{p_{\lbg}}
\newcommand{\plbgs}{p_{\lbgs}}
\newcommand{\phlbgn}{p_{\hGn}}
\newcommand{\barplbg}{\Bar{p}_{\lbg}}
\newcommand{\barphlbgn}{\Bar{p}_{\hlbgn}}
\newcommand{\linephalf}{\overline{\mathcal{P}}^{1/2}}
\DeclareMathOperator*{\argmax}{arg\,max}
\newcommand{\bbE}{\mathbb{E}}
\newcommand{\bbn}{\mathbb{N}}
\newcommand{\plbgn}{p_{ \Gn}}
\newcommand{\cali}{\mathcal{I}}
\newcommand{\calh}{\mathcal{H}}
\newcommand{\calw}{\mathcal{W}}
\newcommand{\Ione}{
{\text{\uppercase\expandafter{\romannumeral1}}}}
\newcommand{\Itwo}{
{\text{\uppercase\expandafter{\romannumeral2}}}
}
\newcommand{\Ithree}{
{\text{\uppercase\expandafter{\romannumeral3}}}
}
\newcommand{\Ifour}{
{\text{\uppercase\expandafter{\romannumeral4}}}
}
\newcommand{\Ifive}{
{\text{\uppercase\expandafter{\romannumeral5}}
}}
\newcommand{\betan}{\beta_n}
\newcommand{\betasn}{\beta_n^*}
\newcommand{\etan}{\eta_{n}}
\newcommand{\etasn}{\eta^{*}_{n}}
\newcommand{\detan}{\Delta\eta_{n}}
\newcommand{\detasn}{\Delta\eta^{*}_{n}}
\newtheorem{lemma}{Lemma}
\newtheorem{theorem}{Theorem}
\newtheorem{proposition}{Proposition}
\newtheorem{definition}{Definition}
\title{On Minimax Estimation of Parameters in Softmax-Contaminated Mixture of Experts}
\author{%
  Fanqi Yan$^{\star,1}$ 
  \quad
  Huy Nguyen$^{\star,2}$ 
  \quad
  Dung Le$^{\star,2}$ 
  \quad
  Pedram Akbarian$^{3}$ 
  \\
{\bf  Nhat Ho$^{2}$ 
  \quad 
  Alessandro Rinaldo$^{2}$} \\
  \textsuperscript{1} Department of Computer Science,~
  \textsuperscript{2} Department of Statistics and Data Sciences, \\
  \textsuperscript{3} Department of Electrical and Computer Engineering, ~
  The University of Texas at Austin\\
  \texttt{\{fanqi.yan, huynm, quangdung0110, akbarian, minhnhat\}@utexas.edu}, 
  \\
  \texttt{alessandro.rinaldo@austin.utexas.edu}
}
\begin{document}

\maketitle

\begin{abstract}
    The softmax-contaminated mixture of experts (MoE) model is deployed when a large-scale pre-trained model, which plays the role of a fixed expert, is fine-tuned for learning downstream tasks by including a new contamination part, or prompt, functioning as a new, trainable expert. Despite its popularity and relevance, the theoretical properties of the softmax-contaminated MoE have remained unexplored in the literature. In the paper, we study the convergence rates of the maximum likelihood estimator of gating and prompt parameters in order to gain insights into the statistical properties and potential challenges of fine-tuning with a new prompt. We find that the estimability of these parameters is compromised when the prompt acquires overlapping knowledge with the pre-trained model, in the sense that we make precise by formulating a novel analytic notion of distinguishability. Under distinguishability of the pre-trained and prompt models, we derive minimax optimal estimation rates for all the gating and prompt parameters. By contrast, when the distinguishability condition is violated, these estimation rates become significantly slower due to their dependence on the prompt convergence rate to the pre-trained model. Finally, we empirically corroborate our theoretical findings through several numerical experiments.
\end{abstract}
\let\thefootnote\relax\footnotetext{$^\star$Co-first authors.}

\section{Introduction}
\label{sec:introduction}
Mixture of experts (MoE) \cite{Jacob_Jordan-1991,Jordan-1994} has emerged as a statistical machine learning model that  
aggregates the power of multiple sub-models. 
This model consists of two primary components: expert function (or, simply, expert) and a gating network. 
Experts can be, for example, a feed-forward network (FFN) \cite{shazeer2017topk,dai2024deepseekmoe}, a classifier \cite{chen2022theory,nguyen2024general}, or a regression model \cite{faria2010regression,kwon_em_2020}. The gating network softly divides the input space into multiple regions where the opinions of some experts are deemed to be more trustworthy than others. This is done by dynamically allocating higher input-dependent weights instead of constant weights to the various experts, making MoE more flexible and adaptive than traditional mixture models \cite{Lindsay-1995}. As a consequence, MoE has been leveraged in a wide range of fields, including natural language processing \cite{deepseekv3,jiang2024mixtral,grattafiori2024llama3,fedus2022switch,lepikhin_gshard_2021,nguyen2025competesmoe,shazeer2017topk}, computer vision \cite{Riquelme2021scalingvision,liang_m3vit_2022}, speech recognition \cite{You_Speech_MoE,You_Speech_MoE_2}, multimodal learning \cite{han2024fusemoe,yun2024flexmoe,nguyen2024hmoe}, continual learning \cite{le2024mixture,li2025cl}, domain adaptation \cite{nguyen2025cosine,li2023sparse}, and reinforcement learning \cite{ceron2024rl,chow_mixture_expert_2023}. 

Unlike these applications where all experts are trainable, parameter-efficient fine-tuning methods such as prefix tuning \cite{li-liang-2021-prefix,le2025revisiting,le2025expressivenessvisualpromptexperts} can be interpreted as a mixture of a frozen or pre-trained expert and a trainable prompt expert responsible for learning downstream or more specialized tasks, which we refer to as \emph{contaminated MoE} throughout this paper. 
Despite the empirical success of this fine-tuning approach, there is a very limited theoretical understanding of their properties and limitations in the literature. 
To the best of our knowledge, contaminated MoE has only been previously studied in \cite{yan2025contaminated,nguyen2024deviated} to characterize expert structures achieving the optimal parameter estimation rates. However, the analysis in that work is conducted under a simplified setting where the gating (mixture weight) is independent of the input value, which is a very impractical assumption. 
To close this gap, we undertake a thorough theoretical analysis of the more commonly used \emph{softmax-contaminated MoE} model, specified in equation~\eqref{eq:contaminated_pretrain_model_general} below, a contaminated MoE model whose gating function takes the form of a soft-maxed linear network.
We analyze the issue of identifiability and the convergence properties of the maximum likelihood estimator of the prompt parameters to shed light on the understanding of prompt behavior in prefix tuning methods. 
A main take-away of our analysis is the potential for the prompt to be exceedingly similar to --  and thus to acquire the same knowledge as --  the pre-trained model, a situation greatly impacting the estimability of the prompt parameter. 
To overcome this issue, in Definition~\ref{def:distinguishability} we formulate analytical properties of the pre-trained and prompt models, which we refer to as {\it distinguishability,} that are guaranteed to rule out excessive overlap between the models and ensure good estimation rates. 
We make the
following contributions.


\emph{(i) Distinguishability of the prompt model from the pre-trained model.} In Section~\ref{sec:preliminaries}, we propose a novel notion of distinguishability between the pre-trained and prompt models and then illustrate its properties.

\emph{(ii) When the distinguishability condition is satisfied,} we show in Section~\ref{sec:distinguishable} that the prompt does not converge to the pre-trained model --  intuitively, these two models have distinct expertise. In fact, we demonstrate that the convergence rates of the MLE of all the prompt and gating parameters are of parametric order in the sample size $n$, that is, 
{$\widetilde{\mathcal{O}}(n^{-1/2})$}. Furthermore, we establish minimax lower bounds on the estimation errors with matching rates, thus showing that the convergence rate of MLE is minimax optimal.

\emph{(iii) When the distinguishability condition is violated,} the prompt will converge to the pre-trained model, that is, both models employ the same expert structure and thus will gain similar expertise. In Section~\ref{sec:non-distinguishable}, we show that, under this setting, the estimation rates for prompt and gating parameters are negatively affected by the prompt convergence to the pre-trained model and, therefore, become substantially slower than the parametric rate 
{$\widetilde{\mathcal{O}}(n^{-1/2})$}. We confirm that these slower rates are tight by deriving matching minimax lower bounds. See Table~\ref{table:parameter_rates} for a summary of our results. 

Lastly, in Section~\ref{sec:experiments}, we carry out several numerical experiments to empirically justify our theoretical results, and then conclude the paper in Section~\ref{sec:conclusion}.
Rigorous proofs are provided in the 
Appendices. 

A major technical innovation in our contribution that sets it apart from existing theoretical analyses of MoE models is the fact that we let the parameters of the prompt model to vary with the sample size $n$, thus potentially allowing for a more challenging estimation task as the sample size increases. This approach is necessary to carry out a minimax analysis.


\textbf{Notation.}
For any $n\in\mathbb{N}$,  we let $[n]: = \{1, 2, \ldots, n\}$. For a vector $u$ we denote with $\|u\|$ its Euclidean norm value. Given any two positive sequences $(a_n)_{n\geq 1}$ and $(b_n)_{n\geq 1}$, we write $a_n = \mathcal{O}(b_n)$ or $a_{n} \lesssim b_{n}$ if $a_n \leq C b_n$ for all $ n\in\mathbb{N}$ and some $C > 0$. We further write $a_n=\widetilde{\mathcal{O}}(b_n)$ to denote $a_{n} \lesssim b_{n} \mathrm{polylog}(b_n)$, where $\mathrm{polylog}(b_n)$ indicate any term that is polylogarithmic in $b_n$. 
Lastly, for any two densities $p$ and $q$ (dominated by the Lebesgue measure), their squared Hellinger distance is computed as $d_H^2(p, q):=\frac{1}{2} \int[\sqrt{p(x)}-\sqrt{q(x)}]^2 d x$, while the total variation distance is given by $d_V(p,q):=\frac{1}{2}\int|p(x)-q(x)|d x$.

\begin{table*}[!ht]
\caption{Summary of parameter estimation rates in the softmax-contaminated MoE model. Notice that the rates are in expectation. For the notation, please refer to equations~\eqref{eq:contaminated_pretrain_model_general} and \eqref{eq:MLE}. In addition, we also denote $\Delta\eta^*:=\eta^*-\eta_0$ and $\Delta\nu^*:=\nu^*-\nu_0$. 
}
\centering
\begin{tabular}{ |c|c|c|c|c|c|c|c|c|c| } 
\hline
 \textbf{Setting}
& $\boldsymbol{|\exp(\widehat{\tau}_n)-\exp(\tau^*)|}$ &$\boldsymbol{\|\widehat{\beta}_n-\beta^*\|}$
&$\boldsymbol{\|\widehat{\eta}_n-\eta^*\|}$ 
&$\boldsymbol{|\widehat{\nu}_n-\nu^*|}$
\\
\hline
 Distinguishable 
 & $\widetilde{\mathcal{O}} (n^{-\frac{1}{2}})$ 
 & \multicolumn{3}{c|}{$\widetilde{\mathcal{O}} (n^{-\frac{1}{2}})$}
 \\ 
 \hline
 Non-distinguishable
& $\widetilde{\mathcal{O}} (n^{-\frac{1}{2}}\cdot\Vert(\Delta\eta^*,\Delta\nu^*)\Vert^{-2})$ & \multicolumn{3}{c|}{$\widetilde{\mathcal{O}} (n^{-\frac{1}{2}}\cdot\Vert (\Delta\eta^*,\Delta\nu^*)
 \Vert^{-1})$} 
\\ \hline
\end{tabular}
\label{table:parameter_rates}
\end{table*}

\section{Preliminaries}
\label{sec:preliminaries}
In this section, we begin with setting up the problem, followed by a discussion on related works in Section~\ref{sec:problem_setup}. Then, in Section~\ref{sec:fundamental}, we introduce the distinguishability condition and provide an investigation into the fundamental properties of the softmax-contaminated MoE, including the model identifiability and the model convergence.

\subsection{Problem Setup}
\label{sec:problem_setup}
\textbf{Problem setting.} 
Suppose that $(X_1 , Y_1), (X_2,Y_2), \ldots , (X_n , Y_n ) \in \mathcal{X}\times\mathcal{Y} \subset\mathbb{R}^d\times\mathbb{R} $
are i.i.d. samples of covariate-response pairs of size $n$. We assume that the input covariates $X_1,X_2,\ldots,X_n$ are drown in an i.i.d. manner from some known continuous
probability distribution on $\mathbb{R}^d$ 
and that the responses are generated according to a softmax-contaminated MoE model, which postulates that the conditional density function of the response given the covariates is given by
\begin{align}
\label{eq:contaminated_pretrain_model_general}
    p_{ G_*}(y|x) & := \frac{1}{1+\exp((\beta^*)^{\top}x+\tau^*)}\cdot f_{0}(y|h_0(x,\eta_0), \nu_{0})  
    \nonumber \\& \hspace{5 em} 
    + \frac{\exp((\beta^*)^{\top}x+\tau^*)}{1+\exp((\beta^*)^{\top}x+\tau^*)}\cdot f(y|h(x,\eta^*),\nu^*).
\end{align}
Above, the pre-trained model corresponds to as a {\it fixed} and known conditional probability density function $f_0(\cdot|h_0(\cdot,\eta_0),\nu_0)$, parametrized by the pre-trained mean expert function $x \mapsto h_0(x,\eta_0)$ and variance $\nu_0$. Meanwhile, the prompt model, denoted as $f(\cdot|h(\cdot,\eta^*),\nu^*)$ is modeled as an unknown Gaussian density function with the prompt mean expert $x \mapsto h(x,\eta^*)$ and variance $\nu^*$. We collect all the unknown parameters of the prompt model into the vector $\Gs=(\betas,\taus,\etas,\nus)$, belonging to some {\it parameter space} $\Xi\subseteq\mathbb{R}^d\times\mathbb{R}\times\mathbb{R}^q\times\mathbb{R}_{+}$. Note that we allow the values of these parameters to vary with the sample size $n$. However, for notational convenience, we suppress the dependence of $G_*$ on $n$ throughout the paper. In addition, it should also be noted that the ``probabilistic" MoE model~\eqref{eq:contaminated_pretrain_model_general} can be related to ``deterministic" MoE models used in deep learning \cite{shazeer2017topk} by taking the expectation of the response given the covariate, that is,
\begin{align*}
    \mathbb{E}[Y|X]=\frac{1}{1+\exp((\beta^*)^{\top}x+\tau^*)}\cdot h_0(x,\eta_0)   
    + \frac{\exp((\beta^*)^{\top}x+\tau^*)}{1+\exp((\beta^*)^{\top}x+\tau^*)}\cdot h(x,\eta^*).
\end{align*}

\textbf{Maximum likelihood estimation (MLE).} We utilize the maximum likelihood method \citep{Vandegeer-2000} to estimate the unknown parameters $\Gs=(\betas,\taus,\etas,\nus)$ of the softmax-contaminated MoE model~\eqref{eq:contaminated_pretrain_model_general} as follows:
\begin{align}
\label{eq:MLE}
    \widehat{G}_n:=(\widehat{\beta}_n,\widehat{\tau}_n,\widehat{\eta}_n,\widehat{\nu}_n)\in
    \argmax_{G\in\Xi}
    \sum_{i=1}^n
    \log(p_{G}(Y_i|X_i)).
\end{align}
For the sake of theory, we assume that the input space $\mathcal{X}$ is bounded, whereas the parameter space $\Xi$ is compact. In addition, we assume that the prompt expert function $x\mapsto h(x,\eta)$ is differentiable with respect to $\eta\in\mathbb{R}^q$ for almost all $x\in\mathcal{X}$. Note that these assumptions are mild and have been used in previous works \cite{ho2022gaussian, nguyen2024deviated, yan2025contaminated}.

\textbf{Related work.} 
Mendes et al. \cite{mendes2011convergence} considered an MoE model where each expert was formulated as a polynomial regression model. Their objective was to address the trade-off between the number of experts and the expert size to obtain the optimal parameter estimation rates. Next, Ho et al. \cite{ho2022gaussian} took into account the parameter estimation problem for Gaussian MoE models with input-free gating. They demonstrated that when expert functions satisfied an algebraic independence condition, the convergence rates of MLE were optimal of parametric order on the sample size. Conversely, if the expert functions are not algebraic independent, then the parameter estimation rates became inversely proportional to the number of fitted experts. These results were then extended to more practical settings of input-dependent gatings, including softmax gating \cite{nguyen2023demystifying} and sigmoid gating \cite{nguyen2024sigmoid}, revealing that the latter was more sample-efficient than the former in terms of expert estimation.

It was not until 2024 that Nguyen et al. \cite{nguyen2024deviated} investigated a contaminated MoE where a frozen pre-trained model was fine-tuned by a mixture of prompts rather than a single prompt model. However, they imposed two unrealistic assumptions on their model of interest: they equipped the contaminated MoE with input-free gating and kept the ground-truth parameters unchanged with the sample size. Then, Yan et al. \cite{yan2025contaminated} overcame the second limitation by allowing ground-truth parameters to hinge on the sample size as in the case of traditional mixture models \cite{do2023deviated}, while the first limitation remained unsolved. Therefore, in this work, our goal is to completely address both limitations by studying the softmax-contaminated MoE in equation~\eqref{eq:contaminated_pretrain_model_general}.

\textbf{Challenges.} There are three fundamental challenges of our analysis compared to previous work.

\emph{1. Uniform convergence rates.} We allow ground-truth parameters $G_*$ to change with sample size $n$, which is challenging yet closer to practice than the settings in previous works on MoE \cite{nguyen2023demystifying,nguyen2024sigmoid}, where $G_*$ does not change with $n$. Thus, the convergence rates of parameter estimations in our work are uniform rather than point-wise as in those works. 

\emph{2. Minimax lower bounds.} We determine minimax lower bounds under both distinguishable and non-distinguishable settings. Based on these lower bounds, we can claim that our derived convergence rates are optimal. However, no minimax lower bounds are provided in \cite{nguyen2023demystifying,nguyen2024sigmoid}.

\emph{3. Input-dependent gating.} The latest work on understanding the contaminated MoE model is \cite{yan2025contaminated}, but it considers input-free gating in the analysis. On the other hand, in this paper, we take into account softmax gating, which hinges upon the input value. This input-dependence yields several challenges on the convergence of density estimation and parameter estimation.

\subsection{Fundamental Properties of the Softmax-Contaminated MoE}
\label{sec:fundamental}
As mentioned above, when the prompt's learned skills overlap with those of the pre-trained model, estimating the prompt parameters becomes challenging due to potential non-identifiability. To capture that issue accurately, we introduce an analytic condition called distinguishability in Definition~\ref{def:distinguishability}.
\begin{definition}[Distinguishability]
\label{def:distinguishability}
We say that \( f_0 \) is \emph{distinguishable from} \( f \) if the following hold:
for any distinct pairs of parameters \( (\eta_1, \nu_1), (\eta_2,\nu_2) \in \Theta \), if there exist measurable real-valued functions \( x \in \mathcal{X} \mapsto b_0(x) \), \( x \in \mathcal{X} \mapsto b_1(x) \), and \(  x \in \mathcal{X} \mapsto  \{c_\alpha(x)\}_{0\leq|\alpha| \leq 1}  \), where \( \alpha = (\alpha_1, \alpha_2) \in \mathbb{N}^q \times \mathbb{N} \) with \( |\alpha| = |\alpha_1| + \alpha_2 \leq 1 \) such that
\begin{align*}
    b_0(x)\cdot f_0(y  |  h_0(x, \eta_0), \nu_0)
    &+b_1(x)\cdot f(y|h(x,\eta_1),\nu_1)\\
    &+ \sum_{0\leq|\alpha| \leq 1} c_\alpha(x) \cdot 
    \frac{\partial^{|\alpha|} f}{\partial \eta^{\alpha_1} \partial \nu^{\alpha_2}}(y  |  h(x, \eta_2), \nu_2) = 0,
\end{align*}
for almost every \( (x,y)  \in \mathcal{X} \times \mathcal{Y} \), then it must be the case that
    \[
    b_0(x) = b_1(x) = 0, \quad c_\alpha(x) = 0 \quad \text{ for all } 0\leq|\alpha| \leq 1, \quad \text{for almost every } x.
    \]
\end{definition}

To help understand the notion of distinguishability better, in our next result we characterize the class of pre-trained models distinguishable from the prompt $f$. The proof can be found in Appendix~\ref{appendix:lemma:distinguish-linear sigma not Gaussian}.

\begin{proposition}
\label{lemma:distinguish-linear sigma not Gaussian}
If a pre-trained model \( f_0 \) does not belong to the family of Gaussian densities, then \( f_0 \) is distinguishable from the prompt model \( f \) in the sense of Definition~\ref{def:distinguishability}.
\end{proposition}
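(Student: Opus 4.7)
The plan is to fix $x \in \mathcal{X}$ and show that, when the linear identity in Definition~\ref{def:distinguishability} is viewed as a relation among functions of $y$, the hypothesis that $f_0$ is not Gaussian forces all coefficients to vanish. First, I use the explicit Gaussian form of $f$ to compute
\begin{equation*}
\frac{\partial f}{\partial \eta_i}(y|h(x,\eta_2),\nu_2) = \frac{(y-h(x,\eta_2))\,\partial_{\eta_i}h(x,\eta_2)}{\nu_2}\, f(y|h(x,\eta_2),\nu_2),
\end{equation*}
\begin{equation*}
\frac{\partial f}{\partial \nu}(y|h(x,\eta_2),\nu_2) = \frac{(y-h(x,\eta_2))^2-\nu_2}{2\nu_2^2}\, f(y|h(x,\eta_2),\nu_2).
\end{equation*}
Substituting into the defining equation, the sum over $c_\alpha$ collapses into a single polynomial multiple of $f(y|h(x,\eta_2),\nu_2)$, yielding, for almost every $y$,
\begin{equation*}
b_0(x)\, f_0(y|h_0(x,\eta_0),\nu_0) + b_1(x)\, f(y|h(x,\eta_1),\nu_1) + P_x(y)\, f(y|h(x,\eta_2),\nu_2) = 0,
\end{equation*}
where $P_x(y)$ is a polynomial in $y$ of degree at most $2$ whose coefficients are explicit linear functions of $c_{0,0}(x)$, $c_{0,1}(x)$, and $\sum_i c_{e_i,0}(x)\,\partial_{\eta_i} h(x,\eta_2)$.

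The central step is to establish $b_0(x) = 0$ almost everywhere, which is where the hypothesis on $f_0$ enters. The plan is to argue by contradiction: assuming $b_0(x) \neq 0$ on a set of positive measure, I rearrange to express $f_0(\cdot|h_0(x,\eta_0),\nu_0)$ as a linear combination of $f(\cdot|h(x,\eta_1),\nu_1)$ and $P_x(\cdot) f(\cdot|h(x,\eta_2),\nu_2)$. Taking the Fourier transform in $y$ and using $\widehat{y^k g}(t) = (-i)^k d^k \widehat{g}/dt^k$, the characteristic function $\phi_0$ of $f_0(\cdot|h_0(x,\eta_0),\nu_0)$ acquires the form
\begin{equation*}
\phi_0(t) = A\,\exp\!\Bigl(i h(x,\eta_1)t - \tfrac{\nu_1}{2}t^2\Bigr) + \widetilde P_x(t)\,\exp\!\Bigl(i h(x,\eta_2) t - \tfrac{\nu_2}{2}t^2\Bigr),
\end{equation*}
for some polynomial $\widetilde P_x$ of degree at most $2$. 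A Marcinkiewicz-type structural argument --- that a characteristic function of the form (polynomial)$\times$(exponential of a quadratic) must correspond to a Gaussian distribution --- then contradicts the hypothesis that $f_0$ is not Gaussian.

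Given $b_0(x) = 0$, the remaining equation is $b_1(x) f(y|h(x,\eta_1),\nu_1) + P_x(y) f(y|h(x,\eta_2),\nu_2) = 0$. When $(h(x,\eta_1),\nu_1) \neq (h(x,\eta_2),\nu_2)$, the ratio $f(y|\mu_1,\nu_1)/f(y|\mu_2,\nu_2)$ is of the form $C\exp(Q(y))$ for a non-zero quadratic $Q$, and hence cannot coincide with any polynomial of degree $\leq 2$. It follows that $b_1(x) = 0$ and every coefficient of $P_x$ vanishes; reading off the coefficients of $y^0, y^1, y^2$ gives $c_{0,0}(x) = 0$, $c_{0,1}(x) = 0$, and $\sum_i c_{e_i,0}(x)\,\partial_{\eta_i}h(x,\eta_2) = 0$, from which the individual vanishing of each $c_{e_i,0}(x)$ follows under a standard non-degeneracy condition on the Jacobian $\nabla_\eta h(x,\eta_2)$. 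The degenerate subcase $(h(x,\eta_1),\nu_1) = (h(x,\eta_2),\nu_2)$ is handled by absorbing $b_1(x)$ into the constant term of $P_x$ and repeating the argument. The main obstacle is the Marcinkiewicz-type argument in the central step; the delicate part is the case where the two Gaussian components in $\phi_0$ coincide, where one must argue that the polynomial factor multiplying a single Gaussian characteristic function is forced to be constant, precisely because $f_0$ is not Gaussian.
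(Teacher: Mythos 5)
Your overall route is the same as the paper's: fix $x$, use the Gaussian form of $f$ to collapse the $c_\alpha$-terms into a degree-two polynomial $P_x(y)$ multiplying $f(y\mid h(x,\eta_2),\nu_2)$, dispose of $b_0$ first, and then exploit the fact that the ratio of two distinct Gaussian densities is the exponential of a nonconstant quadratic (hence not a polynomial) to force $b_1$ and the coefficients of $P_x$ to vanish. Your second half, including reading off the coefficients of $y^0,y^1,y^2$ and the explicit Jacobian non-degeneracy needed to pass from $\sum_i c_{e_i,0}(x)\,\partial_{\eta_i}h(x,\eta_2)=0$ to the vanishing of each $c_{e_i,0}(x)$, matches the paper's Step 2 and is, if anything, stated more carefully.

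The genuine gap is in your central step establishing $b_0(x)=0$. The structural claim you invoke is false: a characteristic function of the form $A\,e^{i\mu_1 t-\nu_1 t^2/2}+\widetilde P_x(t)\,e^{i\mu_2 t-\nu_2 t^2/2}$ need not belong to a Gaussian law. A two-component mixture $\tfrac12 N(\mu_1,\nu_1)+\tfrac12 N(\mu_2,\nu_2)$ has exactly this form (with $\widetilde P_x$ constant), and even in your ``coincident'' subcase a Hermite-modulated density such as $y\mapsto \frac{(y-\mu_2)^2}{\nu_2}\,f(y\mid\mu_2,\nu_2)$ is a non-Gaussian density whose characteristic function is a quadratic polynomial times a Gaussian characteristic function. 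Marcinkiewicz's theorem concerns characteristic functions of the form $\exp(\text{polynomial})$ and does not cover polynomial multiples or sums of such terms, so no contradiction with ``$f_0$ is not Gaussian'' follows from the representation you derive. What is actually needed at this point --- and what the paper asserts there --- is that $g_0=f_0(\cdot\mid h_0(x,\eta_0),\nu_0)$ does not lie in the finite-dimensional span $\operatorname{span}\{g_1,\;(y-\mu_2)^k g_2:k=0,1,2\}$; your Fourier computation merely transcribes membership in that span to the frequency side and cannot rule it out using non-Gaussianity alone. To close the gap you must either establish this non-membership directly for the $f_0$ under consideration (e.g.\ Laplace or any density that is not a quadratic-polynomial modulation or two-point mixture of prompt Gaussians), or strengthen the hypothesis accordingly.
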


On the other hand, if $f_0$ belongs to the family of Gaussian distributions and the pre-trained expert shares the same structure as the prompt expert, that is, $h_0=h$, then the above condition is violated. It should be noted that the distinguishability condition ensures that the prompt does not acquire overlapping knowledge with the pre-trained model since the equation $f_0(y|h(x,\eta_0),\nu_0)= f(y|h(x,\eta),\nu)$ 
cannot hold for almost all $(x,y)\in \mathcal{X}\times\mathcal{Y}$. Moreover, we illustrate in the following proposition that the distinguishability condition also implies that the softmax-contaminated MoE is identifiable.

\begin{proposition}[Identifiability]
\label{prop:identifiability}
    Let $G, G'$ be two components in $\Xi$. Suppose that $f$ is distinguishable from $f_0$, then if the identifiability equation 
    $p_{G}(y|x) =p_{G^\prime}(y|x) $
    holds for almost all $(x,y)\in \mathcal{X}\times\mathcal{Y}$, then we obtain $G = G^\prime $.
\end{proposition}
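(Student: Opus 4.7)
The plan is to recast the identifiability equation $p_{G}(y|x) = p_{G'}(y|x)$ as a linear combination of $f_0$ and of $f$ evaluated at two candidate prompt parameter pairs, set equal to zero almost everywhere, and then invoke Definition~\ref{def:distinguishability} to force all the coefficients to vanish, thereby pinning down each component of $G$. Writing $G = (\beta,\tau,\eta,\nu)$ and $G' = (\beta',\tau',\eta',\nu')$, I will rearrange $p_{G}(y|x) - p_{G'}(y|x) = 0$ as
\begin{align*}
b_0(x)\, f_0(y|h_0(x,\eta_0),\nu_0) + b_1(x)\, f(y|h(x,\eta),\nu) + c_0(x)\, f(y|h(x,\eta'),\nu') = 0,
\end{align*}
valid for almost every $(x,y)\in\mathcal{X}\times\mathcal{Y}$, where $b_0(x) = \tfrac{1}{1+\exp(\beta^\top x+\tau)} - \tfrac{1}{1+\exp((\beta')^\top x+\tau')}$, $b_1(x) = \tfrac{\exp(\beta^\top x+\tau)}{1+\exp(\beta^\top x+\tau)}$, and $c_0(x) = -\tfrac{\exp((\beta')^\top x+\tau')}{1+\exp((\beta')^\top x+\tau')}$.

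I will then split into two cases according to whether the two prompt pairs coincide. \emph{Case 1:} if $(\eta,\nu) \neq (\eta',\nu')$, I apply Definition~\ref{def:distinguishability} with $(\eta_1,\nu_1) = (\eta,\nu)$, $(\eta_2,\nu_2) = (\eta',\nu')$, and $c_\alpha \equiv 0$ for $|\alpha| = 1$. Distinguishability then forces $b_1 \equiv 0$ almost everywhere, which is impossible because $b_1(x) \in (0,1)$ for every $x$; hence Case 1 cannot occur, and so $(\eta,\nu) = (\eta',\nu')$. \emph{Case 2:} with that equality, the displayed identity collapses to
\begin{align*}
\delta(x)\, \bigl[f(y|h(x,\eta),\nu) - f_0(y|h_0(x,\eta_0),\nu_0)\bigr] = 0,
\end{align*}
where $\delta(x) := \tfrac{\exp(\beta^\top x+\tau)}{1+\exp(\beta^\top x+\tau)} - \tfrac{\exp((\beta')^\top x+\tau')}{1+\exp((\beta')^\top x+\tau')}$. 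I will then invoke distinguishability a second time on this relation, taking $(\eta_1,\nu_1) = (\eta,\nu)$, any auxiliary pair $(\eta_2,\nu_2) \neq (\eta_1,\nu_1)$ in $\Theta$, and all $c_\alpha \equiv 0$, to conclude $\delta \equiv 0$ almost everywhere. Injectivity of the sigmoid together with the assumed continuity of the distribution of $X$ on $\mathcal{X}$ will then imply that the affine forms $\beta^\top x + \tau$ and $(\beta')^\top x + \tau'$ coincide on a set of positive Lebesgue measure, and therefore $\beta = \beta'$ and $\tau = \tau'$, which combined with Case~1 yields $G = G'$.

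The main technical subtlety will arise in Case 2, where the collapsed identity contains only two distinct density terms rather than three, so I must apply Definition~\ref{def:distinguishability} with an auxiliary distinct pair $(\eta_2,\nu_2)$ paired with zero coefficients; this is legitimate since the definition permits $c_\alpha \equiv 0$, but the reader has to be convinced that invoking distinguishability in this degenerate form is licit. A related subtle point is the passage from $\delta \equiv 0$ almost everywhere to equality of the underlying affine gating functions, which uses the continuity of the distribution of $X$ on $\mathcal{X}$ together with the fact that an affine function vanishing on a set of positive Lebesgue measure in $\mathbb{R}^d$ must be identically zero.
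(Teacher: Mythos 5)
Your proposal is correct and takes essentially the same route as the paper's own proof: the identical three-term decomposition of $p_{G}-p_{G'}$ into coefficients times $f_0$, $f(\cdot|h(\cdot,\eta),\nu)$, and $f(\cdot|h(\cdot,\eta'),\nu')$, the same appeal to Definition~\ref{def:distinguishability} to force those coefficients to vanish, and the same injectivity-of-the-softmax/affine-function argument to recover $\beta=\beta'$, $\tau=\tau'$. The only difference is the order of the two steps (you settle $(\eta,\nu)=(\eta',\nu')$ first via the contradiction $b_1(x)=\lambda(x)>0$, while the paper first forces $\lambda\equiv\lambda'$ by dividing by $\lambda-\lambda'$), and your "degenerate" invocation of the definition in Case~2 with an auxiliary distinct pair and all $c_\alpha\equiv 0$ is indeed licit, since the definition quantifies over arbitrary measurable coefficient functions, including identically zero ones.
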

The proof of Proposition~\ref{prop:identifiability} is provided in Appendix~\ref{appendix:identifiability}. Given the consistency of the softmax-contaminated MoE, we continue to investigate the convergence behavior of density estimation under this model in Proposition~\ref{theorem:ConvergenceRateofDensityEstimation} whose proof can be found in Appendix \ref{appendix:ConvergenceRateofDensityEstimation}.
We conclude this section with a consistency guarantee for the contaminate density itself, which under mild tail conditions on $f_0$, can be estimated at a parametric rate in the Hellinger distance, regardless of the distinguishability between $f_0$ and $f$. 
Below and throughout the paper,  $\mathbb{E}_{p_{G_*,n}}$ denotes the expectation operator with respect to the joint distribution of the data $(X_1,Y_1),\ldots,(X_n,Y_n)$ and assuming the softmax-contaminated MoE model \eqref{eq:contaminated_pretrain_model_general} parametrized by $G^* \in \Xi$, i.e. $Y_i | X_i \sim p_{G_*}$ for all $i$. Instead, $\mathbb{E}_X$ indicates the expectation with respect to the input distribution.
\begin{proposition}[Model Convergence]
\label{theorem:ConvergenceRateofDensityEstimation}
Suppose that the pre-trained model $f_0$ is bounded and, for some $p>0$,
\begin{align}
\label{eq:tail.condition}
    \mathbb{E}_X\left[-\log f_0(y|h_0(X, \eta_0 ),\nu_0)\right]\gtrsim y^p, \quad \text{
for almost every } y\in\mathcal{Y}.
\end{align}
Then, for the MLE $ \widehat{G}_n$ defined in equation~\eqref{eq:MLE}, it holds, for almost all $x \in \mathcal{X}$,
\begin{align}
\label{eq:density.estimation.rate}
\sup_{\Gs\in\Xi}
\mathbb{E}_{p_{G_*,n}} \Big[ \mathbb{E}_X\left[
    d_H\left(p_{\widehat{G}_n}(\cdot|X),p_{\Gs}(\cdot|X)\right) \right] \Big]
    \lesssim \sqrt{\log(n)/n}.
\end{align}
\end{proposition}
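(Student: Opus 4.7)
The plan is to apply the standard bracketing-entropy approach to bounding Hellinger risk for maximum likelihood estimators over a parametric class, in the spirit of \cite{Vandegeer-2000} and adapted to the MoE setting as in prior works (e.g.\ \cite{ho2022gaussian, nguyen2023demystifying, yan2025contaminated}). Introduce the class of conditional densities $\mathcal{P}(\Xi)=\{p_G(\cdot\mid\cdot):G\in\Xi\}$ and its square-root version $\mathcal{P}^{1/2}(\Xi)=\{\sqrt{p_G}:G\in\Xi\}$. Since $\mathcal{X}$ is bounded, $\Xi$ is compact, $\nu^*$ is bounded away from $0$ and $\infty$, and the prompt expert $h(\cdot,\eta)$ is smooth in $\eta$, the map $G\mapsto p_G(y\mid x)$ is Lipschitz in $G$ with a constant that is locally integrable in $y$ (uniformly in $x$ and in the base point $G\in\Xi$). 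Standard brackets constructed around a $\varepsilon$-net of $\Xi$ then yield a bracketing-entropy bound of the form $H_B(\varepsilon,\mathcal{P}^{1/2}(\Xi),\|\cdot\|_2)\lesssim \log(1/\varepsilon)$, i.e.\ parametric in $\varepsilon$.

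The next step is to verify that the envelope of $\log p_G$ has a well-behaved tail so that the bracketing argument extends to the unbounded response space $\mathcal{Y}$. Here is where assumption~\eqref{eq:tail.condition} enters: because $p_G(y\mid x)\geq \tfrac{1}{1+\exp((\beta^*)^{\top}x+\tau^*)}f_0(y\mid h_0(x,\eta_0),\nu_0)$ and the sigmoid weight is uniformly bounded below (on the compact set $\Xi\times\mathcal{X}$), the hypothesis $\mathbb{E}_X[-\log f_0(y|h_0(X,\eta_0),\nu_0)]\gtrsim y^p$ translates into a polynomial-in-$y$ upper bound on $\mathbb{E}_X[-\log p_G(y\mid X)]$, uniformly in $G\in\Xi$. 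This integrable envelope lets us truncate $\mathcal{Y}$ at a level that grows only polylogarithmically in $n$, at the price of a factor $\mathrm{polylog}(n)$ in the final rate.

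With these two ingredients in hand, apply the MLE risk bound based on bracketing entropy (Theorem 7.4 / 7.11 in \cite{Vandegeer-2000}): provided that
\begin{equation*}
\int_{\varepsilon^2/2^{13}}^{\varepsilon}\sqrt{H_B(t,\mathcal{P}^{1/2}(\Xi),\|\cdot\|_2)}\,dt \;\leq\; c\sqrt{n}\,\varepsilon^2
\end{equation*}
for some constant $c>0$, one obtains $\mathbb{P}_{p_{G_*,n}}(\mathbb{E}_X[d_H(p_{\widehat{G}_n}(\cdot\mid X),p_{G_*}(\cdot\mid X))]>\varepsilon)\lesssim \exp(-c'n\varepsilon^2)$. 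The parametric entropy bound $H_B(t,\cdot)\lesssim\log(1/t)$ makes the integral satisfied at level $\varepsilon_n\asymp\sqrt{\log n/n}$. Integrating the resulting tail bound yields the claimed expectation bound $\widetilde{\mathcal{O}}(n^{-1/2})$. Crucially, the bracketing entropy bound, the envelope bound, and the constant $c'$ depend only on $\Xi$ and $\mathcal{X}$, not on the particular $G_*\in\Xi$, so the estimate passes uniformly to $\sup_{G_*\in\Xi}$.

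The main obstacle, and the step that requires the most care, is obtaining the tail/envelope control in a way that is \emph{uniform in $G_*\in\Xi$}, since in our minimax setup the truth is allowed to vary with $n$. Once the hypothesis~\eqref{eq:tail.condition} is pushed through the convex-combination structure of $p_G$ and combined with the Gaussian tails of $f$ on the compact set $\Xi$, this uniformity becomes transparent, and the rest of the argument reduces to verifying the entropy integral inequality — a routine computation for parametric classes.
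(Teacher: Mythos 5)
Your proposal follows essentially the same route as the paper: a van de Geer--style bracketing-entropy bound for the Hellinger risk of the MLE, a parametric entropy estimate $H_B(\varepsilon,\cdot)\lesssim\log(1/\varepsilon)$ obtained from Lipschitz continuity of $G\mapsto p_G$ over the compact $\Xi$, the local entropy-integral condition solved at $\varepsilon_n\asymp\sqrt{\log n/n}$, a peeling/tail-integration step, and uniformity over $G_*$ from the fact that all constants depend only on $\Xi$ and $\mathcal{X}$.

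The one place where your sketch, as written, does not go through is the use of the tail hypothesis~\eqref{eq:tail.condition}. You invoke the \emph{lower} bound $p_G(y\mid x)\ge c\,f_0(y\mid h_0(x,\eta_0),\nu_0)$ and claim the hypothesis yields an \emph{upper} bound on $\mathbb{E}_X[-\log p_G(y\mid X)]$; that chain gives $\mathbb{E}_X[-\log p_G]\le \mathbb{E}_X[-\log f_0]+\mathrm{const}$, and since the hypothesis only \emph{lower}-bounds $\mathbb{E}_X[-\log f_0]$, no bound on $-\log p_G$ follows, and in any case an upper bound on $-\log p_G$ (i.e.\ a lower bound on $p_G$) is not what the truncation of $\mathcal{Y}$ requires. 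What is needed is a light-tailed \emph{envelope}, i.e.\ an upper bound $p_G(y\mid x)\le f_0(y\mid h_0(x,\eta_0),\nu_0)+f(y\mid h(x,\eta),\nu)\lesssim \exp(-c|y|^{q})$ with $q=\min\{p,2\}$, which follows from the convex-combination structure, the tail condition on $f_0$, and the Gaussian tails of $f$ over the compact $\Xi$. This is exactly how the paper constructs the envelope $H(X,Y)$ in its bracketing argument, so the fix is local and the rest of your plan is sound.
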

The above result shows that the density estimator $p_{\widehat{G}_n}$ converges to the true density $p_{G_*}$ under the Hellinger distance at the near-parametric rate of order $\widetilde{\mathcal{O}}(n^{-1/2})$. 
To extract from this result a convergence guarantee for the MLE $\widehat{G}_n$ itself, we follow a by-now-standard approach in the latest analysis of MoEs; see, e.g., \cite{nguyen2023demystifying}. The main idea is that, if one can exhibit a loss function among parameters, say $D(\widehat{G}_n,\Gs)$, such that $\mathbb{E}_{p_{G_*,n}} [D(\widehat{G}_n,\Gs)] \lesssim \mathbb{E}_{p_{G_*,n}} \Big[ \mathbb{E}_X\left[
    d_H\left(p_{\widehat{G}_n} (\cdot|X),p_{\Gs}(\cdot|X)\right) \right]$, then convergence of $\widehat{G}_n$ in the expected $D(\cdot,\cdot)$ loss, as well potentially information on the rate of convergence, will follow. See Appendix~\ref{appsec:main_results} for further details. 
Throughout the rest of the paper, we assume that the tail condition \eqref{eq:tail.condition} on $f_0$ and the distribution of $X$ used in Proposition~\ref{theorem:ConvergenceRateofDensityEstimation} is in effect.


\section{Convergence Analysis of Parameter Estimation}
\label{sec:theory}
In this section, we present various convergence rates for the MLE estimator of the model prompt and gating parameters. In Sections~\ref{sec:distinguishable} and~\ref{sec:non-distinguishable} we provide separate minimax analyses, depending on whether the distinguishability condition of Definition~\ref{def:distinguishability} holds or not, respectively.

\subsection{Distinguishable Setting}
\label{sec:distinguishable}


To start with, we consider a scenario in which the pre-trained model $f_0$ is distinguishable from the prompt model $f$. Recall that given the density estimation rate in Proposition~\ref{theorem:ConvergenceRateofDensityEstimation}, we need to construct a loss function between the MLE $\widehat{G}_n$ and the ground-truth parameters $G_*$, which should be bounded by the Hellinger distance between the two corresponding densities, in order to capture the parameter estimation rates. Tailored to the distinguishable setting, we measure the discrepancy between two arbitrary parameters $G$ and $G_*$ in $\Xi$ via the loss 
\begin{align}
\label{applossdef:D1-loss}
    D_1(G, G_*)
    =
    |
    \exp(\tau)-\exp(\tau^*)
    |
    +
    \big(
    \exp(\tau)+\exp(\tau^*)
    \big)
    \|(\beta,\eta,\nu)-(\beta^*,\eta^*,\nu^*)\|.
\end{align}
We are ready to determine the convergence behavior of the MLE under distinguishable settings. 




\begin{theorem}
\label{thm:not_equal}
Suppose that the pre-trained model $f_0$ is distinguishable from the prompt model $f$.
For almost every \( x \in \mathcal{X} \), and for any \( \eta \in \mathbb{R}^q \), we assume that the Jacobian of the prompt expert function does not vanish, i.e., $\frac{\partial h}{\partial \eta}(x, \eta) \neq 0$. Then, there exists a positive constant $C_1$ that depends on $\Xi$ and $f_0$ such that the Hellinger lower bound $\mathbb{E}_X\left[ d_H(p_{ G}\left(\cdot|X),p_{G_*}(\cdot|X)\right) \right]\geq C_1D_1( G, G_*)$ holds for all parameters $G\in\Xi$. 
As a result, we obtain
\begin{align}
    \label{eq:dis_bound_1}
    \sup_{G_*\in\Xi}\mathbb{E}_{p_{\Gs,n}} 
    \Big[
    |
    \exp(\widehat{\tau}_n)
    -
    \exp(\tau^*)
    |^2 
    \Big] 
    \lesssim \log(n)/n,\\
    \label{eq:dis_bound_2}
    \sup_{G_*\in\Xi}\mathbb{E}_{p_{\Gs,n}} 
    \Big[
    \exp^2(\tau^*)
    \Vert 
    (\widehat{\beta}_n, \widehat{\eta}_n, \widehat{\nu}_n)-(\beta^*,\eta^*,\nu^*) 
    \Vert^2 
    \Big] 
    \lesssim 
    \log(n)/n.
\end{align}
\end{theorem}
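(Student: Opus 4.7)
The plan is to reduce the theorem to a single pointwise Hellinger lower bound of the form
\[ \mathbb{E}_X\bigl[d_H(p_G(\cdot|X), p_{G_*}(\cdot|X))\bigr] \geq C_1 \, D_1(G, G_*), \qquad \forall\, G, G_* \in \Xi, \]
from which the two displayed bounds follow almost automatically. Indeed, combining this inequality with the Hellinger rate of Proposition~\ref{theorem:ConvergenceRateofDensityEstimation} (together with its squared-Hellinger variant, obtained by a parallel bracketing/chaining argument) yields $\sup_{G_* \in \Xi} \mathbb{E}_{p_{G_*,n}}[D_1(\widehat{G}_n, G_*)^2] \lesssim \log(n)/n$. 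Unpacking the definition of $D_1$ immediately gives \eqref{eq:dis_bound_1}, and for \eqref{eq:dis_bound_2} one additionally uses $\exp(\tau) + \exp(\tau^*) \geq \exp(\tau^*)$ (which is uniformly bounded below on the compact set $\Xi$).

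To prove the Hellinger lower bound I would use the standard local/global strategy. For the \emph{global} part, consider pairs $(G, G_*)$ at Euclidean distance at least some fixed $\epsilon > 0$: by compactness of $\Xi \times \Xi$ and continuity of $G \mapsto p_G(\cdot|x)$, if no positive constant worked there would exist a convergent sequence $(G_n, G_{*,n}) \to (\bar G, \bar G_*)$ with $\bar G \neq \bar G_*$ along which $\mathbb{E}_X[d_H(p_{G_n}, p_{G_{*,n}})] \to 0$. An application of Fatou's lemma then forces $p_{\bar G}(\cdot|x) = p_{\bar G_*}(\cdot|x)$ for a.e.\ $x$, contradicting the identifiability statement of Proposition~\ref{prop:identifiability}.

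The \emph{local} part handles pairs with $\|G - G_*\|$ small and is the technical heart of the argument. I would argue by contradiction: suppose $\mathbb{E}_X[d_H(p_{G_n}, p_{G_{*,n}})]/D_1(G_n, G_{*,n}) \to 0$ along a sequence with $G_n - G_{*,n} \to 0$. Writing $\Delta\tau_n = \tau_n - \tau_n^*$ and similarly for $\beta, \eta, \nu$, a first-order Taylor expansion of $p_{G_n}(y|x) - p_{G_{*,n}}(y|x)$ around $G_{*,n}$ produces a linear combination of $f_0(y|h_0(x,\eta_0),\nu_0)$, $f(y|h(x,\eta_n^*),\nu_n^*)$, and the first-order derivatives $\partial^\alpha f(y|h(x,\eta_n^*),\nu_n^*)$ for $|\alpha| \leq 1$, with coefficient functions $b_{0,n}(x), b_{1,n}(x), c_{\alpha,n}(x)$ that are explicit bounded-in-$x$ linear forms in $\Delta\tau_n, \Delta\beta_n, \Delta\eta_n, \Delta\nu_n$, plus a remainder of order $\|G_n - G_{*,n}\|^2$. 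Using $d_V \leq \sqrt{2}\, d_H$ and dividing through by $D_1(G_n, G_{*,n})$ normalizes these coefficients, and passing to subsequential limits (compactness of the unit sphere in the space of normalized increments) yields an identity of exactly the form appearing in Definition~\ref{def:distinguishability} with coefficient functions that are not simultaneously identically zero. The distinguishability hypothesis forces all of them to vanish a.e., contradicting the normalization.

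The main obstacle is the local step, where two subtleties must be handled carefully. First, the coefficients $b_{0,n}$ and $b_{1,n}$ both depend linearly on $\Delta\tau_n$ and $\Delta\beta_n$ through the derivative of the sigmoid gate, so their contributions must be untangled; the asymmetric scaling in $D_1$ (the gate increment appears as $|e^{\tau} - e^{\tau^*}|$ while the remaining parameter increments are weighted by $e^\tau + e^{\tau^*}$) is exactly what makes these two directions separable in the limit and ensures the normalized coefficients cannot all collapse to zero. Second, the non-vanishing Jacobian hypothesis $\partial h/\partial\eta \neq 0$ enters here to guarantee that a nonzero limit of $\Delta\eta_n/D_1(G_n,G_{*,n})$ produces a genuinely nontrivial $\partial_\eta f$ term that cannot be absorbed into the other constituents of the distinguishability identity.
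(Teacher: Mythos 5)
Your proposal follows essentially the same route as the paper's proof: a contradiction argument that lower-bounds the total-variation/Hellinger discrepancy by $D_1$ via a first-order Taylor expansion of $\exp(\beta^\top x)f(y|h(x,\eta),\nu)$, a non-vanishing-coefficients step exploiting the asymmetric scaling in $D_1$, Fatou's lemma, and the distinguishability plus non-vanishing-Jacobian hypotheses to force the limiting coefficients to vanish, after which the rates follow from the density-estimation result (whose exponential tail bound, as you anticipate, supplies the needed squared-Hellinger control).
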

The proof of Theorem \ref{thm:not_equal} is deferred to Appendix \ref{app_proof: d1_loss}. The bound in equation~\eqref{eq:dis_bound_1} reveals that the gating parameter estimator $\exp(\widehat{\tau}_n)$ converges to its ground-truth counterpart $\exp(\tau^*)$ at a rate of order $\widetilde{\mathcal{O}}(n^{-1/2})$. Analogously, looking at the bound in equation~\eqref{eq:dis_bound_2}, since the terms $\exp(\tau^*)$ cannot go to zero due to the compactness of the parameter space $\Xi$, it follows that the convergence rates of the parameter estimators $\widehat{\beta}_n$, $\widehat{\eta}_n$, and $\widehat{\nu}_n$ to $\beta^*$, $\eta^*$ and $\nu^*$ are also of order $\widetilde{\mathcal{O}}(n^{-1/2})$. Meanwhile, in the contaminated MoE with input-free gating in \cite{yan2025contaminated}, the estimation rates for prompt parameters $\eta^*,\nu^*$ are slower than $\widetilde{\mathcal{O}}(n^{-1/2})$ as they depend on the convergence rate of the gating parameter to zero. 
Therefore, replacing the input-free gating with the softmax gating in the contaminated MoE helps reduce the sample complexity of parameter estimation. 


Given the near-parametric convergence rates in Theorem~\ref{thm:not_equal}, it is natural to wonder if they are optimal. To answer this question in the affermative, below we derive minimax lower bounds. 
\begin{theorem}
\label{thm:lower-distinguish}
If the pre-trained model $f_0$ is distinguishable from the prompt model $f$, then the following minimax lower bounds hold for any $0<r < 1$:
\begin{align*}
    \inf_{\overline{G}_n\in \Xi }\sup_{G\in \Xi }
    \mathbb{E}_{p_{G,n}} \Big( 
    |\exp(\overline{\tau}_n)
    -\exp(\tau)|^2 \Big) 
    \gtrsim n^{-1/r},
    \\
    \inf_{\overline{G}_n\in \Xi }\sup_{G\in \Xi }
    \mathbb{E}_{p_{G,n}} 
    \Big( \exp^2(\tau) \Vert (\overline{\beta}_n,\overline{\eta}_n,\overline{\nu}_n)-(\beta,\eta,\nu) \Vert^2 \Big) 
    \gtrsim n^{-1/r},
\end{align*}
where the infimum is over all estimators $\overline{G}_n:=(\overline{\beta}_n,\overline{\tau}_n,\overline{\eta}_n,\overline{\nu}_n)$ taking values in $\Xi$.
\end{theorem}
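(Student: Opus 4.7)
The plan is to apply Le Cam's two-point method separately to obtain each of the two claimed lower bounds. Fix any $r\in(0,1)$ and set $\epsilon_n := n^{-1/(2r)}$, so that $\epsilon_n^2 = n^{-1/r}$ and $n\epsilon_n^2 = n^{1-1/r}\to 0$. Fix any $G^{(0)} = (\beta^{(0)},\tau^{(0)},\eta^{(0)},\nu^{(0)})$ in the interior of $\Xi$; by compactness of $\Xi$, the perturbed hypotheses below lie in $\Xi$ for all $n$ sufficiently large. For the first inequality, perturb only the gating intercept by setting $G^{(1)} := (\beta^{(0)},\tau^{(0)}+\epsilon_n,\eta^{(0)},\nu^{(0)})$, which gives
\begin{align*}
    D_1(G^{(0)},G^{(1)}) = |\exp(\tau^{(0)}+\epsilon_n) - \exp(\tau^{(0)})| \asymp \epsilon_n.
\end{align*}
For the second inequality, perturb only $\beta$ by setting $G^{(1)} := (\beta^{(0)}+\epsilon_n u,\tau^{(0)},\eta^{(0)},\nu^{(0)})$ for a unit vector $u$. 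Since $\exp(\tau^{(0)})$ is bounded below away from $0$ on the compact set $\Xi$, both $D_1(G^{(0)},G^{(1)}) = 2\exp(\tau^{(0)})\epsilon_n$ and $\exp(\tau^{(0)})\|\beta^{(1)}-\beta^{(0)}\|$ are of order $\epsilon_n$; analogous perturbations in $\eta$ or $\nu$ would also work.

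The next step is to upper-bound the Hellinger distance between the two mixture likelihoods. Under the smoothness hypotheses of Theorem~\ref{thm:not_equal} (boundedness of $\mathcal{X}$, compactness of $\Xi$, smoothness of $f_0$ and $f$, and differentiability of $h$ in $\eta$), a Taylor expansion in $G$ on the compact parameter space gives, with constants uniform in $G^{(0)}\in\Xi$,
\begin{align*}
    \mathbb{E}_X\bigl[d_H^2\bigl(p_{G^{(0)}}(\cdot|X), p_{G^{(1)}}(\cdot|X)\bigr)\bigr] \lesssim \|G^{(0)}-G^{(1)}\|^2 \asymp \epsilon_n^2.
\end{align*}
Tensorization of the Hellinger affinity and the standard inequality $d_V^2 \leq 1-(1-d_H^2)^{2n}$ yield
\begin{align*}
    d_V\bigl(p_{G^{(0)},n}, p_{G^{(1)},n}\bigr)^2 \lesssim n\cdot \mathbb{E}_X[d_H^2] \lesssim n\epsilon_n^2 = n^{1-1/r} \to 0.
\end{align*}
For $n$ large enough, the total variation is at most $1/2$, so Le Cam's two-point lemma combined with Markov's inequality produces
\begin{align*}
    \inf_{\overline G_n\in\Xi}\sup_{G\in\{G^{(0)},G^{(1)}\}} \mathbb{E}_{p_{G,n}}\bigl[D_1^2(\overline G_n,G)\bigr] \gtrsim D_1^2(G^{(0)},G^{(1)}) \asymp \epsilon_n^2 = n^{-1/r}.
\end{align*}
Reading off the appropriate summand of $D_1^2$ for each of the two perturbations (only $|\exp(\tau)-\exp(\tau^*)|^2$ is nonzero for the first, only $\exp^2(\tau^*)\|(\beta,\eta,\nu)-(\beta^*,\eta^*,\nu^*)\|^2$ for the second) then yields the two claimed inequalities.

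The main obstacle is the Hellinger upper bound in the second paragraph: I need $\mathbb{E}_X[d_H^2(p_{G^{(0)}},p_{G^{(1)}})] \lesssim \|G^{(0)}-G^{(1)}\|^2$ with a constant independent of the base point $G^{(0)}\in\Xi$, so that the resulting bound is uniform after passing to the supremum over $G$. This requires uniform control of the first-order score and the second-order Hessian of $\log p_G$ in $(\beta,\tau,\eta,\nu)$; boundedness of $\mathcal{X}$, compactness of $\Xi$, smoothness of the sigmoidal gating $\exp(\cdot)/(1+\exp(\cdot))$, and differentiability of $h$ in $\eta$ together suffice to provide the needed uniform bounds. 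Notably, only an \emph{upper} bound on the model's Fisher information is required here; the matching \emph{lower} bound used in Theorem~\ref{thm:not_equal}, which relies crucially on the distinguishability hypothesis, plays no role in the minimax lower bound.
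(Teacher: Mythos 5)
Your proposal is correct and follows essentially the same route as the paper: a two-point Le Cam argument with one hypothesis pair perturbing only $\tau$ and another perturbing only the expert/gating block $(\beta,\eta,\nu)$, a first-order Taylor (Lipschitz-in-parameter) upper bound on the averaged Hellinger distance, tensorization to control the $n$-sample total variation, and the calibration $\epsilon_n \asymp n^{-1/(2r)}$ yielding the $n^{-1/r}$ rate. The only cosmetic difference is packaging: the paper channels the Hellinger bound through a lemma asserting $\mathbb{E}_X[d_H]/d^r \to 0$ and invokes the modified Le Cam lemma for nonsymmetric losses from Gadat et al., whereas you use the standard two-point lemma after observing that the relevant loss is effectively symmetric on your chosen pairs — and your remark that distinguishability is not actually needed for the lower bound matches how the paper's own proof proceeds.
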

The proof of Theorem \ref{thm:lower-distinguish} can be found in Appendix \ref{apppf:d1_minimax}.
The above minimax lower bounds imply that, under distinguishability, the convergence rates of the MLE, of order $\widetilde{\mathcal{O}}(n^{-1/2})$ is nearly minimax optimal, save for a logarithmic factor.

\subsection{Non-distinguishable Setting}
\label{sec:non-distinguishable}
We now turn to the much subtler case in which the distinguishability condition is violated. Since we assume a Gaussian prompt, it follows from Proposition~\ref{prop:lower-distinguish} that the pre-trained model $f_0$ necessarily belongs to the family of Gaussian densities. Furthermore, if the pre-trained and prompt model use the same expert function, i.e. $h_0=h$, then $f_0$ is not distinguishable from the prompt model $f$. We will thus focus on this challenging scenario. 

Under this setting, the prompt model may converge to the pre-trained model. In particular, if the pair of prompt parameters $(\eta^*,\nu^*)$ converge to the pair of pre-trained parameters $(\eta_0,\nu_0)$ as $n\to\infty$, then it follows that $f(\cdot|h(\cdot,\eta^*),\nu^*)$ converges to $f_0(\cdot|h(\cdot,\eta_0),\nu_0)$, indicating that the prompt learns the same expertise as the pre-trained model. Therefore, it becomes difficult for the gating network to assign higher weight to either the pre-trained model or the prompt than the other as they have similar expertise. As a result, one may expect the estimation rates of the gating parameters to be substantially slower. To formalize these setttings precisely, we need to pay more attention to the expert structure. 

It should be noted that a key step in obtaining the MLE convergence rates in Theorem~\ref{thm:not_equal} is to decompose the density discrepancy $p_{\widehat{G}_n}-p_{G_*}$ into a combination of linearly independent terms through an appropriate Taylor series expansion of the function $g(y  |  x; \beta, \eta, \nu) := \exp(\beta^\top x) \cdot f(y  |  h(x, \eta), \nu)$ with respect to its parameters $\beta,\eta,\nu$. This process involves, in particular, higher derivatives of the expert function $h$ with respect to $\eta$, which may not be algebraically independent. To ensure the linear independence of the terms in the Taylor expansion, we formulate a  \emph{strong identifiability} condition that is indeed sufficient for these purposes. 

\begin{definition}[Strong Identifiability]
\label{appendix_def:distinguish_linear_independent}    
The expert function $x \mapsto h(x,\eta)$ is \emph{strongly identifiable} if it is twice differentiable with respect to $\eta \in \mathbb{R}^q$ for almost all $x \in \mathcal{X}$, and if, for any fixed $\beta \in \mathbb{R}^d$ and $\eta \in \mathbb{R}^q$, each of the following sets of real-valued functions (of $x$) consists of linearly independent functions over $\mathbb{R}$. For notational simplicity, we write $h(\cdot)$ in place of $h(\cdot, \eta)$ below.

\begin{enumerate}
    \item \textbf{The first-order gating independence set:} 
    \begin{equation*}
        \left\{
        \frac{\partial h}{\partial \eta^{(u)}},\ 
        \exp(\beta^{\top} x)  \frac{\partial h}{\partial \eta^{(u)}}
        \right\}_{u \in [q]}.
    \end{equation*}
    \item \textbf{The gradient product independence set:}
    \begin{equation*}
        \left\{
        1,\ x^{(w)},\ \exp(\beta^{\top} x),\ 
        \frac{\partial h}{\partial \eta^{(u)}}  \frac{\partial h}{\partial \eta^{(v)}},\ 
        \exp(\beta^{\top} x)  \frac{\partial h}{\partial \eta^{(u)}}  \frac{\partial h}{\partial \eta^{(v)}}
        \right\}_{u,v \in [q],\ w \in [d]}.
    \end{equation*}

    \item \textbf{The mixed and second-order independence set:} 
    \begin{equation*}
    \hspace{-2em}
        \left\{
        \frac{\partial h}{\partial \eta^{(u)}},\ 
        \exp(\beta^{\top} x)  \frac{\partial h}{\partial \eta^{(u)}},\ 
        x^{(w)}  \frac{\partial h}{\partial \eta^{(u)}},\ 
        \frac{\partial^2 h}{\partial \eta^{(u)} \partial \eta^{(v)}},\ 
        \exp(\beta^{\top} x)  \frac{\partial^2 h}{\partial \eta^{(u)} \partial \eta^{(v)}}
        \right\}_{u,v \in [q],\ w \in [d]}.
    \end{equation*}
\end{enumerate}
\end{definition}

Here, the \textit{First-order gating independence} condition guarantees that changes in $h$ with respect to $\eta$ remain distinguishable, even after modulation by the gating weights $\exp(\beta^\top X)$. This is a minimal requirement to ensure that the expert and gating mechanisms interact in a structurally non-degenerate way.
The \textit{Gradient product independence} condition guarantees that the products of directional derivatives of $h$ are distinguishable from each other (even under modulation by gating terms) and cannot be expressed as a linear combination of basic functions. This prevents higher-order interactions among gradients from collapsing into lower-order structures.
Finally, the \textit{Mixed and second-order independence} condition is stronger than the first-order one. It rules out first-order interactions between expert and gating parameters of the form ${\partial h}/{\partial \eta^{(w)}} = x^{(w)} \cdot {\partial h}/{\partial \eta^{(v)}}$, which would imply ${\partial g}/{\partial \eta^{(w)}} = {\partial^2 g}/{(\partial \beta^{(w)} \partial \eta^{(v)})}$. It also requires that second-order derivatives remain linearly independent, even accounting for the effect of the gating function. This guarantees that both first- and second-order directional changes in $h$ convey distinct, non-redundant information, and that higher-order structure in $h$ cannot be reduced to or absorbed by lower-order terms. 
This is essential when handling second-order Taylor expansions of the model.

\textbf{Examples.} The expert functions $h(x, \eta) = \mathrm{GELU}(\eta^\top x)$, $h(x, \eta) = \mathrm{sigmoid}(\eta^\top x)$, and $h(x, \eta) = \tanh(\eta^\top x)$ satisfy the strong identifiability condition, as their nonlinearities avoid degeneracies. 
In contrast, $h(x, \eta) = \mathrm{ReLU}(\eta^\top x)$ fails the second-order independence condition, as the second-order derivatives vanish almost everywhere.
Another failure case arises when $h(x, \eta) = \sigma(a^\top x + b)$, where $\eta = (a, b)$ and $\sigma$ is any scalar activation function.
This leads to $\partial h / \partial a = x \cdot \partial h / \partial b$, directly violating Condition~3.
To determine the convergence rates for the MLE in these settings, we construct the following loss function between parameters $G$ and $G^*$, carefully tailored to the non-distinguishable setting:
\begin{align*}
     D_2(G,G_*) :&=
    \exp(\tau)
    \|(\Delta\eta,\Delta\nu) \|^2
    +
    \exp(\tau^*)
    \|(\Delta\eta^*,\Delta\nu^*) \|^2
    \\&
    -\min\{\exp(\tau), \exp(\tau^*)\}
    \left(
    \|(\Delta\eta,\Delta\nu) \|^2
    +
    \|(\Delta\eta^*,\Delta\nu^*) \|^2
    \right)
    \\&
    +
    \big( 
    \exp(\tau)\Vert (\Delta\eta,\Delta\nu) \Vert
    +\exp(\tau^*)\Vert (\Delta\eta^*,\Delta\nu^*) \Vert
    \big)
    \times
    \Vert  (\beta,\eta,\nu)-(\beta^*,\eta^*,\nu^*) \Vert,
\end{align*}
where we denote $(\Delta\eta,\Delta\nu)=(\eta-\eta_0,\nu-\nu_0)$ and $(\Delta\eta^*,\Delta\nu^*)=(\eta^*-\eta_0,\nu^*-\nu_0)$.
\begin{theorem}
\label{thm:d2_mle_rate}
Suppose that $f_0$ belongs to the family of Gaussian densities and $h_0=h$. Then, there exists a positive constant $C_2$ that depends on $\Xi,\eta_0,\nu_0$ such that $\mathbb{E}_X\left[ d_H(p_{ G}\left(\cdot|X),p_{G_*}(\cdot|X)\right) \right]\geq C_2D_2( G, G_*)$ holds for all parameters $G$. 
As a result, we obtain
\begin{align}
    \label{eq:nondis_bound_1}
    \sup_{G_*\in\Xi(l_n)}&\mathbb{E}_{p_{\Gs,n}} 
    \Big[
    \Vert (\Delta\eta^*,\Delta\nu^*) \Vert^4
    \times
    |\exp(\widehat{\tau}_n)
    -\exp(\tau^*)|^2 
    \Big]
    \lesssim \log(n)/n,\\
    \label{eq:nondis_bound_2}
    \sup_{G_*\in\Xi(l_n)}&\mathbb{E}_{p_{\Gs,n}} 
    \Big[
    \exp^2(\tau^*) 
    \Vert (\Delta\eta^*,\Delta\nu^*) \Vert^2
    \times\Vert (\widehat{\beta}_n, \widehat{\eta}_n, \widehat{\nu}_n)-(\beta^*,\eta^*,\nu^*) \Vert^2 
    \Big] 
    \lesssim 
    \log(n)/n,
\end{align}
for any sequence $(l_n)_{n\geq 1}$ such that $l_n/\log n\to\infty$ as $n\to\infty$ where we denote
\begin{align*}
    \Xi(l_n) := \left\{ G = (\tau,\beta,\eta, \nu) \in \Xi : 
\frac{
    l_n
}{
    \min\limits_{1 \leq i \leq q,1 \leq j \leq d,} 
    \left\{ |\eta^{(i)}|^2, |\nu|^2, |\beta^{(j)}|^2 \right\} 
    \sqrt{n}
} \leq \exp(\tau)
\right\}.
\end{align*}
\end{theorem}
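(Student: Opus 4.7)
The proof has two phases: first, establish the Hellinger lower bound $\mathbb{E}_X[d_H(p_G,p_{G_*})] \gtrsim D_2(G, G_*)$; second, combine it with Proposition~\ref{theorem:ConvergenceRateofDensityEstimation} to convert density convergence into the claimed parameter bounds. The lower-bound step will follow the contradiction-and-compactness template used in Theorem~\ref{thm:not_equal}, but with a fundamentally different local analysis: because $f_0$ is Gaussian and $h_0=h$, we have the identity $f_0(\cdot|h(x,\eta_0),\nu_0)=f(\cdot|h(x,\eta_0),\nu_0)$, which merges the two mixture components at the pre-trained parameters and forces a second-order expansion (rather than first-order as in the distinguishable case).

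For the lower bound, suppose by contradiction that there exist sequences $G_n\to\bar G$ and $G_{*,n}\to\bar G_*$ along which $\mathbb{E}_X[d_H(p_{G_n},p_{G_{*,n}})]/D_2(G_n,G_{*,n})\to 0$. I would Taylor-expand $f(\cdot|h(x,\eta),\nu)$ around $(\eta_0,\nu_0)$ up to second order in both $(\Delta\eta_n,\Delta\nu_n)$ and $(\Delta\eta^*_n,\Delta\nu^*_n)$, exploiting the Gaussian heat-equation identity $\partial^2 f/(\partial\mu\partial\mu^\top)=2\partial f/\partial\nu$ to eliminate redundant second-order $\eta$-derivatives in favor of first-order $\nu$-derivatives. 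After multiplying through by the softmax denominators $(1+e^{\beta_n^\top x+\tau_n})(1+e^{\beta_{*,n}^\top x+\tau^*_n})$, the numerator of $p_{G_n}-p_{G_{*,n}}$ decomposes as a linear combination of the basis functions appearing in Definition~\ref{appendix_def:distinguish_linear_independent}, namely $\{1,\ x^{(w)},\ \exp(\beta^\top x),\ \partial_\eta h,\ \exp(\beta^\top x)\partial_\eta h,\ (\partial_\eta h)(\partial_\eta h)^\top,\ \exp(\beta^\top x)(\partial_\eta h)(\partial_\eta h)^\top,\ \partial^2_\eta h,\ \exp(\beta^\top x)\partial^2_\eta h,\ x^{(w)}\partial_\eta h\}$ evaluated at $\bar\eta_0$. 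Normalizing by $D_2(G_n,G_{*,n})$, passing to the limit, and invoking the three linear-independence conditions of Definition~\ref{appendix_def:distinguish_linear_independent} in sequence will force every limiting coefficient to vanish, contradicting the fact that $D_2$ is exactly the sum of the magnitudes of these dominant coefficients.

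Phase (b) combines the lower bound with the squared form of Proposition~\ref{theorem:ConvergenceRateofDensityEstimation} to obtain $\sup_{G_*}\mathbb{E}_{p_{G_*,n}}[D_2(\widehat G_n, G_*)^2]\lesssim \log n/n$. Equations~\eqref{eq:nondis_bound_1} and \eqref{eq:nondis_bound_2} then follow by bounding $D_2$ from below by its individual building blocks. Specifically, whenever $\exp(\widehat\tau_n)\leq\exp(\tau^*)$ the first three terms of $D_2$ collapse to $|\exp(\widehat\tau_n)-\exp(\tau^*)|\cdot\|(\Delta\eta^*,\Delta\nu^*)\|^2$, and the last term dominates $\exp(\tau^*)\|(\Delta\eta^*,\Delta\nu^*)\|\cdot\|(\widehat\beta_n,\widehat\eta_n,\widehat\nu_n)-(\beta^*,\eta^*,\nu^*)\|$. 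The restriction $G_*\in\Xi(l_n)$ is precisely what ensures this favorable ordering of gating weights with high probability: the lower bound $\exp(\tau^*)\gtrsim l_n/\sqrt n$ beats the $\widetilde{\mathcal O}(n^{-1/2})$ concentration of $\exp(\widehat\tau_n)$ around $\exp(\tau^*)$, so the swap $\exp(\widehat\tau_n)>\exp(\tau^*)$ contributes only a negligible event to the expectation.

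\textbf{Main obstacle.} The hardest step is the linear-independence argument. Since $p_{G_n}-p_{G_{*,n}}$ mixes the exponential gating $\exp(\beta^\top x)$ with up to second-order $\eta$-derivatives of $h$, the Taylor expansion generates a large collection of cross-product basis terms, and matching each coefficient to the appropriate building block of $D_2$ requires all three conditions of Definition~\ref{appendix_def:distinguish_linear_independent} simultaneously. This bookkeeping is further complicated by the uniform (minimax) nature of the statement over $G_*\in\Xi(l_n)$, which forces us to handle every possible accumulation point $\bar G,\bar G_*$ via case analysis, including degenerate configurations in which $\bar\beta=\bar\beta_*$ or $\bar\tau=\bar\tau_*$.
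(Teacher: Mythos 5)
Your Phase (a) is essentially the paper's own strategy for its main case: multiply out the softmax denominators, Taylor-expand to second order around the merged point, use the Gaussian identity $\partial f/\partial\nu=\tfrac12\,\partial^2 f/\partial h^2$ to collapse redundant derivatives, normalize by $D_2$, and kill the limiting coefficients via Fatou plus the strong identifiability sets of Definition~\ref{appendix_def:distinguish_linear_independent}. However, your case analysis is aimed at the wrong degeneracies. The split the argument actually needs is over where the prompt parameters accumulate: (1) both $(\eta_n,\nu_n)$ and $(\eta^*_n,\nu^*_n)$ tend to $(\eta_0,\nu_0)$ (your second-order analysis); (2) both tend to a common limit different from $(\eta_0,\nu_0)$, where the second-order machinery is unnecessary and wrong to force — there one observes $D_2\lesssim D_1$ and reuses the first-order argument of Theorem~\ref{thm:not_equal}, since the two Gaussian components are then genuinely distinct; (3) the mixed case, which is handled by passing to the limit and invoking identifiability to get $p_{G}=p_{G_*}$, a contradiction. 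Configurations like $\bar\beta=\bar\beta_*$ or $\bar\tau=\bar\tau_*$ are not the issue. Also note that the independence argument must be run jointly in $(x,y)$: the relevant family is products of your $x$-functions with $\partial^\gamma f/\partial h^\gamma(y|h(x,\eta^*_n),\nu^*_n)$ for $\gamma$ up to four (second order in $\nu$ becomes fourth order in $h$), not the $x$-sets alone; and extracting from the vanishing coefficients the exact building blocks of $D_2$ (in particular the cross terms $\exp(\tau^*_n)\Vert(\Delta\eta^*_n,\Delta\nu^*_n)\Vert\,\Vert\beta_n-\beta^*_n\Vert$ and the $\min$-term structure) takes nontrivial algebraic bookkeeping, not a direct reading-off.

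The more serious gap is in Phase (b). Your dismissal of the case $\exp(\widehat\tau_n)>\exp(\tau^*)$ rests on an "$\widetilde{\mathcal O}(n^{-1/2})$ concentration of $\exp(\widehat\tau_n)$ around $\exp(\tau^*)$", but that is exactly what is \emph{not} available in the non-distinguishable regime: the best rate one can hope for (and the one being proved) is $\sqrt{\log n/n}\,\Vert(\Delta\eta^*,\Delta\nu^*)\Vert^{-2}$, which can exceed $\exp(\tau^*)$, so the argument as written is circular. Moreover the "swap" is not a small-probability event to be discarded; in that regime the first three terms of $D_2$ lower-bound $(\exp(\widehat\tau_n)-\exp(\tau^*))\Vert(\Delta\widehat\eta_n,\Delta\widehat\nu_n)\Vert^2$, with the \emph{estimator's} deviations, and one must relate $\Vert(\Delta\widehat\eta_n,\Delta\widehat\nu_n)\Vert$ to $\Vert(\Delta\eta^*,\Delta\nu^*)\Vert$ (e.g., via the cross term of $D_2$ and a triangle inequality), which is precisely where the restriction $G_*\in\Xi(l_n)$ with $l_n/\log n\to\infty$ enters — to guarantee consistency of $\widehat G_n$ despite the loss of identifiability — rather than to enforce an ordering of gating weights. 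Finally, note that the constraint in $\Xi(l_n)$ lower-bounds $\exp(\tau^*)$ by $l_n/(\min_{i,j}\{|\eta^{(i)}|^2,|\nu|^2,|\beta^{(j)}|^2\}\sqrt n)$, not simply by $l_n/\sqrt n$, so even the quantitative comparison you invoke is not the one the definition provides.
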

The proof of Theorem~\ref{thm:d2_mle_rate} is in Appendix~\ref{app_proof: d2_loss}. Note that under the setting of Theorem~\ref{thm:d2_mle_rate}, the softmax-contaminated MoE model is not identifiable, that is, the equation $p_{G}(y|x)=p_{G_*}(y|x)$ for almost all $(x,y)$ does not imply $G=G_*$. For that reason, we restrict the parameter space to the set $\Xi(l_n)$ to guarantee the consistency of the MLE. 
Compared to Theorem~\ref{thm:not_equal}, the above rates exhibit differ in several aspects.

\emph{(i)} From equation~\eqref{eq:nondis_bound_1}, we observe that the convergence rate of $\exp(\widehat{\tau}_n)$ to $\exp(\tau^*)$ becomes slower than the parametric order $\widetilde{\mathcal{O}} (n^{-1/2})$ as they depend on the vanishing rate of $(\Delta\eta^*,\Delta\nu^*)$ to zero. For example, if the pair of prompt parameters $(\eta^*,\nu^*)$ approach $(\eta_0,\nu_0)$ at the rate of $\widetilde{\mathcal{O}} (n^{-1/8})$, then the bound~\eqref{eq:nondis_bound_1} implies that $\exp(\widehat{\tau}_n)$ goes to $\exp(\tau^*)$ at the rate of $\widetilde{\mathcal{O}} (n^{-1/4})$.  This toy example is indeed confirmed by our numerical experiments in the next section.

\emph{(ii)} Likewise,  the convergence rates of the estimators $(\widehat{\beta}_n,\widehat{\eta}_n,\widehat{\nu}_n)$ 
are also impacted by the convergence rates of the prompt parameters and therefore slower than $\widetilde{\mathcal{O}} (n^{-1/2})$. For example, if $(\Delta\eta^*,\Delta\nu^*)$ go to zero at the rate of $\widetilde{\mathcal{O}} (n^{-1/8})$, then the bound~\eqref{eq:nondis_bound_2} indicates that $\widehat{\beta}_n,\widehat{\eta}_n,\widehat{\nu}_n$ converges to $\beta^*,\eta^*,\nu^*$ at the rate of $\widetilde{\mathcal{O}} (n^{-3/8})$, respectively. Again, in our numerical experiments below we empirically verify this behavior.

In our final result, whose proof can be found in Appendix~\ref{apppf:d2_minimax}, we show that the slower  converge rates for the MLE under non-distinguishability are in fact essentially minimax optimal. 


\begin{theorem}
\label{thm:d2_minimax}
    Suppose that $f_0$ belongs to the family of Gaussian densities and $h_0=h$. Then, the minimax lower bounds
\begin{align*}
    &\inf_{\overline{G}_n }\sup_{G\in \Xi(l_n)  }
    \mathbb{E}_{p_{G,n}} \Big[ 
    \Vert (\Delta\eta,\Delta\nu) \Vert^4    
    \times \|\exp(\overline{\tau}_n)-\exp(\tau)\|^2
    \Big] 
    \gtrsim n^{-1/r},
    \\
    &\inf_{\overline{G}_n}\sup_{G\in \Xi(l_n) }
    \mathbb{E}_{p_{G,n}} 
    \Big[ \exp^2(\tau) 
    \Vert(\Delta\eta,\Delta\nu) \Vert^2
    \times\Vert (\overline{\beta}_n, \overline{\eta}_n, \overline{\nu}_n)-(\beta,\eta,\nu) \Vert^2 \Big] 
    \gtrsim n^{-1/r},
\end{align*}
hold for any sequence $(l_n)_{n\geq 1}$ and any $0<r < 1$, , where the infimum is over all estimators $\overline{G}_n$ taking values in $\Xi$.
\end{theorem}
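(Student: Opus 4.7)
The plan is to apply Le Cam's two-point method, in the same spirit as the proof of Theorem~\ref{thm:lower-distinguish}, but tailored to exploit the collapse of the prompt density onto the pre-trained one in the non-distinguishable regime. For the first inequality I would fix a unit vector $u\in\mathbb{R}^q$, a bounded $\beta_0\in\mathbb{R}^d$, and a small constant $\epsilon>0$, and introduce the two hypotheses
\begin{align*}
G_1=(\beta_0,\tau_1,\eta_0+\epsilon u,\nu_0),\qquad G_2=(\beta_0,\tau_2,\eta_0+\epsilon u,\nu_0),
\end{align*}
with $\tau_1\neq\tau_2$ chosen so that $G_1,G_2\in\Xi(l_n)$. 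Because $(\eta_i,\nu_i)$ coincide between the two hypotheses, the weight $\|(\Delta\eta_i,\Delta\nu_i)\|^4$ equals $\epsilon^4$ at both points, so the weighted pseudo-distance between them is exactly $\epsilon^4\,|\exp(\tau_1)-\exp(\tau_2)|^2$.

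The key computation is an upper bound on $\mathrm{KL}(p_{G_1}\|p_{G_2})$. Writing the gating weights as $q_i(x)=\exp(\beta_0^\top x+\tau_i)/(1+\exp(\beta_0^\top x+\tau_i))$, the density difference factors cleanly:
\begin{align*}
    p_{G_1}(y|x)-p_{G_2}(y|x)=\bigl(q_1(x)-q_2(x)\bigr)\cdot\bigl[f(y|h(x,\eta_0+\epsilon u),\nu_0)-f_0(y|h(x,\eta_0),\nu_0)\bigr].
\end{align*}
A first-order Taylor expansion of the bracket in $\epsilon$ around $\eta_0$ is $O(\epsilon)$ uniformly on the bounded domain $\mathcal{X}$, while $|q_1(x)-q_2(x)|\lesssim|\exp(\tau_1)-\exp(\tau_2)|$ uniformly in $x$. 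Plugging into the $\chi^2$-type bound $\mathrm{KL}\le \mathbb{E}_X\!\int(p_{G_1}-p_{G_2})^2/p_{G_2}\,dy$ and evaluating the resulting Gaussian Fisher-information integral in $y$, I obtain $\mathrm{KL}(p_{G_1}\|p_{G_2})\lesssim \epsilon^2\,|\exp(\tau_1)-\exp(\tau_2)|^2$. Tensorization then gives $\mathrm{KL}(p_{G_1}^{\otimes n}\|p_{G_2}^{\otimes n})\lesssim n\epsilon^2|\exp(\tau_1)-\exp(\tau_2)|^2$, which combined with Pinsker keeps $d_{TV}(p_{G_1}^{\otimes n},p_{G_2}^{\otimes n})$ bounded away from $1$ whenever $\epsilon\,|\exp(\tau_1)-\exp(\tau_2)|\lesssim n^{-1/2}$.

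Given this KL control, the two-point Le Cam inequality (via the triangle bound $|\exp(\overline{\tau}_n)-\exp(\tau_1)|+|\exp(\overline{\tau}_n)-\exp(\tau_2)|\ge|\exp(\tau_1)-\exp(\tau_2)|$ and Markov's inequality) yields
\begin{align*}
\inf_{\overline{G}_n}\sup_{G\in\Xi(l_n)}\mathbb{E}_{p_{G,n}}\!\bigl[\|(\Delta\eta,\Delta\nu)\|^4|\exp(\overline{\tau}_n)-\exp(\tau)|^2\bigr]\;\gtrsim\;\epsilon^4|\exp(\tau_1)-\exp(\tau_2)|^2.
\end{align*}
Keeping $\epsilon$ a fixed positive constant and choosing $|\exp(\tau_1)-\exp(\tau_2)|\asymp n^{-1/(2r)}$ for any $r\in(0,1)$ respects the KL budget and delivers the advertised $n^{-1/r}$ rate. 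The second minimax inequality follows from a parallel construction in which $G_1,G_2$ both retain $\eta_i=\eta_0+\epsilon u$ and a common $\tau$ (so the weight $\exp^2(\tau)\|(\Delta\eta,\Delta\nu)\|^2\asymp\epsilon^2$ is non-vanishing) but differ by a perturbation $\delta_n$ in a single chosen coordinate of $\beta$; the strong-identifiability conditions of Definition~\ref{appendix_def:distinguish_linear_independent} guarantee a non-degenerate first-order Taylor term, yielding $\mathrm{KL}\lesssim\delta_n^2\epsilon^2$, and taking $\delta_n\asymp n^{-1/(2r)}$ produces the matching $n^{-1/r}$ bound.

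The principal technical obstacle is the quadratic KL bound $\mathrm{KL}\lesssim \epsilon^2|\exp(\tau_1)-\exp(\tau_2)|^2$: one must carefully control $\int(p_{G_1}-p_{G_2})^2/p_{G_2}\,dy$ despite the Gaussian tails of $f_0$. The boundedness of $\mathcal{X}$ and the tail condition~\eqref{eq:tail.condition} on $f_0$ are essential for taming the integrand, mirroring the Fisher-information arguments underpinning Proposition~\ref{theorem:ConvergenceRateofDensityEstimation}. A secondary bookkeeping step is checking that both hypotheses lie in $\Xi(l_n)$ for each admissible sequence $(l_n)$, which is arranged by choosing $\tau_1,\tau_2$ with $\exp(\tau_i)$ large enough to satisfy the constraint defining $\Xi(l_n)$.
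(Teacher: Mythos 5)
Your proposal is correct and follows essentially the same route as the paper: a two-point argument in which the two hypotheses differ only in the gating intercept $\tau$ (first bound) or only in the gating slope $\beta$ (second bound), while the prompt parameters sit at a displacement $(\Delta\eta,\Delta\nu)\neq 0$ so that the weights in the loss are non-degenerate, and the statistical distance between the hypotheses is controlled by the separation times that displacement. The paper phrases this through the asymmetric semi-distances $d_{\prime}$, $d_{\prime\prime}$, a Hellinger bound of the form $\mathbb{E}_X[d_H] = o(d^r)$ (as in Lemma~\ref{prop:lower-distinguish}), and the modified Le Cam method of \cite{gadat2020parameter} for losses satisfying only a weak triangle inequality; you instead bound $\mathrm{KL}$ by a $\chi^2$-type integral and use Pinsker with a plain symmetric two-point bound, which is legitimate here precisely because your weights $\Vert(\Delta\eta,\Delta\nu)\Vert^4$ and $\exp^2(\tau)\Vert(\Delta\eta,\Delta\nu)\Vert^2$ coincide at the two hypotheses, so the weighted loss reduces to an ordinary two-point loss and the asymmetric-loss machinery is not needed. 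Your choice of a fixed displacement $\epsilon$ even yields the stronger rate $n^{-1}$, which implies the stated $n^{-1/r}$ for every $r<1$. Two minor remarks: the appeal to strong identifiability (Definition~\ref{appendix_def:distinguish_linear_independent}) to get a ``non-degenerate first-order Taylor term'' is unnecessary and pointed the wrong way — for a lower bound you only need an \emph{upper} bound on the density difference, which follows from smoothness and boundedness of $\mathcal{X}$, $h$, and the compact parameter space (and since $f_0$ is Gaussian here, the tail condition~\eqref{eq:tail.condition} plays no role); and the membership check $G_1,G_2\in\Xi(l_n)$ with constant parameters requires $\exp(\tau_i)\gtrsim l_n/\sqrt{n}$, which is only satisfiable inside the compact $\Xi$ when $l_n$ grows slowly (e.g.\ polylogarithmically, as in Theorem~\ref{thm:d2_mle_rate}) — an issue shared by the paper's own construction rather than a defect of yours.
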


\subsection{Practical Implications}
\label{sec:practical_implications}

There are two important practical implications for the design of a contaminated MoE model from our theoretical results.

\emph{1. Softmax gating is more sample-efficient than input-free gating.} We observe that softmax gating yields faster convergence rates of prompt parameter estimation in contaminated MoE than input-free gating in \cite{yan2025contaminated}. In particular, when using input-free gating, Table~\ref{table:comparison} reveals that the rates for estimating expert parameters and variance depend on the convergence rate of the gating parameter to zero. By contrast, when using softmax gating, estimation rates for expert parameters and variance become significantly faster as the previous rate dependence disappears. Therefore, our theories encourage the use of softmax gating over input-free gating when tuning contaminated-MoE-based models.

\emph{2. Prompt models should have different expertise from pre-trained models.} It can be seen from Table~\ref{table:comparison} that when the prompt model acquires overlapping knowledge with the pre-trained model (non-distinguishable setting), the convergence rates of parameter estimation are slower than when these models have distinct knowledge (distinguishable setting). Thus, our theories advocate using prompt models with different expertise from the pre-trained model.

\begin{table*}[!ht]
\caption{Comparison of parameter estimation rates in input-free-contaminated MoE \cite{yan2025contaminated} and softmax-contaminated MoE (Ours). Below, we consider gating parameters $\exp(\beta^*_0)$, expert parameters $\eta^*$, and variance $\nu^*$. In addition, $\lambda^*$ denotes the constant weight in input-free-contaminated MoE.}
\textbf{}\\
\centering
\begin{tabular}{ | c | c |c|} 
\hline
\multicolumn{3}{|c|}{\textbf{Distinguishable Setting}}\\
\hline
\textbf{}  & Gating parameters &  Expert parameters and Variance\\
\hline 
{\bf Input-free gating \cite{yan2025contaminated}}  & $\widetilde{\mathcal{O}}(n^{-1/2})$ &$\widetilde{\mathcal{O}}(n^{-1/2}(\lambda^*)^{-1})$  \\
\hline
{\bf Softmax gating (Ours)}  &$\widetilde{\mathcal{O}}(n^{-1/2})$  &$\widetilde{\mathcal{O}}(n^{-1/2})$ \\
\hline
\multicolumn{3}{|c|}{\textbf{Non-distinguishable Setting}}\\
\hline
\textbf{}  & Gating parameters &  Expert parameters and Variance\\
\hline 
{\bf Input-free gating \cite{yan2025contaminated}}  & $\widetilde{\mathcal{O}} (n^{-\frac{1}{2}}\cdot\Vert(\Delta\eta^*,\Delta\nu^*)\Vert^{-2})$ & $\widetilde{\mathcal{O}} (n^{-\frac{1}{2}}\cdot\Vert(\Delta\eta^*,\Delta\nu^*)\Vert^{-1}(\lambda^*)^{-1})$   \\
\hline
{\bf Softmax gating (Ours)}  &$\widetilde{\mathcal{O}} (n^{-\frac{1}{2}}\cdot\Vert(\Delta\eta^*,\Delta\nu^*)\Vert^{-2})$ & $\widetilde{\mathcal{O}} (n^{-\frac{1}{2}}\cdot\Vert(\Delta\eta^*,\Delta\nu^*)\Vert^{-1})$ \\
\hline
\end{tabular}
\label{table:comparison}
\end{table*}

\section{Numerical Experiments}
\label{sec:experiments}

        

In this section, we present several numerical experiments to verify our theoretical findings.


\textbf{Experimental setup.}
Recall that, in the distinguishable setting, 
the
pre-trained model $f_0$ does not belong to the Gaussian density family. Thus, we let $f_0$ be the density of a Laplace distribution, with mean function $h_0(x, \eta_0) = \tanh(\eta_0^{\top} x)$ and variance $\nu_0$.
Here, $\eta_0$ is a $d$-dimensional vector defined as $e_1 := (1, 0, \ldots, 0)$, and $\nu_0 = 0.001$.
Meanwhile, the prompt $f$ is formulated as a Gaussian density, with the same $\tanh$ mean function but a different parameter $\eta^*$—i.e., $h(x, \eta^*) = \tanh((\eta^*)^\top x)$—and variance $\nu^*$.


On the other hand, in the non-distinguishable setting
, both $f$ and $f_0$ belong to the Gaussian density family, and $h$ and $h_0$ are expert functions of the same form (albeit parameterized by different values of $\eta_0$ and $\eta^*$). As in the previous case, we let the expert function be the $\tanh$ function: in the pre-trained model, the expert is $h(x, \eta_0) = \tanh(\eta_0^\top x)$, and in the prompt model, it is $h(x, \eta^*) = \tanh((\eta^*)^\top x)$.

\textbf{Synthetic data generation.} We create synthetic datasets following the model outlined in equation~\eqref{eq:contaminated_pretrain_model_general}. Specifically, we generate data pairs $\{(X_i, Y_i)\}_{i=1}^n\in\mathcal{X}\times\mathcal{Y} \subset \mathbb{R}^d \times \mathbb{R}$ by 
first drawing each covariate $X_i$ independently from a standard Gaussian distribution, for $i = 1, \ldots, n$, and consistently set $d = 8$ across all trials.
The responses $Y_i$ are drawn from the density $p_{\Gs}(y|x)$, where $\Gs =(\beta^*,\tau^*,\eta^*,\nu^*)$:

(a) 
In the distinguishable setting,
we let $\beta^*=1/\sqrt{d}\cdot\mathbf{1}_d,\tau^*=1$, $\eta^*=-e_1= - \eta_0 $ and $\nu^*= \nu_0 = 0.001$. 

(b) 
In the non-distinguishable setting, we examine two cases to study the MLE convergence behavior as either $\eta^*$ or $\nu^*$ varies with $n$: in the first, $\eta^*$ is an $\mathcal{O}(n^{-1/8})$ perturbation of $\eta_0$ with $\nu^*$ fixed at $\nu_0$; in the second, $\eta^* = -\eta_0$ while $\nu^*$ is perturbed around $\nu_0$ at the same rate.
In detail, we set:
\begin{itemize}
    \item[(i)] In the first case, $\beta^* = {1}/{\sqrt{d}} \cdot \mathbf{1}_d$, $\tau^* = 1$, $\eta^* = e_1 (1 + n^{-1/8}) = \eta_0 (1 + n^{-1/8})$, and $\nu^* = \nu_0 = 0.001$.

    \item[(ii)] In the second case, $\beta^* = {1}/{\sqrt{d}} \cdot \mathbf{1}_d$, $\tau^* = 1$, $\eta^* = -e_1 = -\eta_0$, and $\nu^* = 0.001 (1 + n^{-1/8}) = \nu_0 (1 + n^{-1/8})$.
\end{itemize}




\textbf{Training procedure.} We conduct $40$ experiments and, for each of them, consider 20 different sample sizes $n$, ranging from $10^3$ to $10^5$.
In computing the MLEs, 
the initialization is set relatively close to the true parameter values
to mitigate potential optimization instabilities. 
We use an EM algorithm~\citep{Jordan-1994} to compute the MLE, employing an off-the-shelf BFGS optimizer for the M-step due to the absence of a universal closed-form solution. All the numerical experiments are performed on 
a MacBook Air with an Apple M4 chip.



\textbf{Results.}
The experimental results are presented in Figure~\ref{fig:thm1_experiments} and Figure~\ref{fig:thm3_experiments},
where the x-axis displays varying sample sizes $n$, and the y-axis shows the parameter estimation error.
We now present a detailed analysis of the results shown in each figure:

(a) Figure~\ref{fig:thm1_experiments} displays the results for Theorem~\ref{thm:not_equal}.
We observe that the convergence rates of 
$(\widehat{\beta}_n,\widehat{\tau}_n,\widehat{\eta}_n,\widehat{\nu}_n)$
are $\mathcal{O}(n^{-0.45}), \mathcal{O}(n^{-0.52}), \mathcal{O}(n^{-0.50}), \mathcal{O}(n^{-0.54})$,
respectively, aligning with the theoretical rates of order $\mathcal{O}(n^{-1/2})$ in Theorem~\ref{thm:not_equal}.


(b) On the other hand, Figure~\ref{fig:thm3_experiments} illustrates the parameter estimation errors for the simulations conducted in the non-distinguishable setting as Theorem~\ref{thm:d2_mle_rate}.

\begin{itemize}
    \item[(i)]
    In the first case, $\eta^*$ converges to $\eta_0$ at the rate of $\mathcal{O}(n^{-1/8})$,
    while $\nu^*$ remains fixed,
    Figure~\ref{fig:non_dis_eta} shows that the convergence rate of $\exp(\widehat{\tau}_n)$ to $\exp(\tau^*)$ is $\mathcal{O}(n^{-0.23})$, which is consistent with the expected rate of $\mathcal{O}(n^{-1/4})$.
    The convergence rates for $\widehat{\beta}_n$, $\widehat{\eta}_n$, and $\widehat{\nu}_n$ are  $\mathcal{O}(n^{-0.37})$, $\mathcal{O}(n^{-0.39})$, and $\mathcal{O}(n^{-0.35})$, respectively, all of which are approximately $\mathcal{O}(n^{-0.375})$, as they hinge on the vanishing rate $\mathcal{O}(n^{-3/8})$.
    These empirical rates are consistent with the theoretical rates in Theorem~\ref{thm:d2_mle_rate}.

    \item[(ii)]
    In the alternative setting, $\eta^*$ is held fixed, while $\nu^*$ converges to $\nu_0$ at the rate of $\mathcal{O}(n^{-1/8})$.
    Figure~\ref{fig:non_dis_nu} reveals that the convergence rate of $\exp(\widehat{\tau}_n)$ to $\exp(\tau^*)$ is of order $\mathcal{O}(n^{-0.22})$, again close to $\mathcal{O}(n^{-1/4})$.
    Meanwhile, the MLEs $\widehat{\beta}_n$, $\widehat{\eta}_n$, and $\widehat{\nu}_n$ still empirically converge to $\beta^*$, $\eta^*$, and $\nu^*$ at rates of $\mathcal{O}(n^{-0.39})$, $\mathcal{O}(n^{-0.37})$, and $\mathcal{O}(n^{-0.39})$, respectively,
    which align well with the theoretical rates $\widetilde{\mathcal{O}}(n^{-3/8})$.
    This observation is consistent with the theoretical convergence rates in Theorem~\ref{thm:d2_mle_rate}.
\end{itemize}

\begin{figure*}[t]
    \centering
        \includegraphics[width=1\textwidth]{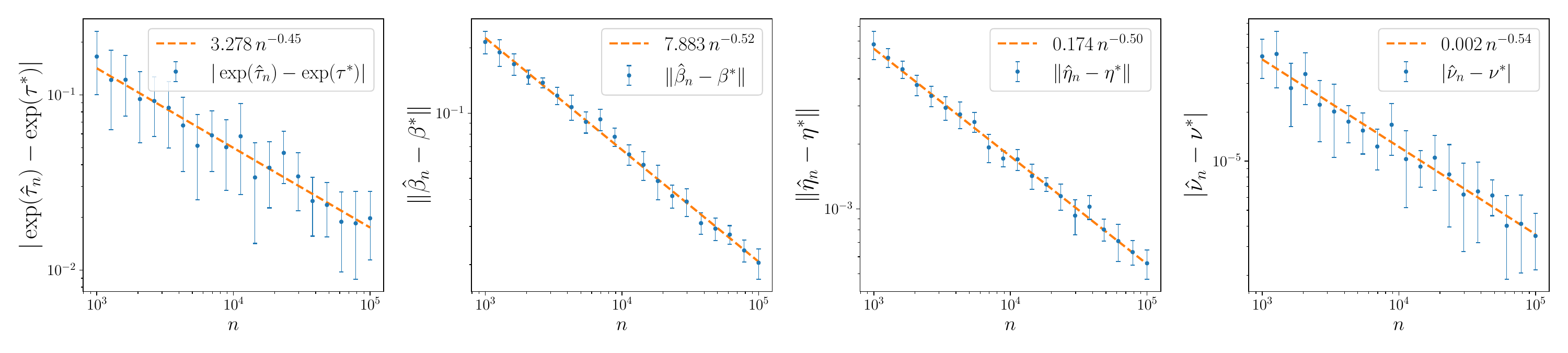}
    \caption{ 
    (\textbf{Distinguishable Setting:} $f_0$ is the density of a Laplace distribution.)
    Log-log graphs depicting the empirical convergence rates of the MLE $(\widehat{\beta}_n,\widehat{\tau}_n,\widehat{\eta}_n,\widehat{\nu}_n)$ to the ground-truth values $(\beta^*,\tau^*,\eta^*,\nu^*)$. 
    The blue lines display the parameter estimation errors, while the orange dashed dotted lines are the fitted lines, highlighting the empirical MLE convergence rates. 
    }
    \label{fig:thm1_experiments}
\end{figure*}

    \begin{figure*}[h]
        \centering
        \subfloat[\textbf{Case (i):} $\betas=1/\sqrt{d}~\mathbf{1}_d, \taus = 1, \etasn = e_1(1 + n^{-1/8}), \nus = 0.001$.
        \label{fig:non_dis_eta}]{
            \includegraphics[width=1\textwidth]{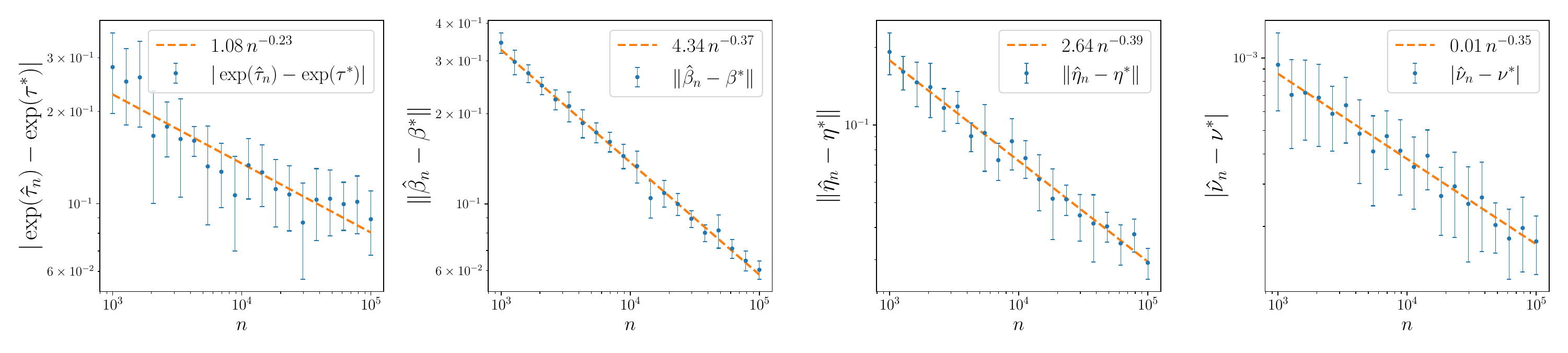}
        }
        
        \subfloat[\textbf{Case (ii):} $\betas=1/\sqrt{d}~\mathbf{1}_d , \taus = 1, \etasn = -e_1, \nusn = 0.001(1 + n^{-1/8})$.
        \label{fig:non_dis_nu}]
        {
        \includegraphics[width=1\textwidth]{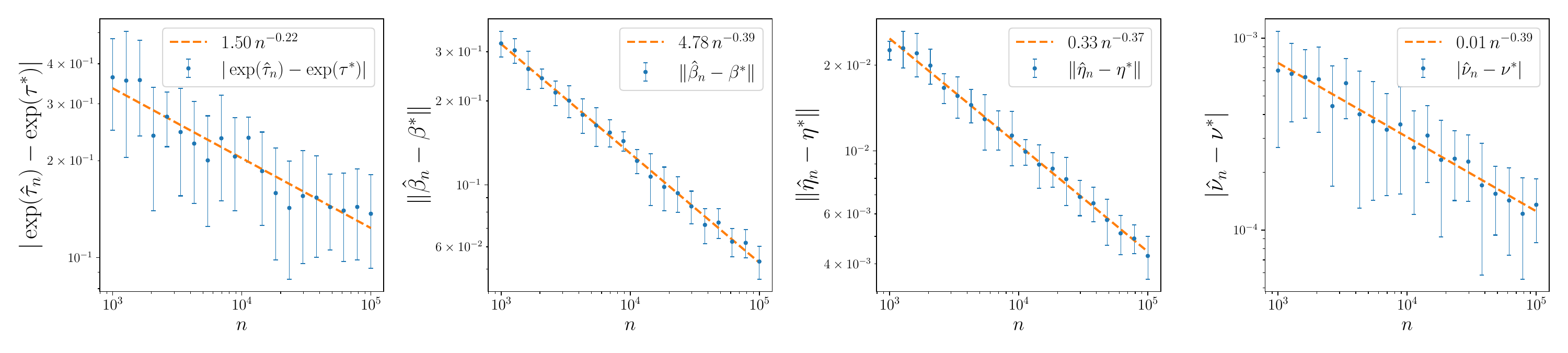}
        }
        \vspace{0.5em}
        \caption{
        (\textbf{Non-distinguishable Setting:} $f_0$ is a Gaussian density.)
        Log-log graphs depicting the empirical convergence rates of the MLE $(\widehat{\beta}_n,\widehat{\tau}_n,\widehat{\eta}_n,\widehat{\nu}_n)$ to the ground-truth values $(\beta^*,\tau^*,\eta^*,\nu^*)$.
        The blue lines display the parameter estimation errors, while the orange dashed dotted lines are the fitted lines, highlighting the empirical MLE convergence rates. Figure~\ref{fig:non_dis_eta} and Figure~\ref{fig:non_dis_nu} illustrates results for Case (i) and Case (ii), respectively.
        }
        \label{fig:thm3_experiments}
    \end{figure*}

\section{Conclusion}
\label{sec:conclusion}
In this paper, we characterize the convergence behavior of maximum likelihood estimators for parameters in the softmax-contaminated MoE model formulated as a mixture of a frozen pre-trained model and a trainable prompt model. 
To capture the challenge in which the prompt model admits the same expertise as the pre-trained model, we propose a novel analytic distinguishability condition and divide our analysis based on that condition. When the distinguishability condition is satisfied, we obtain minimax optimal parameter estimation rates of parametric order in the sample size, which are faster than those under the contaminated MoE with input-free gating. 
Conversely, when the distinguishability condition is violated, these rates become substantially slower than the parametric rates as they hinge on the convergence rates of prompt parameters to pre-trained parameters. 

Based on our theoretical analysis, we make the following observations. First, the softmax gating helps 
to improve the sample efficiency for estimating the parameters in the contaminated MoE compared to the input-free gating. Second, the convergence rates for parameter estimation will be negatively affected if the prompt model acquires overlapping knowledge with the pre-trained model, thereby increasing the sample complexity of parameter estimation. 

In future work, we plan to consider a more challenging setting of the contaminated MoE where the pre-trained model is fine-tuned by multiple prompt models rather than a single prompt as in the current setting. 
Furthermore, we can also generalize the analysis to the scenario where the prompt models belong to various families of distributions, rather than being restricted to Gaussian distributions.


\bibliography{references}
\bibliographystyle{abbrv}

\newpage

\appendix
\centering
\textbf{\Large{Supplementary Material for
``On Minimax Estimation of Parameters in Softmax-Contaminated Mixture of Experts''}}

\justifying
\setlength{\parindent}{0pt}
\textbf{}\\
\noindent
In this supplementary material, we provide the theoretical proofs omitted from the main text. Appendix~\ref{appsec:main_results} presents the proofs of our main results, including the theorems on convergence rates for parameter estimation and the minimax lower bounds stated in Section~\ref{sec:theory}. Proofs of auxiliary results concerning the fundamental properties of the softmax-contaminated MoE model, introduced in Section~\ref{sec:preliminaries}, are deferred to Appendix~\ref{appendix:ProofsforAuxiliaryResults}.

\section{Proof of Main Results}
\label{appsec:main_results}


In this section, we present the proofs of the MLE rate theorem and the minimax lower bound theorem from Section~\ref{sec:theory}, covering both distinguishable and non-distinguishable settings.

\subsection{Proof of Theorem \ref{thm:not_equal}}
\label{app_subsec:d1_rate}

We begin by proving Theorem~\ref{thm:not_equal} under the distinguishable setting.
\begin{proof}[Proof of Theorem \ref{thm:not_equal}]
\label{app_proof: d1_loss}
Let $\overline{G}=(\Bar{\beta},\Bar{\tau},\Bar{\eta},\Bar{\nu})$ , we need to demonstrate that
 \begin{align*}
     \lim_{\varepsilon\to 0}\inf_{G,G_*}\left\{\frac{\bbE_X[d_V(p_G(\cdot|X),p_{ G_*}(\cdot|X))]}{D_1(G,G_*)}: 
     D_1(G,\overline{G})\vee D_1(G_*,\overline{G})\leq\varepsilon\right\}>0.
 \end{align*}
Using the argument with Fatou's lemma as in Theorem 3.1, \cite{Ho-Nguyen-EJS-16} , it is sufficient to show that
\begin{align*}
     \lim_{\varepsilon\to 0}\inf_{G,G_*}\left\{\frac{\|p_{ G}-p_{ G_*}\|_{\infty}}{D_1(G,G_*)}:
     D_1(G,\overline{G})\vee D_1(G_*,\overline{G})\leq\varepsilon\right\}>0.
 \end{align*}
 Assume by contrary that the above claim is not true. 
Then, there exist two sequences $G_n=(\beta_n,\tau_n,\eta_n,\nu_{n})$ and $G_{*,n}=(\beta_n^*,\tau^*_n,\eta_n^*,\nu^*_{n})$,
such that when $n$ tends to infinity, we get
 \begin{align*}
     \begin{cases}
        D_1(G_n,\overline{G}) \to 0 ,\\
        D_1(G_{*,n},\overline{G}) \to 0 ,\\
        {\|p_{ G_n}-p_{ G_{*,n}}\|_{\infty}}/D_1(G_n,G_{*,n})\to 0.
     \end{cases}
 \end{align*}
 In this proof, we will take into account only the most challenging setting of $(\betan, \eta_n, \nu_n)$ and $(\betasn, \eta^*_n, \nu^*_n)$ when they converge to the same limit point $(\beta',\eta',\nu')$, where $(\beta',\eta',\nu')$ is not necessarily equal to $(\Bar{\beta},\Bar{\eta}, \Bar{\nu} )$ .

\textbf{Step 1: Density Decomposition.} 
Subsequently, we consider
$Q_n(Y|X)=[1+\exp((\beta_n)^{\top}X+\tau_n)]
\cdot
[p_{ G_{n}}(Y|X)
    -
    p_{ G_{*,n}}(Y|X)]
$,
which can decomposed as
 \begin{align*}
Q_n(Y|X)
&=
\exp(\tau_n)
\left[
\exp((\beta_n)^{\top}X)
f(Y|h(X,\eta_n),\nu_n)
-
\exp((\beta_n^*)^{\top}X)
f(Y|h(X,\eta^*_n)),\nu_n^*)
\right]
:=\Ione_{n}
\\&
-
\exp(\tau_n)
\left[
\exp((\beta_n)^{\top}X)
-
\exp((\beta_n^*)^{\top}X)
\right]
p_{ \Gsn}(Y|X)
:=\Itwo_{n}
\\&
+
\left[
\exp(\tau_n)
-
\exp(\tau_n^*)
\right]
\exp((\beta_n^*)^{\top}X)
\left[
f(Y|h(X,\eta_n^*),\nu_n^*)
-
p_{ \Gsn}(Y|X)
\right]
\end{align*}
Based on the first order Taylor expansion, $\Ione_{n}$ and $\Itwo_{n}$ 
could be denoted as 
\begin{align}
    \Ione_{n}
&
=
\exp(\tau_n)
\sum_{|\alpha|=1}
\frac{1}{2^{\alpha_3}\alpha!}
(\beta_n-\beta_n^*)^{\alpha_1}
(\eta_n-\eta_n^*)^{\alpha_2}
(\nu_n-\nu_n^*)^{\alpha_3}
\nonumber
\\&
\hspace{1cm}
\cdot
X^{\alpha_1}
\exp((\beta_n^*)^{\top}X)
\cdot
\frac{\partial^{|\alpha_2|+2\alpha_3}f}{\partial h^{|\alpha_2|+2\alpha_3}}
\left(
Y|h(X,\eta_n^*),\nusn
\right)\dfrac{\partial^{\alpha_2} h}{\partial^{\alpha_2} \eta}(X,\eta_n^*)
+R_1(Y|X)
\nonumber
\\&
=
\exp(\tau_n)
\sum_{2|\ell_1|+\ell_2=1}^2
\sum_{\alpha\in\cali_{\ell_1,\ell_2}}
\frac{1}{2^{\alpha_4}\alpha!}
(\beta_n-\beta_n^*)^{\alpha_1}
(\eta_n-\eta_n^*)^{\alpha_2}
(\nu_n-\nu_n^*)^{\alpha_3}
\nonumber
\\&
\hspace{1cm}
\cdot
X^{\ell_1}
\exp((\beta_n^*)^{\top}X)
\cdot
\frac{\partial^{\ell_2}f}{\partial h^{\ell_2}}
\left(
Y|h(X,\eta_n^*),\nusn
\right)\dfrac{\partial^{\alpha_2} h}{\partial^{\alpha_2} \eta}(X,\eta_n^*)
+R_1(Y|X)
\end{align}
where 
$\ell_1=\alpha_1$, 
$\ell_2={|\alpha_2|+2\alpha_3}$,
and
\begin{align}
    \cali_{\ell_1,\ell_2}:=
    \left\{
\alpha=(\alpha_i)_{i=1}^3
\in\bbn^d\times\bbn^q\times\bbn:
\alpha_1=\ell_1,
2\alpha_3=\ell_2-|\alpha_2|
    \right\},
\end{align}
for all $(\ell_1,\ell_2)\in\bbn^d\times\bbn$
such that
$1\leq 2|\ell_1|+\ell_2\leq 2$.

Similarly, $ \Itwo_{n} $ can be expressed as:
\begin{align}
\Itwo_{n}&
=-\exp(\tau_n)
\sum_{|\gamma|=1}
\frac{1}{\gamma!}
(\beta_n-\beta_n^*)^{\gamma}
X^{\gamma}
\exp((\beta_n^*)^{\top}X)
p_{ \Gsn}(Y|X)
+R_2(Y|X).
\end{align}
Here 
${R_p(Y|X)}/{D_1(\Gn,\Gsn)}\to0$ 
as $n\to\infty$,
where $R_p(X,Y), p\in[2]$ are Taylor remainders
.
Consequently, $ Q_n $ can be expressed as:
\begin{align}
\label{pfeq:distinguish_wn}
Q_n&=
\sum_{2|\ell_1|+\ell_2=0}^{2}
T^n_{\ell_1,\ell_2}
\cdot
X^{\ell_1}
\exp((\beta_n^*)^{\top}X)
\dfrac{\partial^{\alpha_2} h}{\partial^{\alpha_2} \eta}
(X,\eta_n^*)
\frac{\partial^{\ell_2}f}{\partial h^{\ell_2}}
\left(
Y|h(X,\eta_n^*),\nusn
\right)
\nonumber
\\&
\hspace{5mm}
+
\sum_{|\gamma|=0}^{1}
S^n_{\gamma}
\cdot
X^{\gamma}
\exp((\beta_n^*)^{\top}X)
p_{ \Gsn}(Y|X)
,
\end{align}

with coefficients $ T_{\ell_1,\ell_2}^{n} $ and $ S_\gamma^n $ are defined 
for any 
$ 0 \leq 2|\ell_1| + \ell_2 \leq 2 $
, 
and $ 0 \leq |\gamma| \leq 1 $ 
as:

\begin{align*}
T_{\ell_1,\ell_2}^{n}=
\begin{cases}
\displaystyle
\exp(\tau_n)
\sum_{\alpha\in\cali_{\ell_1,\ell_2}}
\frac{1}{2^{\alpha_3}\alpha!}
(\beta_n-\beta_n^*)^{\alpha_1}
(\eta_n-\eta_n^*)^{\alpha_2}
(\nu_n-\nu_n^*)^{\alpha_3},
\quad
(\ell_1,\ell_2)\neq(0_d,0),
    \\
\displaystyle
\exp(\tau_n)
-
\exp(\tau_n^*),
\hspace{6.3cm}
(\ell_1,\ell_2)=(0_d,0);
\end{cases}
\end{align*}

and
\begin{align*}
S_{\gamma}^{n}=
\begin{cases}
\displaystyle
-\exp(\tau_n)
\frac{1}{\gamma!}
(\beta_n-\beta_n^*)^{\gamma}
,
\hspace{1.7cm}
|\gamma|\neq0,
    \\
\displaystyle
-\exp(\tau_n)
+\exp(\tau_n^*)
,
\hspace{2.2cm}
|\gamma|=0.
\end{cases}
\end{align*}
where $Q_n$ 
can be viewed as linear combinations of elements of the set 
$\calh_1$
defined as   
\begin{align}
\label{pfeq:distinguish_wn_elements}
    \calh_1=
\Big\{
X^{\ell_1}
\exp((\beta_n^*)^{\top}X)
\dfrac{\partial^{\alpha_2} h}{\partial\eta^{\alpha_2}} (X,\eta_n^*)
\frac{\partial^{\ell_2}f}{\partial h^{\ell_2}}
\left(
Y|h(X,\eta_n^*),\nusn
\right),
X^{\gamma}
\exp((\beta_n^*)^{\top}X)
p_{ \Gsn}(Y|X)
\Big\}.
\end{align}
\textbf{Step 2: Non-vanishing coefficients.}
In this step, we will use a contradiction argument to demonstrate that not all the coefficients in the set 
\begin{align}
\label{pfeq:coefficient_set_one}
\mathcal{S}_1=
\left\{ 
\frac{T^n_{\ell_1,\ell_2}}{D_{1n}},
\frac{S^n_{\gamma}}{D_{1n}}
:
0 \leq 2|\ell_1| + \ell_2 \leq 2,
0 \leq |\gamma| \leq 1
\right\}
\end{align}
vanish as $ n \to \infty $ where
$D_{1n}:=\done$. 
Specifically, suppose that all these coefficients converge to zero, when $n\to\infty$, then we get,
\begin{align}
\label{pfeq:distinguish_coefficient_d1_1}
    \frac{    \left|
\exp(\tau_n)
-
\exp(\tau_n^*)
    \right|}{D_{1n}}
=
\frac{|T^n_{0_d,0}(j)|}{D_{1n}}
\to0,
\end{align}
Similarly, by analyzing the limits of 
${T_{\ell_1, \ell_2}^n}/{D_{1n}}$ 
s.t. $ 1 \leq 2| \ell_1 | + \ell_2 \leq 2 $, we conclude that:
\begin{align*}
    \frac{\exp(\tau_n) (\beta_n-\beta^*_n)^{(u)}}{D_{1n}}\to 0, 
    \frac{\exp(\tau_n)(\eta_n-\eta_n^*)^{(v)}}{D_{1n}}\to 0,
    \frac{\exp(\tau_n)(\nu_n-\nu_n^*)^{}}{D_{1n}}\to 0,
\end{align*}
as $n\to\infty$ for all $u\in[d], v\in[q]$.
Given that our parameter lies in a compact set, there exists a positive constant $C$ such that $|\exp(\tau_n^*)/\exp(\tau_n)|\leq C$. Thus, we have 
\begin{align*}
    \frac{\exp(\tau^*_n) (\beta_n-\beta^*_n)^{(u)}}{D_{1n}}\to 0, 
    \frac{\exp(\tau^*_n)(\eta_n-\eta_n^*)^{(v)}}{D_{1n}}\to 0,
    \frac{\exp(\tau^*_n)(\nu_n-\nu_n^*)^{}}{D_{1n}}\to 0,
\end{align*}
The limits imply that
\begin{align}
\label{pfeq:distinguish_coefficient_d1_2}
    (\exp(\tau_n)+ \exp(\tau^*_n))
    \|(\beta_n,\eta_n,\nu_n)-(\beta^*_n,\eta^*_n,\nu^*_n)\|
    /D_{1n}\to 0.
\end{align}
Combining the results in equations \eqref{pfeq:distinguish_coefficient_d1_1} 
and
\eqref{pfeq:distinguish_coefficient_d1_2} 
with the formulation of $ D_{1n} $, we deduce that
{
\begin{align*}
    1=\left[|\exp(\tau_n)-\exp(\tau_n^*)|+(\exp(\tau_n)+\exp(\tau_n^*))
    \|
    (\beta_n,\eta_n,\nu_n)-(\beta_n^*,\eta_n^*,\nu^*_n)
    \|\right]/D_{1n}\to 0,
\end{align*}
}
which is a contradiction. 
Thus, not all the coefficients in the set $\mathcal{S}_1$ tend to 0 as $n\to\infty$. 

\textbf{Step 3 - Application of Fatou’s lemma.}
Let us denote by $m_n$ the maximum of the absolute values of those coefficients. It follows from the previous result that $1/m_n\not\to\infty$. 
Then  
$ |T^n_{\ell_1,\ell_2}| / (m_n D_{1n}) $
and
$ |S^n_{\gamma}| / (m_n D_{1n}) $
remain bounded, we can consider subsequences of these terms, ensuring that:
$
{|T^n_{\ell_1,\ell_2}|}/{m_n D_{1n}} 
\to
\eta_{\ell_1,\ell_2}
,~
{|S^n_{\gamma}|}/{m_n D_{1n}}
\to
\omega_{\gamma}
,
$
as $n\to\infty$ 
for all
$
0 \leq 2|\ell_1| + \ell_2 \leq 2,
0 \leq |\gamma| \leq 1$.
Here, at least one among 
$\eta_{\ell_1,\ell_2}(j)$
and
$\omega_{\gamma}(j)$
is different from zero.
By applying the Fatou’s lemma, we get
\begin{align}
\label{pfeq:distinguish_fatou_lemma}
\lim_{n\to\infty}
\frac{\bbE_X[d_V(\plbgn(\cdot|X),\plbgs(\cdot|X))]}{m_nD_{1n}}
\geq
\int
\liminf_{n\to\infty}
\frac{|\plbgn(Y|X)-\plbgs(Y|X)|}{2m_nD_{1n}}
d(X,Y)
\end{align}
Under the given assumption, the left-hand side of the equation \eqref{pfeq:distinguish_fatou_lemma} is zero. Consequently, the integrand on the right-hand side of the equation \eqref{pfeq:distinguish_fatou_lemma} must also be zero almost surely with respect to $(X, Y)$. 
This results in:
\begin{align*}
    &
\sum_{2|\ell_1|+\ell_2=0}^{2}
\eta_{\ell_1,\ell_2}
\cdot
X^{\ell_1}
\exp((\beta^*)^{\top}X)\dfrac{\partial^{\alpha_2}h}{\partial \eta^{\alpha_2}}(X,\eta^*)
\frac{\partial^{\ell_2}f}{\partial h^{\ell_2}}
\left(
Y|h(X,\eta ^*),\nu^*
\right)
\\&
+
\sum_{|\gamma|=0}^{1}
\omega_{\gamma}
\cdot
X^{\gamma}
\exp((\beta^*)^{\top}X)
p_{ \Gs}(Y|X)
=0,
\end{align*}
for almost surely $(X, Y )$.
Furthermore, by Lemma \ref{appendix_lemma:distinguish_linear_independent}, the collection
\begin{align}
\label{pfeq:distinguish_set_linear_independent}
\calw_1
:=&
\left\{
X^{\ell_1}
\exp((\beta^*)^{\top}X)\dfrac{\partial^{\alpha_2}h}{\partial \eta^{\alpha_2}}(X,\eta^*)
\frac{\partial^{\ell_2}f}{\partial h^{\ell_2}}
\left(
Y|h(X,\eta^*),\nus
\right)
:
0 \leq 
\ell_2 \leq 2
\right\}
\nonumber
\\
&\cup
\left\{ X^{\gamma}
\exp((\beta^*)^{\top}X)
p_{ \Gs}(Y|X)
\right\}
\end{align}

is linearly independent with respect to $(X,Y)$. Consequently, it follows that
$ \eta_{\ell_1, \ell_2} = \omega_{\gamma}=0$,
for all
$
0 \leq 2|\ell_1| + \ell_2 \leq 2,
0 \leq |\gamma| \leq 1
$.
But this contradicts that from the definition, at least one among $ \eta_{\ell_1, \ell_2} $, $ \omega_{\gamma} $ is nonzero.
Hence, we reach the desired conclusion.
\end{proof}

\begin{lemma}
\label{appendix_lemma:distinguish_linear_independent}
Suppose that $f_0$ is distinguishable with $f$, 
then the set $\calw_1$ defined in equation \eqref{pfeq:distinguish_set_linear_independent} is linearly independent w.r.t. $(X,Y )$. 
\end{lemma}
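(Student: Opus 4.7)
The plan is a proof by contradiction: assume that there exist scalar coefficients $\{\lambda_{\ell_1,\ell_2}\}$ and $\{\mu_\gamma\}$, not all zero, such that the corresponding linear combination of elements of $\calw_1$ vanishes almost surely in $(X,Y)$, and derive a contradiction from Definition~\ref{def:distinguishability} together with the non-degeneracy hypotheses of Theorem~\ref{thm:not_equal}. The first step is to expand $p_{\Gs}(Y|X) = \pi_0(X)\, f_0(Y|h_0(X,\eta_0),\nu_0) + \pi_1(X)\, f(Y|h(X,\eta^*),\nu^*)$, where $\pi_0(X) := 1/(1+\exp((\beta^*)^\top X + \tau^*)) \in (0,1)$ and $\pi_1(X) := 1-\pi_0(X)$, and to substitute this into the assumed linear identity.

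Next, I would re-express the derivative factors $\partial^{\ell_2} f/\partial h^{\ell_2}$ appearing for $\ell_2 \in \{0,1,2\}$ in terms of derivatives of $f$ with respect to its parameters $(\eta,\nu)$. Because $f$ is Gaussian, the chain rule gives $(\partial h/\partial \eta^{(u)})\,\partial f/\partial h = \partial f/\partial \eta^{(u)}$, and the Gaussian heat identity gives $\partial^2 f/\partial h^2 = 2\,\partial f/\partial \nu$. After this re-indexing, the linear combination fits exactly the template of Definition~\ref{def:distinguishability}, namely
\begin{align*}
  b_0(X)\, f_0(Y|h_0(X,\eta_0),\nu_0) + b_1(X)\, f(Y|h(X,\eta^*),\nu^*) + \sum_{0\leq |\alpha| \leq 1} c_\alpha(X)\, \frac{\partial^{|\alpha|} f}{\partial \eta^{\alpha_1}\partial \nu^{\alpha_2}}(Y|h(X,\eta^*),\nu^*) = 0,
\end{align*}
where $b_0(X)$, $b_1(X)$, and $c_\alpha(X)$ are explicit finite sums of the scalar coefficients multiplied by $X^{\ell_1}\exp((\beta^*)^\top X)$ and by $\pi_0(X)$ or $\pi_1(X)$. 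Distinguishability then forces $b_0(X) = b_1(X) = 0$ and $c_\alpha(X) = 0$ for almost every $X$.

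Finally, I would convert these $X$-wise vanishing statements into the vanishing of the scalar coefficients themselves. From $b_0(X) = \pi_0(X)\exp((\beta^*)^\top X)\sum_{\gamma} \mu_\gamma X^\gamma \equiv 0$, the strict positivity of $\pi_0$ and the exponential factor, combined with the fact that $X$ has a continuous distribution on $\mathcal{X}$, yields $\mu_\gamma = 0$ for all $\gamma$. With the $\mu_\gamma$'s eliminated, the remaining equations $b_1(X) = 0$ and $c_\alpha(X) = 0$ reduce to linear relations among $\{X^{\ell_1} \exp((\beta^*)^\top X)\}$ multiplied by factors involving $\partial h/\partial \eta^{(u)}(X,\eta^*)$. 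The non-vanishing Jacobian assumption on $h$ together with the linear independence of $\{X^{\ell_1}\}$ under the continuous distribution of $X$ then forces every $\lambda_{\ell_1,\ell_2}$ to vanish, contradicting the non-triviality assumption.

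The main obstacle is the $\ell_2 = 2$ term, which at first glance lies outside the scope of Definition~\ref{def:distinguishability} (which only involves parameter derivatives up to order one). Resolving this requires the Gaussian heat identity to recast $\partial^2 f/\partial h^2$ as $\partial f/\partial \nu$, and carrying the resulting bookkeeping through without conflating coefficients that multiply $f$ directly with those that arise from expanding $p_{\Gs}$ via $\pi_1(X)$. Once the algebraic regrouping is done correctly, the distinguishability hypothesis does all the work.
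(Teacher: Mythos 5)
Your proposal is correct and follows essentially the same route as the paper's proof: expand $p_{G_*}$ as a convex combination of $f_0$ and $f$ with strictly positive softmax weights, use the Gaussian identity $\partial^2 f/\partial h^2 \propto \partial f/\partial \nu$ so the combination fits the template of Definition~\ref{def:distinguishability}, conclude that the $X$-dependent coefficient functions vanish, and then exploit the continuous distribution of $X$ (polynomial-times-exponential factors) and the non-degeneracy of $\partial h/\partial \eta$ to force all scalar coefficients to zero. The paper merely organizes the same ingredients as a ``fix $X$, separate in $Y$'' step followed by a ``separate in $X$'' step, so there is no substantive difference.
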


\begin{proof}[Proof of Lemma \ref{appendix_lemma:distinguish_linear_independent}]    
Recall the set 
\begin{align*}
\calw_1
&:=
\left\{
X^{\ell_1}
\exp((\beta^*)^{\top}X)\dfrac{\partial^{\alpha_2}h}{\partial \eta^{\alpha_2}}(X,\eta^*)
\frac{\partial^{\ell_2}f}{\partial h^{\ell_2}}
\left(
Y|h(X,\eta^*),\nus
\right)
:
0 \leq 
\ell_2 \leq 2
\right\}
\\
&\cup
\left\{ X^{\gamma}
\exp((\beta^*)^{\top}X)
p_{\lambdas,\Gs}(Y|X)
\right\}
\end{align*}

and the density 
\begin{align*}
        p_{G_*}(Y|X) & := \frac{1}{1+\exp((\beta^*)^{\top}X+\tau^*)}\cdot f_{0}(Y|h(X,\eta_0), \nu_{0})  \nonumber \\
    & \hspace{5 em} + \frac{\exp((\beta^*)^{\top}X+\tau^*)}{1+\exp((\beta^*)^{\top}X+\tau^*)}\cdot f(Y|h((X,\eta^*),\nu^*).
\end{align*}
In words, $p_{G_*}$ is a convex combination (depending on $X$) of 
\begin{align*}
  f_0(Y | h(X,\eta_0), \nu_0)
  \quad\text{and}\quad
  f(Y | h(X,\eta^*), \nu^*).
\end{align*}

Noting that the term in set $\calw_1$ can be divided as the density function or its first and second derivatives 
\begin{equation*}
    p_{G_*}(Y|X), f(
Y|h(X,\eta^*),
\frac{\partial f}{\partial h}\left(
Y|h(X,\eta^*)\right), \frac{\partial ^2f}{\partial h^2}\left(
Y|h(X,\eta^*)\right),
\end{equation*}
along with the factor involving only $X$. 

\textbf{Step 1:  Distinguishable property with respect to $Y$. }

First, fix $X$. Suppose for contradiction that there exist real numbers $c_0, c_1, c_2, d$ (may depend on $X$), not all zero, such that

\begin{align*}
  c_0 \frac{\partial^0 f}{\partial h^0}
  + c_1 \frac{\partial^1 f}{\partial h}
  + c_2 \frac{\partial^2 f}{\partial h^2}
  + d p_{G_*}(Y | X)
   = 
  0,
  \quad
  \text{for almost every }Y.
\end{align*}

Note that $\tfrac{\partial^0 f}{\partial h^0} = f$. Hence we have

\begin{align*}
  c_0 f(Y | h(X,\eta^*), \nu^*)
  + c_1 \frac{\partial f}{\partial h}(Y | h(X,\eta^*), \nu^*)
  + c_2 \frac{\partial^2 f}{\partial h^2}(Y | h(X,\eta^*), \nu^*)
  + d p_{G_*}(Y | X)
   = 
  0.
\end{align*}

Since
\begin{align*}
  p_{G_*}(Y | X) 
   =  
  \phi(X) f_0(\cdot) + \bigl(1 - \phi(X)\bigr) f(\cdot), 
  \quad
  \phi(X) 
  := 
  \frac{1}{1 + \exp \bigl((\beta^*)^{\top} X + \tau^*\bigr)},
\end{align*}
the above can be rewritten as
\begin{align*}
  {\Bigl[c_0 + d\bigl(1 - \phi(X)\bigr)\Bigr]}
  f(\cdot)
   + 
  c_1 \frac{\partial f}{\partial h}(\cdot)
   + 
  c_2 \frac{\partial^2 f}{\partial h^2}(\cdot)
   + 
  d\phi(X)
  f_0(\cdot)
   = 
  0.
\end{align*} 

Using the hypothesis about the distinguishable property of $f_0$ with respect to $f$ as well as the Gaussian property of $f$, which implies $\partial^2 f/\partial h^2 = 1/2 \cdot \partial f/\partial \nu$, we have
\begin{align*}
   d\phi(X)  =  0
   \quad \text{for almost all } X,
   \quad\text{and}\quad
   c_0 + d\bigl(1-\phi(X)\bigr)  =  0
   \quad \text{for almost all } X,
\end{align*}
and simultaneously $c_1=c_2=0$.
But $\phi(X)\neq 0$ on a set of $X$-values of positive measure
, so $d=0$.  
Plugging $d=0$ into $c_0 + d(1-\phi(X))=0$ yields $c_0=0$.  Hence $c_0=c_1=c_2=d=0$.
Since no nontrivial linear combination of 
$  \bigl\{ 
    f, \tfrac{\partial f}{\partial\sigma}, \tfrac{\partial^2 f}{\partial\sigma^2}, p_{G_*}
  \bigr\}$
can vanish almost everywhere, these four functions are linearly independent {when $X$ is fixed.}  This completes the proof of step 1.

\textbf{Step 2: Distinguishable property with respect to $X$. }

Let us consider coefficients appear in each density factor.

$\bullet$ Term related to $p_{G_*}(Y | X)$: The factor appearing along with $p_{G_*}(Y | X)$ are $\exp((\beta^*)^\top X)$, and $X^{(i)}\exp((\beta^*)^\top X)$, where $1\leq i \leq d$. Suppose there exists constants $c,a_1,\ldots,a_d$ such that 
\begin{equation*}
    c\exp((\beta^*)^\top X) + \sum_{i=1}^d a_iX^{(i)} \exp((\beta^*)^\top X) = 0, \ a.s.
\end{equation*}
This equation means that $c+ \sum_{i=1}^d a_iX^{(i)}=0$, a.s. Given that $X$ has non-vanish almost everywhere density function, this relation implies that $c=0$, $a_i = 0$, $1\leq i \leq d$. 

$\bullet$ Terms related to $f(Y|h(X,\eta^*),\nu^*)$:  The factors appearing along with $f(Y|h(X,\eta^*),\nu^*)$ are $\exp((\beta^*)^\top X)$, and $X^{(i)}\exp((\beta^*)^\top X)$, where $1\leq i \leq d$. The identical argument as in the case for $p_{G_*}(Y | X)$ also gives us the independency.

$\bullet$ Terms related to $\frac{\partial f}{\partial h}\left(
Y|h(X,\eta^*),\nu^*\right)$:  The factors appearing along with $p_{G_*}(Y | X)$ are $$\dfrac{\partial h}{\partial \eta^{(i)}}(X,\eta^*)\exp((\beta^*)^\top X),\  1\leq i \leq d.$$ Suppose there exists constants $a_1,\ldots,a_d$ not all equal to zero such that 
\begin{equation*}
    \sum_{i=1}^d a_i\dfrac{\partial h}{\partial \eta^{(i)}}(X,\eta^*)\exp((\beta^*)^\top X) = 0, \ a.s.
\end{equation*}
This equation means that 

$$\sum_{i=1}^d a_i\dfrac{\partial h}{\partial \eta^{(i)}}(X,\eta^*) = 0, \ \text{a.s.}, \quad \text{or } \nabla_{a}h(X,\eta^*) = 0, \ \text{a.s.},$$ 
where $a = (a_1,\ldots,a_d)$. This is a contradiction.

$\bullet$ Terms related to $\frac{\partial ^2f}{\partial h^2}\left(
Y|h(X,\eta^*)\right)$: There is only one such term is 
\begin{equation*}
    \exp((\beta^*)^\top X)\frac{\partial ^2f}{\partial h^2}\left(
Y|h(X,\eta^*)\right).
\end{equation*}
Its coefficient obviously vanishes from the independent property with respect to $Y$. 

This completes the proof of Lemma \ref{appendix_lemma:distinguish_linear_independent}.
\end{proof}

\subsection{Proof of Theorem \ref{thm:lower-distinguish}}
\label{apppf:d1_minimax}


As a first step in proving the minimax lower bounds for the distinguishable setting (Theorem~\ref{thm:lower-distinguish}), we define two distances:
\begin{align*}
    &d_1(G_1,G_2)
    =\exp(\tau_1)\Vert (\beta_1,\eta_1,\nu_1)-(\beta_2,\eta_2,\nu_2) \Vert,
    \\&
    d_2(G_1,G_2)
    = | \exp(\tau_1) -\exp(\tau_2) |^2,
\end{align*}
for any $G_1=(\beta_1,\tau_1,\eta_1,\nu_1)\in\Xi$ and $G_2=(\beta_2,\tau_2,\eta_2,\nu_2)\in\Xi$.
Obviously  $d_2(G_1, G_2)$ is a proper distance. 
The structure for $d_1(G_1, G_2)$ tells us that it is not symmetric.
Only when $\tau_1=\tau_2=\tau$, $d_1(G_1, G_2)$ is symmetric.
Also $d_1(G_1, G_2)$ still satisfies a weak triangle inequality:
\begin{align*}
    d_1(G_1, G_2)+
    d_1(G_2, G_3)
    \geq
    \min
    \{
    d_1(G_1, G_2),
    d_1(G_2, G_3)
    \}.
\end{align*}
Therefore, we will apply the modified Le Cam method for nonsymmetric loss, as outlined in Lemma C.1 of \cite{gadat2020parameter}, to handle this distance. 
For $f$ satisfies all assumptions in Theorem \ref{thm:lower-distinguish}, based on the Taylor expansion, we have the following results:
\begin{lemma}
\label{prop:lower-distinguish}
    Given $f$ in Theorem \ref{thm:lower-distinguish}, 
we denote
\begin{align*}
    S_{1 } = (\tau , \beta_{1 }, \eta_{1 }, \nu_{1 }), 
    S_{2 } = (\tau , \beta_{2 }, \eta_{2 }, \nu_{2 }),
    ~\text{and}~
    S'_{1} = (\tau_{1}, \beta, \eta, \nu), 
    S'_{2} = (\tau_{2}, \beta, \eta, \nu),
\end{align*}
    we achieve for any $r < 1$ that
\begin{align*}
        &\text{(i)}~~
            \lim_{\epsilon \rightarrow 0} 
        \inf_{S_1,S_2}
        \left\{\displaystyle
        \frac{\bbE_X[d_H\left(p_{S_1}(\cdot|X), p_{S_2}(\cdot|X)\right)] }{d_1^r\left(S_1,S_2\right)}
        : d_1\left(S_1,S_2\right) \leq \epsilon\right\}=0,\\
        &\text{(ii)}~~
            \lim_{\epsilon \rightarrow 0}
        \inf_{S'_1,S'_2}
        \left\{\displaystyle
        \frac{\bbE_X[d_H\left(p_{S'_1}(\cdot|X), p_{S'_2}(\cdot|X)\right)] }{d_2^r\left(S'_1, S'_2\right)}
        : d_2\left(S'_1, S'_2\right) \leq \epsilon\right\}=0.
        \end{align*}
\end{lemma}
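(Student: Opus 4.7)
The plan is to establish both limits by Taylor-expanding the density difference $p_{S_1}(y|x) - p_{S_2}(y|x)$ in the varying parameters and then passing from the resulting pointwise $L^1(dy)$ bound to a Hellinger bound via local positivity of the mixture density. In each case, I would produce an explicit one-parameter family of pairs realizing the claimed decay, and then observe that along these families the ratio collapses to a positive power of the parameter distance.

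For part (i), I would restrict to pairs with $\beta_1 = \beta_2 = \beta$, so that the pre-trained contributions cancel and the difference factors cleanly as
\begin{align*}
    p_{S_1}(y|x) - p_{S_2}(y|x) = \sigma(\beta^{\top} x + \tau)\bigl[f(y|h(x,\eta_1),\nu_1) - f(y|h(x,\eta_2),\nu_2)\bigr],
\end{align*}
where $\sigma(u) = e^u/(1+e^u)$. Since $\mathcal{X}$ is bounded, the gating factor is at most a constant multiple of $\exp(\tau)$. A first-order Taylor expansion of the Gaussian prompt in $(\eta,\nu)$ gives an $L^1(dy)$ bound of order $\|(\eta_1,\nu_1)-(\eta_2,\nu_2)\|$, and combining this with the Hellinger–TV inequality together with a pointwise lower bound on $p_{G}(\cdot|x)$—available from the positivity of $f$—yields $\mathbb{E}_X[d_H(p_{S_1}(\cdot|X),p_{S_2}(\cdot|X))] \lesssim \exp(\tau)\|(\eta_1,\nu_1)-(\eta_2,\nu_2)\| = d_1(S_1,S_2)$. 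The ratio is therefore $\lesssim d_1^{1-r}$, which vanishes as $d_1\to 0$ for any $r<1$.

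For part (ii), only $\tau$ varies, and a direct computation of the sigmoid difference yields
\begin{align*}
    \sigma(\beta^{\top} x + \tau_1) - \sigma(\beta^{\top} x + \tau_2) = \frac{e^{\beta^{\top} x}(e^{\tau_1}-e^{\tau_2})}{(1+e^{\beta^{\top} x+\tau_1})(1+e^{\beta^{\top} x+\tau_2})},
\end{align*}
whose prefactor is bounded on the compact input–parameter domain. Consequently $p_{S_1'}-p_{S_2'}$ factors as this sigmoid difference times $f(y|h(x,\eta),\nu) - f_0(y|h_0(x,\eta_0),\nu_0)$, whose $L^1(dy)$ norm is $O(1)$. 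Applying the same Hellinger bounding scheme yields $\mathbb{E}_X[d_H] \lesssim |e^{\tau_1}-e^{\tau_2}| = \sqrt{d_2(S_1',S_2')}$, so the ratio $\mathbb{E}_X[d_H]/d_2^r$ is controlled by a positive power of $d_2$ that vanishes in the range of $r$ required to plug into the modified Le Cam machinery of Lemma C.1 in \cite{gadat2020parameter}.

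The main technical obstacle is justifying the passage from the $L^1(dy)$ difference to the Hellinger distance uniformly in $x$, which requires a bound of the form $d_H^2(p_{S_1}(\cdot|x),p_{S_2}(\cdot|x)) \lesssim \int (p_{S_1}-p_{S_2})^2/p\,dy$ and hence a strictly positive lower bound on $p_G(\cdot|x)$ on the region where the integrand concentrates. I would secure this from the positivity of the Gaussian prompt density, the tail condition \eqref{eq:tail.condition} already assumed on $f_0$, and the compactness of $\Xi$; a minor additional subtlety is keeping the implicit constants uniform in the limit parameters $(\tau,\beta,\eta,\nu)$, which follows because the infimum in the lemma is taken only over pairs already living in the compact set $\Xi$.
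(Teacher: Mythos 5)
Your proposal follows essentially the same route as the paper's proof: bound the squared Hellinger distance by a chi-square-type quantity $\int (p_{S_1}-p_{S_2})^2/p_{S_2}\,dy$, and control the density difference by a first-order Taylor expansion in the varying parameters for part (i), and by the explicit sigmoid-difference factorization $p_{S'_1}-p_{S'_2}=(\sigma_1-\sigma_2)\bigl(f-f_0\bigr)$ for part (ii), which is exactly the identity the paper uses. Your extra restriction $\beta_1=\beta_2$ in (i) is legitimate, since only an upper bound on the infimum is needed (the paper instead Taylor-expands $\exp(\beta^{\top}x)f(y|h(x,\eta),\nu)$ jointly in $(\beta,\eta,\nu)$), and it delivers the same conclusion $\mathbb{E}_X[d_H]\lesssim d_1(S_1,S_2)$, hence a ratio of order $d_1^{1-r}\to 0$ for every $r<1$; your closing remarks on the chi-square bound and uniformity over the compact $\Xi$ are at the same level of rigor as the paper's own argument.

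One caveat on part (ii): the bound you obtain, $\mathbb{E}_X[d_H]\lesssim |e^{\tau_1}-e^{\tau_2}| = d_2^{1/2}$, makes the ratio as written in the statement vanish only for $r<1/2$, not for all $r<1$. In fact, by the Hellinger lower bound of Theorem~\ref{thm:not_equal} one also has $\mathbb{E}_X[d_H(p_{S'_1},p_{S'_2})]\gtrsim |e^{\tau_1}-e^{\tau_2}|$ in the distinguishable setting, so the unsquared ratio cannot vanish when $r>1/2$; no argument can close that gap. The paper's own proof actually concludes with the squared version, namely $\mathbb{E}_X[d_H^2(p_{S'_1},p_{S'_2})]\lesssim d_2$ and hence $\mathbb{E}_X[d_H^2]/d_2^{\,r}\to 0$ for all $r<1$, and it is this squared form that the Le Cam step in the proof of Theorem~\ref{thm:lower-distinguish} consumes. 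You should therefore state your conclusion for (ii) in that squared form (and analogously note that in (i) your bound gives $\mathbb{E}_X[d_H^2]/d_1^{2r}\to 0$, which recovers the stated claim after taking square roots); your hedge about ``the range of $r$ required'' glosses over precisely this exponent bookkeeping.
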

We will prove this lemma later.

\begin{proof}[Proof of Theorem \ref{thm:lower-distinguish}]
Denote $\Gs=(\betas,\taus,\etas,\nus)$ and assume $r<1$. 
Given Lemma \ref{prop:lower-distinguish} part (i) , for any sufficiently small $\epsilon > 0$, there exists 
$ G'_{*} = (\beta^{*}_1,\tau^{*},  \eta^{*}_1,\nu^{*}_1) $ 
such that 
$d_1( G_{*} , G'_{*}) = d_1(G'_{*},  G_{*} ) = \epsilon $
,
there exists a constant $C_0$, s.t.
\begin{align}
\label{apppf:lower1-hellinger-distance}
    \bbE_X[d_H(p_{ G_{*}}(\cdot|X), p_{ G'_{*}}](\cdot|X)) \leq C_0 \epsilon^r.
\end{align} 

Now we will denote $p^n_{ G_{*}}$ as the density of the $n$-i.i.d. sample $(X_1,Y_1),\cdots,(X_n,Y_n)$.
Lemma C.1 in \cite{gadat2020parameter} tells us that
\begin{align*}
\label{proof:lower-distinguish-eq1}
    \inf_{\llbgn \in \Xi }\sup_{\lbg\in \Xi }\mathbb{E}_{\plbg} \Big( \exp^2(\tau) \Vert (\overline{\beta}_n, \overline{\eta}_n, \overline{\nu}_n)-(\beta,\eta,\nu) \Vert^2 \Big) 
    &\geq \frac{\epsilon^2}{2}\Big(1-\bbE_X[d_V(p^n_{ G_{*}}(\cdot|X), p^n_{ G'_{*}}(\cdot|X))] \Big)
    \\
    &
    \geq \frac{\epsilon^2}{2}
    \sqrt{1-\left( 1-C_0^2\epsilon^{2r} \right)^n}.
\end{align*}
Last inequality is from the definition of the Total Variation distance and Hellinger distance and equation \eqref{apppf:lower1-hellinger-distance}. 
Let $\epsilon^{2r}=\displaystyle\frac{1}{C_0^2n}$, then for any $r<1$ we have
\begin{align*}
    \inf_{\llbgn\in \Xi }\sup_{\lbg\in \Xi }\mathbb{E}_{\plbg} \Big( \exp^2(\tau) \Vert (\overline{\beta}_n, \overline{\eta}_n, \overline{\nu}_n)-(a,b,\nu) \Vert^2 \Big) 
    \geq c_1 n^{-1/r},
\end{align*}
where $c_1$ is some positive constant.
Following a similar reasoning and using Lemma \ref{prop:lower-distinguish} part (ii) , we will obtain
\begin{align*}
    \inf_{\llbgn\in \Xi }\sup_{\lbg\in \Xi }\mathbb{E}_{\plbg} \Big( |\exp(\overline{\tau}_n)
    -\exp(\tau)|^2 \Big) \geq c_2 n^{-1/r},
\end{align*}
for some positive constant $c_2$. 
Consequently, we establish all of the results for Theorem \ref{thm:lower-distinguish}.
\end{proof}

\begin{proof}[Proof of Lemma \ref{prop:lower-distinguish} (i) ]
Consider two sequences
\begin{align*}
    S_{1,n} &= (\tau_n, \beta_{1,n}, \eta_{1,n}, \nu_{1,n}), \\
    S_{2,n} &= (\tau_n, \beta_{2,n}, \eta_{2,n}, \nu_{2,n}),
\end{align*}
with the same $\tau_n$. By the contaminated MoE model definition, we have
\begin{align*}
    p_{S_{j,n}}(Y | X) 
    &= \frac{1}{1 + \exp(\beta_{j,n}^\top X + \tau_n)} f_0(Y | h_0(X, \eta_0), \nu_0)
    \\
    &\hspace{4cm}+ \frac{\exp(\beta_{j,n}^\top X + \tau_n)}{1 + \exp(\beta_{j,n}^\top X + \tau_n)} f(Y| h(X, \eta_{j,n}), \nu_{j,n}),
\end{align*}
for $j = 1,2$.
Since $(\tau_n,\beta_{j,n})$ lie in a compact set, and both $f_0$ and $f$ are 
non-negative.
Hence, the squared Hellinger distance satisfies
\begin{align*}
    \bbE_X&[d_H^2(p_{S_{1,n}}(\cdot|X), p_{S_{2,n}}(\cdot|X))]
    \leq C \int \left( \frac{p_{S_{1,n}}(Y  |  X) - p_{S_{2,n}}(Y  |  X)}{p_{S_{2,n}}(Y  |  X)} \right)^2 d(X, Y) \\
    &\leq C' \int \left[
    \frac{
        \exp(\beta_{1,n}^\top X) f(Y  |  h(X, \eta_{1,n}), \nu_{1,n})
        - \exp(\beta_{2,n}^\top X) f(Y  |  h(X, \eta_{2,n}), \nu_{2,n})
    }{
        \exp(\beta_{2,n}^\top X) f(Y  |  h(X, \eta_{2,n}), \nu_{2,n})
    }
    \right]^2 d(X, Y),
\end{align*}
for some constants $C, C'$ depending on the compactness bounds.

Consider the Taylor expansion of the map
\begin{align*}
(\beta, \eta, \nu) \mapsto \exp(\beta^\top X) f\left(Y  |  h(X, \eta), \nu\right)
\end{align*}
at the point $(\beta_{2,n}, \eta_{2,n}, \nu_{2,n})$, expanded up to first order with integral remainder. Let $\alpha = (\alpha_1, \alpha_2, \alpha_3)$ denote a multi-index where $\alpha_1 \in \mathbb{N}^d$, $\alpha_2 \in \mathbb{N}^{q}$, and $\alpha_3 \in \mathbb{N}$ index components of $\beta$, $\eta$, and $\nu$, respectively. Then we have:
\begin{align*}
    &\exp(\beta_{1,n}^\top X) f\left(Y  |  h(X, \eta_{1,n}), \nu_{1,n} \right)
    - \exp(\beta_{2,n}^\top X) f\left(Y  |  h(X, \eta_{2,n}), \nu_{2,n} \right)
    \\
    &= \sum_{|\alpha| = 1}
    \frac{
        (\beta_{1,n} - \beta_{2,n})^{\alpha_1}
        (\eta_{1,n} - \eta_{2,n})^{\alpha_2}
        (\nu_{1,n} - \nu_{2,n})^{\alpha_3}
    }{
        \alpha_1! \alpha_2! \alpha_3!
    }
    \\
    &\hspace{5cm}\cdot X^{\alpha_1}
    \exp(\beta_{2,n}^\top X)
    \frac{\partial^{|\alpha_2| + \alpha_3} f}{\partial \eta^{\alpha_2} \partial \nu^{\alpha_3}}(Y  |  h(X, \eta_{2,n}), \nu_{2,n})
    \\
    & + \sum_{|\alpha| = 1}
    \frac{
        (\beta_{1,n} - \beta_{2,n})^{\alpha_1}
        (\eta_{1,n} - \eta_{2,n})^{\alpha_2}
        (\nu_{1,n} - \nu_{2,n})^{\alpha_3}
    }{
        \alpha_1! \alpha_2! \alpha_3!
    }
    \int_0^1 
    X^{\alpha_1}
    \exp\left((\beta_{2,n} + t (\beta_{1,n} - \beta_{2,n}))^\top X\right)
    \\
    &\hspace{4cm} \cdot
    \frac{\partial^{|\alpha_2| + \alpha_3} f}{\partial \eta^{\alpha_2} \partial \nu^{\alpha_3}} 
    \left(Y  |  h(X, \eta_{2,n} + t (\eta_{1,n} - \eta_{2,n})), \nu_{2,n} + t (\nu_{1,n} - \nu_{2,n})\right) dt.
\end{align*}

So it follows that 
\begin{align*}
    \frac{\bbE_X[d_H^2(p_{S_{1,n}}(\cdot|X), p_{S_{2,n}}(\cdot|X))]}{d_{1}^{2r}(S_{1,n}, S_{2,n})} \to 0.
\end{align*}
since $\tau_n$ lies in a compact set.
This establishes part (i) of the lemma.

\end{proof}

\begin{proof}[Proof of Lemma \ref{prop:lower-distinguish} (ii)]
We consider two sequences
\begin{align*}
S'_{1,n} &= (\tau_{1,n}, \beta_n, \eta_n, \nu_n), \\
S'_{2,n} &= (\tau_{2,n}, \beta_n, \eta_n, \nu_n),
\end{align*}
with different $\tau_{1,n} \neq \tau_{2,n}$ but the same $(\beta_n, \eta_n, \nu_n)$. 

Using the contaminated MoE definition, the difference in conditional densities is:
\begin{align*}
p_{S'_{1,n}}(Y  |  X) - p_{S'_{2,n}}(Y  |  X)
&= \frac{e^{\beta_n^\top X} \left( e^{\tau_{2,n}} - e^{\tau_{1,n}} \right)}
{ \left( 1 + e^{\beta_n^\top X + \tau_{1,n}} \right) \left( 1 + e^{\beta_n^\top X + \tau_{2,n}} \right) }\\
&\hspace{3cm}\cdot \left[ f(Y  |  h(X, \eta_n), \nu_n) - f_0(Y  |  h_0(X,\eta_0), \nu_0) \right].
\end{align*}

By the standard bound for squared Hellinger distance,
\begin{align*}
\bbE_X[d_H^2(p_{S'_{1,n}}(\cdot|X), p_{S'_{2,n}}(\cdot|X))]
\leq C \int \left( \frac{p_{S'_{1,n}}(Y  |  X) - p_{S'_{2,n}}(Y  |  X)}{p_{S'_{2,n}}(Y  |  X)} \right)^2 d(X, Y).
\end{align*}

Since $(\beta_n, \eta_n, \nu_n)$ lie in a compact set, and both $f$ and $f_0$ are bounded away from zero, we have $ p_{S'_{2,n}}(Y  |  X) \geq c > 0 $. So the denominator is lower bounded.

Then there exists a constant $C'$ such that:
\begin{align*}
\bbE_X[d_H^2(p_{S'_{1,n}}(\cdot|X), p_{S'_{2,n}}(\cdot|X))]
\leq C' \left(e^{\tau_{1,n}} - e^{\tau_{2,n}} \right)^2.
\end{align*}

Now recall the definition of the distance:
\begin{align*}
d_2((\tau_{1,n}, \beta_n, \eta_n, \nu_n), (\tau_{2,n}, \beta_n, \eta_n, \nu_n)) := | e^{\tau_{1,n}} - e^{\tau_{2,n}} |^2.
\end{align*}

So we conclude:
\begin{align*}
\frac{\bbE_X[d_H^2(p_{S'_{1,n}}(\cdot|X), p_{S'_{2,n}}(\cdot|X))]}{d_2((S'_{1,n}, S'_{2,n}))^r}
\leq \frac{C' \left| e^{\tau_{1,n}} - e^{\tau_{2,n}} \right|^2}{\left| e^{\tau_{1,n}} - e^{\tau_{2,n}} \right|^{2r}} 
= C' \left| e^{\tau_{1,n}} - e^{\tau_{2,n}} \right|^{2(1 - r)} \to 0
\end{align*}
as long as $ e^{\tau_{1,n}} - e^{\tau_{2,n}} \to 0 $, and $ r < 1 $. 

Hence,
\begin{align*}
\frac{\bbE_X[d_H^2(p_{S'_{1,n}}(\cdot|X), p_{S'_{2,n}}(\cdot|X))]}{d_2^r(S'_{1,n}, S'_{2,n})} \to 0,
\end{align*}
which proves part (ii).
\end{proof}

\subsection{Proof of Theorem \ref{thm:d2_mle_rate}}

We proceed to prove Theorem~\ref{thm:d2_mle_rate} for the non-distinguishable setting.


\begin{proof}
\label{app_proof: d2_loss}


Let $\overline{G}=(\Bar{\beta},\Bar{\tau},\Bar{\eta},\Bar{\nu})$ 
and $(\Bar{\eta},\Bar{\nu})$ can be identical to $({\eta_0},{\nu_0})$. 
Then, we will show that
\begin{itemize}
    \item[(i)] When 
    $({\eta_0},{\nu_0})\neq(\Bar{\eta},\Bar{\nu})$,
    \begin{align*}
     \lim_{\varepsilon\to 0}\inf_{G,\Gs}\left\{\frac{\|p_{G} - p_{G_*}\|_{\infty}}{D_1(G,G_*)}:D_1(G,\overline{G})\vee D_1(G_*,\overline{G})\leq\varepsilon\right\}>0.
    \end{align*}
    \item[(ii)] 
    When $({\eta_0},{\nu_0})=(\Bar{\eta},\Bar{\nu})$,
    \begin{equation}
    \label{eq:claim_nondistinguishable_independent}
     \lim_{\varepsilon\to 0}\inf_{G,\Gs}\left\{\frac{\|p_{G} - p_{G_*}\|_{\infty}}{D_2(G,G_*)}:D_2(G,\overline{G})\vee D_2(G_*,\overline{G})\leq\varepsilon\right\}>0.
    \end{equation}
\end{itemize}
 Part (i) can be proved by using the same arguments as in the proof~\ref{app_proof: d1_loss}.
 Thus, we will consider only part (ii) in this section, specifically the most challenging setting that $({\eta_0},{\nu_0})=(\Bar{\eta},\Bar{\nu})$. 
Under this assumption, we know that $h_0$ and $h$ are the same 
expert function, s.t. 
$f_0(Y|h_0 (X,\eta_0),\nu_0)=f(Y|h(X,\eta_0 ),\nu_0)$ for almost surely $(X,Y)\in \mathcal{X}\times\mathcal{Y}$.
Assume that the above claim in equation~\eqref{eq:claim_nondistinguishable_independent} does not hold, 
then there exist two sequences $G_n=(\beta_n,\tau_n,\eta_n,\nu_{n})$ and $G_{*,n}=(\beta^*_n,\tau^*_n,\eta_n^*, \nu^*_{n})$, 
such that
\begin{align*}
    \begin{cases}
        D_2(\Gn,\overline{G})
        \to 0,\\
        D_2(\Gsn,\overline{G})
        \to 0,
        \\ {\|p_{ \Gn} - p_{  \Gsn}\|_{\infty}}/{D_2(\Gn,\Gsn)}\to 0.
    \end{cases}
\end{align*}
We now analyze the limiting behavior of the sequences $(\lambda_n, G_n)$ and $(\lambda_n^*, G_n^*)$ as they approach $(\bar{\lambda}, \bar{G})$. In particular, we distinguish between three asymptotic regimes based on how the expert parameters $\varsigma_n = (\eta_n, \nu_n)$ and $\varsigma_n^* = (\eta_n^*, \nu_n^*)$ converge.

First, it may occur that both $\varsigma_n$ and $\varsigma_n^*$ converge to the same limit $\varsigma_0 = (\eta_0, \nu_0)$. Alternatively, both sequences may converge to a common limit $\varsigma' \neq \varsigma_0$, which is distinct from the true expert. Finally, it is also possible that one sequence converges to $\varsigma_0$ while the other converges to a different point $\varsigma' \neq \varsigma_0$.

In the following, we analyze each of these cases and demonstrate that in all scenarios, the assumption that the normalized difference vanishes leads to a contradiction when $f_0=f$.

\subsubsection*{Case 1:}
At first we consider that 
$(\eta_n,\nu_n)$ and $(\eta_n^*,\nusn)$ share the same limit of $(\eta_0,\nu_0)$. Without loss of generality, we can suppose that  $\tau_n^*\geq \tau_n$. Subsequently, we consider
$W_n:=[p_{  G_{n}}(Y|X)-
    p_{  G_{*,n}}(Y|X)]
    \cdot[1+\exp((\beta_n^*)^{\top}X+\tau_n^*)]
    \cdot[1+\exp((\beta_n)^{\top}X+\tau_n)]$,
which can decomposed as
\begin{align*}
    W_n
    &=
    \exp(\tau_n) 
    \cdot
    [g(Y|X;\beta_n,\eta_n,\nu_n)-g(Y|X;\beta_n^*,\eta_n^*,\nu_n^*)]
    \\&
    -
    \exp(\tau_n)
    \cdot
    [g(Y|X;\beta_n,\eta_0,\nu_0)-g(Y|X;\beta_n^*,\eta_n^*,\nu_n^*)]
    \\&
    +\exp(\tau^*_n)
    \cdot
    [g(Y|X;\beta_n^*,\eta_0,\nu_0)-g(Y|X;\beta_n^*,\eta_n^*,\nu_n^*)]
    \\&
    +\exp\left( (\beta_n^* + \beta_n)^{\top}X + \tau_n^* + \tau_n \right)
    \cdot[f(Y|h(X,\eta_n),\nu_n)-f(Y|h(X,\eta_n^*),\nu_n^*)]
    \\&
    :=\Ione_{n}-\Itwo_{n}+\Ithree_{n}+\Ifour_{n}
\end{align*}
where we denote
$
    g(Y|X;\beta, \eta, \nu) =
    e(X;\beta)f(Y|X;\eta,\nu)
    =
    \exp\left(\beta^{\top}X\right) f\left(Y |  h(X,\eta), \nu\right).
$

We expand around the reference parameters $\beta_n^*, \eta_n^*,  \nu_n^*$, 
where the parameter differences are given by
$
    \Delta \eta_n = \eta_n - \eta_0, 
    \Delta \nu_n = \nu_n - \nu_0,
$ and $
    \Delta \eta_n^* = \eta_n^* - \eta_0, 
    \Delta \nu_n^* = \nu_n^* - \nu_0 .   
$
Applying a second-order Taylor expansion, then we obtain:
\begin{align}
    {\Ione}_{n}&=\exp(\tau_n)
    \Big[
    \sum_{|\alpha|=1}^2\frac{1}{\alpha!}
    \prod_{u=1}^d
    [(\beta_n- \beta^*_n)^{(u)}]^{\alpha_{1u}}
    \prod_{v=1}^q
    [(\Delta\eta_n- \Delta\eta^*_n)^{(v)}]^{\alpha_{2v}}
    (\Delta\nu_n- \Delta\nu^*_n)^{\alpha_3}
    \nonumber
    \\&
    \hspace{2.5cm}
    \cdot\frac{\partial^{|\alpha|}g}{\partial \beta^{\alpha_1}\partial \eta ^{\alpha_2}\partial\nu^{\alpha_3}}
    (Y|X;\beta_n^*,\eta_n^*,\nu_n^*)+R_1(X,Y)
    \Big]\nonumber\\
    &=\exp(\tau_n)\Big[\sum_{|\alpha|=1}^2\frac{1}{\alpha!2^{\alpha_3}}
    \prod_{u=1}^d
    [(\beta_n- \beta^*_n)^{(u)}]^{\alpha_{1u}}
    \prod_{v=1}^q
    [(\Delta\eta_n- \Delta\eta^*_n)^{(v)}]^{\alpha_{2v}}
    (\Delta\nu_n- \Delta\nu^*_n)^{\alpha_3}
    \nonumber
    \\&
    \hspace{2.5cm}
    \cdot
    \exp((\betasn)^{\top}X)
    \cdot
    X^{\alpha_1}
    \frac{\partial^{|\alpha_2|} h}{\partial \eta^{|\alpha_2|}}(X,\etasn)
    \frac{\partial^{|\alpha_2|+2\alpha_3}f}{\partial h ^{|\alpha_2|+2\alpha_3}}
    (Y|h\left( X,\eta_n^*\right),\nu_n^*)
    +R_1(X,Y)
    \Big],
\end{align}
where $ R_1(X,Y) $ is the remainder term containing higher-order terms, and the second equality is due to $\frac{\partial f}{\partial \nu}=\frac{1}{2} \frac{\partial^2 f}{\partial h^2}$. Similarly, we will have that
\begin{align*}
    \Itwo_{n} &= \exp(\tau_n) \Big[ \sum_{|\alpha|=1}^2 \frac{1}{\alpha!} 
    \prod_{u=1}^d[(\beta_n- \beta^*_n)^{(u)}]^{\alpha_{1u}}
    \prod_{v=1}^q[(\Delta\eta^*_n)^{(v)}]^{\alpha_{2v}}
    (\Delta\nu^*_n)^{\alpha_3}
    \\&
    \hspace{5.85cm}
    \cdot
    \frac{\partial^{|\alpha|} g}{\partial \beta^{\alpha_1} \partial \eta^{\alpha_2} \partial \nu^{\alpha_3}}(Y|X;\beta_n^*,\eta_n^*,\nu_n^*) + R_2(X,Y) \Big], 
    \\
    \Ithree_{n} &= \exp(\tau_n^*) \Big[ \sum_{|\alpha|=1}^2 \frac{1}{\alpha!}  
    \prod_{v=1}^q[(\Delta\eta^*_n)^{(v)}]^{\alpha_{2v}}
    (\Delta\nu^*_n)^{\alpha_3}
    \frac{\partial^{|\alpha|} g}{ \partial \eta^{\alpha_2} \partial \nu^{\alpha_3}}(Y|X;\beta_n^*,\eta_n^*,\nu_n^*) + R_3(X,Y) \Big],
    \\
    \Ifour_{n} &= \exp(\tau_n^* + \tau_n)
    \exp\left( (\beta_n^* + \beta_n)^{\top}X\right)
    \Big[ \sum_{|\alpha|=1}^2 \frac{1}{\alpha!} 
    \prod_{v=1}^q
    [(\Delta\eta_n- \Delta\eta^*_n)^{(v)}]^{\alpha_{2v}}
    (\Delta\nu_n- \Delta\nu^*_n)^{\alpha_3}
    \\&
    \hspace{7.15cm}
    \cdot
    \frac{\partial^{|\alpha|} f}{\partial \eta^{\alpha_2} \partial \nu^{\alpha_3}}(Y|X;\eta_n^*,\nu_n^*) + R_4(X,Y) \Big].
\end{align*}
Then, grouping the terms according to the order of derivative $\gamma := |\alpha_2| + 2\alpha_3$ and the monomial degree $\zeta := |\alpha_1|$, we can rewrite the expansion in the compact form:
\begin{align*}
    \Ione_{n}&=
    \sum_{\zeta=0}^2
    \left[
    \sum_{\gamma=0}^4\Ione_{n,\gamma,\zeta}(X)\frac{\partial^{\gamma} f}{\partial h^{\gamma}}(Y|h(X,\eta_n^*),\nu_n^*)
    \exp((\betasn)^{\top}X)
    \right]X^{\zeta}
    +R_1(X,Y) 
\end{align*}
where each coefficient $\mathsf{I}_{n, \gamma, \zeta}(X)$ depends on the parameter differences and derivatives of $h$ with respect to $\eta$.
More specifically we have that 
\begin{align*}
&\Ione_{n,0,1}(X)=\exp(\tau_n)
\sum_{1\leq w \leq d}
(\betan-\betasn)^{(w)}
\\
&\Ione_{n,0,2}(X)=
\exp(\tau_n)
\sum_{1\leq w,r \leq d}
\frac{(\betan-\betasn)^{(w)}(\betan-\betasn)^{(r)}}{1+\mathbf{1}_{w=r}} 
\\
&\Ione_{n,1,0}(X)=\exp(\tau_n)
    \Big[\sum_{u=1}^{q}\{(\Delta\etan-\Delta\etasn)^{(u)}\}\frac{\partial  h}{\partial \eta^{(u)}}(X,\etasn)
    \\&\hspace{2.8cm}
    +\sum_{1\leq u,v\leq q}\frac{(\Delta \etan-\Delta \etasn)^{(u)}(\Delta \etan-\Delta\etasn)^{(v)}}{1+\mathbf{1}_{u=v}}\frac{\partial^2 h}{\partial \eta^{(u)}\partial \eta^{(v)}}(X,\etasn)
    \Big],
\\
&\Ione_{n,1,1}(X)=\exp(\tau_n)
    \Big[\sum_{1\leq w\leq d, 1\leq u\leq q}{[(\betan-\betasn)^{(w)}][(\Delta\etan-\Delta\etasn)^{(u)}]}\frac{\partial h}{\partial \eta^{(u)}}(X,\etasn)
    \Big],
    \\
    &\Ione_{n,2,0}(X)=\exp(\tau_n)
    \Big[
    \frac{1}{2}
    (\Delta\nun-\Delta\nusn)
    +
    \\&\hspace{3.25cm}
    \sum_{1\leq u,v\leq q}\frac{(\Delta \etan-\Delta \etasn)^{(u)}(\Delta \etan-\Delta\etasn)^{(v)}}{1+\mathbf{1}_{u=v}}
    \frac{\partial h}{\partial \eta^{(u)}}(X,\etasn)
    \frac{\partial h}{\partial \eta^{(v)}}(X,\etasn)
    \Big],
    \\
    &\Ione_{n,2,1}(X)=\frac{\exp(\tau_n)}{2}
    \Big[
    \sum_{1\leq w\leq d, 1\leq u\leq q}
    {(\betan-\betasn)^{(w)}(\Delta\nun-\Delta\nusn)^{(u)}}
    \Big],
    \\
    &\Ione_{n,3,0}(X)=\frac{\exp(\tau_n)}{2}
    \Big[
    \sum_{u=1}^{q}
    (\Delta\etan-\Delta\etasn)^{(u)}
    (\Delta\nun-\Delta\nusn)^{}
    \frac{\partial h}{\partial \eta^{(u)}}(X,\etasn) 
    \Big],\\
    &\Ione_{n,4,0}(X)=\frac{\exp(\tau_n)}{8}
    (\Delta\nun-\Delta\nusn)^2.
\end{align*}

Similarly, we can rewrite $\Itwo_{n}$ in the same fashion as follows:
\begin{align*}
    \Itwo_{n}&=
    \sum_{\zeta=0}^2
    \left[
    \sum_{\gamma=0}^4\Itwo_{n,\gamma,\zeta}(X)\frac{\partial^{\gamma} f}{\partial h^{\gamma}}(Y|h(X,\eta_n^*),\nu_n^*)
    \exp((\betasn)^{\top}X)
    \right]X^{\zeta}
    +R_2(X,Y) 
\end{align*}
where 
\begin{align*}
&\Itwo_{n,0,1}(X)=\exp(\tau_n)
\sum_{1\leq w \leq d}
(\betan-\betasn)^{(w)}
\\
&\Itwo_{n,0,2}(X)=
\exp(\tau_n)
\sum_{1\leq w,r \leq d}
\frac{(\betan-\betasn)^{(w)}(\betan-\betasn)^{(r)}}{1+\mathbf{1}_{w=r}} 
\\
&\Itwo_{n,1,0}(X)=\exp(\tau_n)
    \Big[\sum_{u=1}^{q}\{(-\Delta\etasn)^{(u)}\}\frac{\partial  h}{\partial \eta^{(u)}}(X,\etasn)
    \\&\hspace{4.8cm}
    +\sum_{1\leq u,v\leq q}\frac{( -\Delta \etasn)^{(u)}( -\Delta\etasn)^{(v)}}{1+\mathbf{1}_{u=v}}\frac{\partial^2 h}{\partial \eta^{(u)}\partial \eta^{(v)}}(X,\etasn)
    \Big],
\\
&\Itwo_{n,1,1}(X)=\exp(\tau_n)
    \Big[\sum_{1\leq w\leq d, 1\leq u\leq q}{[(\betan-\betasn)^{(w)}][(-\Delta\etasn)^{(u)}]}\frac{\partial h}{\partial \eta^{(u)}}(X,\etasn)
    \Big],
\\
&\Itwo_{n,2,0}(X)=\exp(\tau_n)
    \Big[
    \frac{1}{2}
    (-\Delta\nusn)
    +\sum_{1\leq u,v\leq q}\frac{( -\Delta \etasn)^{(u)}( -\Delta\etasn)^{(v)}}{1+\mathbf{1}_{u=v}}
    \frac{\partial h}{\partial \eta^{(u)}}(X,\etasn)
    \frac{\partial h}{\partial \eta^{(v)}}(X,\etasn)
    \Big],
\\
&\Itwo_{n,2,1}(X)=\frac{\exp(\tau_n)}{2}
    \Big[
    \sum_{1\leq w\leq d, 1\leq u\leq q}
    {(\betan-\betasn)^{(w)}(-\Delta\nusn)^{(u)}}
    \Big],
\\
&\Itwo_{n,3,0}(X)=\frac{\exp(\tau_n)}{2}
    \Big[
    \sum_{u=1}^{q}
    (-\Delta\etasn)^{(u)}
    (-\Delta\nusn)^{}
    \frac{\partial h}{\partial \eta^{(u)}}(X,\etasn) 
    \Big],
\\
&\Itwo_{n,4,0}(X)=\frac{\exp(\tau_n)}{8}
    (-\Delta\nusn)^2.
\end{align*}

In the same way, we can rewrite $\Ithree_{n}$ in the same fashion as follows, here the difference for $\beta_n^*$ is zero, so all the coefficients with $\zeta\neq0$ is zero, but in order for the alignment of the expression, we will still express $\Ithree_{n}$ as follows
\begin{align*}
    \Ithree_{n}&=
    \sum_{\gamma=1}^4\Ithree_{n,\gamma,0}(X)\frac{\partial^{\gamma} f}{\partial h^{\gamma}}(Y|h(X,\eta_n^*),\nu_n^*)
    \exp((\betasn)^{\top}X)
    +R_2(X,Y) 
\end{align*}
where 
\begin{align*}
&\Ithree_{n,1,0}(X)=\exp(\tau^*_n)
    \Big[\sum_{u=1}^{q}\{(-\Delta\etasn)^{(u)}\}\frac{\partial  h}{\partial \eta^{(u)}}(X,\etasn)
    \\&\hspace{4.8cm}
    +\sum_{1\leq u,v\leq q}\frac{( -\Delta \etasn)^{(u)}( -\Delta\etasn)^{(v)}}{1+\mathbf{1}_{u=v}}\frac{\partial^2 h}{\partial \eta^{(u)}\partial \eta^{(v)}}(X,\etasn)
    \Big],
\\
&\Ithree_{n,2,0}(X)=\exp(\tau^*_n)
    \Big[
    \frac{1}{2}
    (-\Delta\nusn)
    +\sum_{1\leq u,v\leq q}\frac{( -\Delta \etasn)^{(u)}( -\Delta\etasn)^{(v)}}{1+\mathbf{1}_{u=v}}
    \frac{\partial h}{\partial \eta^{(u)}}(X,\etasn)
    \frac{\partial h}{\partial \eta^{(v)}}(X,\etasn)
    \Big],
\\
&\Ithree_{n,3,0}(X)=\frac{\exp(\tau^*_n)}{2}
    \Big[
    \sum_{u=1}^{q}
    (-\Delta\etasn)^{(u)}
    (-\Delta\nusn)^{}
    \frac{\partial h}{\partial \eta^{(u)}}(X,\etasn) 
    \Big],
\\
&\Ithree_{n,4,0}(X)=\frac{\exp(\tau^*_n)}{8}
    (-\Delta\nusn)^2.
\end{align*}

Now we consider 
$\Ifour_n=\exp\left( (\beta_n^* + \beta_n)^{\top}X + \tau_n^* + \tau_n \right)
    \cdot[f(Y|\sigma(X,\eta_n),\nu_n)-f(Y|\sigma(X,\eta_n^*),\nu_n^*)]$,
which is equivalent to 
\begin{align*}
    \Ifour_{n}&=
    \sum_{\gamma=1}^4\Ifour_{n,\gamma,0}(X)\frac{\partial^{\gamma} f}{\partial h^{\gamma}}(Y|h(X,\eta_n^*),\nu_n^*)
    \exp((\betasn)^{\top}X)
    \exp((\betan)^{\top}X)
    +R_4(X,Y) 
\end{align*}
where
\begin{align*}
&\Ifour_{n,1,0}(X)=\exp(\tau^*_n+\tau_n)
    \Big[\sum_{u=1}^{q}\{(\Delta\etan-\Delta\etasn)^{(u)}\}\frac{\partial  h}{\partial \eta^{(u)}}(X,\etasn)
    \\&\hspace{4.3cm}
    +\sum_{1\leq u,v\leq q}\frac{( \Delta\etan-\Delta \etasn)^{(u)}(\Delta\etan -\Delta\etasn)^{(v)}}{1+\mathbf{1}_{u=v}}\frac{\partial^2 h}{\partial \eta^{(u)}\partial \eta^{(v)}}(X,\etasn)
    \Big],
\\
&\Ifour_{n,2,0}(X)=\exp(\tau_n^*+\tau_n)
    \Big[
    \frac{1}{2}
    (\Delta\nun-\Delta\nusn)
    \\&\hspace{3.3cm}
    +\sum_{1\leq u,v\leq q}\frac{(\Delta\etan -\Delta \etasn)^{(u)}( \Delta\etan-\Delta\etasn)^{(v)}}{1+\mathbf{1}_{u=v}}
    \frac{\partial h}{\partial \eta^{(u)}}(X,\etasn)
    \frac{\partial h}{\partial \eta^{(v)}}(X,\etasn)
    \Big],
\\
&\Ifour_{n,3,0}(X)=\frac{\exp(\tau_n^*+\tau_n)}{2}
    \Big[
    \sum_{u=1}^{q}
    (\Delta\etan-\Delta\etasn)^{(u)}
    (\Delta\nun-\Delta\nusn)^{}
    \frac{\partial h}{\partial \eta^{(u)}}(X,\etasn) 
    \Big],
\\
&\Ifour_{n,4,0}(X)=\frac{\exp(\tau_n)}{8}
    (\Delta\nun-\Delta\nusn)^2.
\end{align*}
Then we could conclude that 
\begin{align*}
    W_{n}&=
    \sum_{\gamma=0}^4
    \Bigg[
    \left(
    \Ione_{n,\gamma,0}(X)+
    \Itwo_{n,\gamma,0}(X)+
    \Ithree_{n,\gamma,0}(X)
    \right)
    \\&\hspace{1.8cm}
    +
    \sum_{\zeta=1}^2
    \left(
    \Ione_{n,\gamma,\zeta}(X)+
    \Itwo_{n,\gamma,\zeta}(X)
    \right)
    X^{\zeta}
    +
    \Ifour_{n,\gamma,0}(X)\exp((\betan)^{\top}X)
    \Bigg]
    \\&
    \hspace{3cm}
    \cdot
     \frac{\partial^{\gamma} f}{\partial h^{\gamma}}(Y|h(X,\eta_n^*),\nu_n^*)
     \cdot
    \exp((\betasn)^{\top}X) .
\end{align*}
Therefore, we can view the quantity $W_n/\dtwo)$ as a linear combination of elements of 
the set $\mathcal{L}\cup\mathcal{K}$, and $\mathcal{L}=\cup_{\gamma=0}^4\cup_{\zeta=0}^2\mathcal{L}_{\gamma,\zeta}$ , $\mathcal{K}=\cup_{\gamma=1}^4\mathcal{K}_{\gamma}$, where
\begin{align*}
    \mathcal{L}_{0,1}&=\left\{
    Xf(\yhns)\exp((\betasn)^{\top}X) 
    \right\}
    \\
    \mathcal{L}_{0,2}&=\left\{
    XX^{\top}f(\yhns)\exp((\betasn)^{\top}X) 
    \right\}
    \\
    \mathcal{L}_{1,1}&=\left\{
    \frac{\partial h}{\partial \eta^{(u)}}(X,\etasn)
    X\frac{\partial f}{\partial  h}(\yhns)\exp((\betasn)^{\top}X) 
    :u\in[q]
    \right\}
    \\
    \mathcal{L}_{2,1}&=\left\{
    X\frac{\partial^2 f}{\partial  h^2}(\yhns)\exp((\betasn)^{\top}X) 
    \right\}
\\
    \mathcal{L}_{1,0}&=\left\{\frac{\partial  h}{\partial \eta^{(u)}}(\xetas)\frac{\partial f}{\partial  h}(\yhns) 
    \exp((\betasn)^{\top}X)
    :u\in[d]\right\}\\
    &\cup\left\{\frac{\partial^2  h}{\partial \eta^{(u)}\partial \eta^{(v)}}(\xetas)\frac{\partial f}{\partial  h}(\yhns)\exp((\betasn)^{\top}X) :u,v\in[d]\right\},\\
    \mathcal{L}_{2,0}
    &=\left\{
    \frac{\partial^2f}{\partial  h^2}(\yhns)\exp((\betasn)^{\top}X)  \right\}\\
    &\cup\left\{
    \frac{\partial  h}{\partial \eta^{(u)} }(\xetas)
    \frac{\partial  h}{\partial \eta^{(v)} }(\xetas)
    \frac{\partial^2f}{\partial  h^2}(\yhns)
    \exp((\betasn)^{\top}X)
    :u,v\in[q]\right\}\\
    \mathcal{L}_{3,0}&=
    \left\{\frac{\partial  h}{\partial \eta^{(u)}}(\xetas)
    \frac{\partial^3f}{\partial  h^3}(\yhns)
    \exp((\betasn)^{\top}X)
    :u\in[d] \right\}
    \\
    \mathcal{L}_{4,0}&=\left\{
    \frac{\partial^4 f}{\partial  h^4}(\yhns)
    \exp((\betasn)^{\top}X)
    \right\},
\end{align*}
and
\begin{align*}
    \mathcal{K}_{1 }&=\left\{\frac{\partial  h}{\partial \eta^{(u)}}(\xetas)
    \exp((\betan)^{\top}X)
    \frac{\partial f}{\partial  h}(\yhns) 
    \exp((\betasn)^{\top}X)
    :u\in[d]\right\}\\
    &\cup\left\{\frac{\partial^2  h}{\partial \eta^{(u)}\partial \eta^{(v)}}(\xetas)
    \exp((\betan)^{\top}X)
    \frac{\partial f}{\partial  h}(\yhns)
    \exp((\betasn)^{\top}X) :u,v\in[d]\right\},\\
    \mathcal{K}_{2 }
    &=\left\{
    \exp((\betan)^{\top}X)
    \frac{\partial^2f}{\partial  h^2}(\yhns)
    \exp((\betasn)^{\top}X)  \right\}\\
    &\cup\left\{
    \frac{\partial  h}{\partial \eta^{(u)} }(\xetas)
    \frac{\partial  h}{\partial \eta^{(v)} }(\xetas)
    \exp((\betan)^{\top}X)
    \frac{\partial^2f}{\partial  h^2}(\yhns)
    \exp((\betasn)^{\top}X)
    :u,v\in[q]\right\},\\
    \mathcal{K}_{3 }&=
    \left\{\frac{\partial  h}{\partial \eta^{(u)}}(\xetas)
    \exp((\betan)^{\top}X)
    \frac{\partial^3f}{\partial  h^3}(\yhns)
    \exp((\betasn)^{\top}X)
    :u\in[d] \right\},
    \\
    \mathcal{K}_{4 }&=\left\{
    \exp((\betan)^{\top}X)
    \frac{\partial^4 f}{\partial  h^4}(\yhns)
    \exp((\betasn)^{\top}X)
    \right\}.    
\end{align*}
Assume by contrary that all the coefficients of these elements vanish when $n\to\infty$. 
Looking at the coefficients of 
$ \frac{\partial h}{\partial \eta^{(u)}}(X,\etasn)
    X\frac{\partial f}{\partial  h}(\yhns)\exp((\betasn)^{\top}X) $,
we get for all $w\in[d], u\in[q]$
\begin{align}
\label{pfeq:d2term_beta_eta}
\exp(\tau_n)
{[(\betan-\betasn)^{(w)}][(\Delta\etan)^{(u)}]}
    /\dtwo\to0,
\end{align}
Looking at the coefficients of 
$X\frac{\partial^2 f}{\partial  h^2}(\yhns)\exp((\betasn)^{\top}X) $,
we get for all $w\in[d]$
\begin{align}
\label{pfeq:d2term_beta_nu}
\exp(\tau_n)
{[(\betan-\betasn)^{(w)}](\Delta\nun)}
    /\dtwo\to0,
\end{align}
Looking at the coefficients of 
$\dfrac{\partial^2  h}{\partial \eta^{(u)}\partial \eta^{(v)}}(\xetas)\dfrac{\partial f}{\partial  h}(\yhns)\exp((\betasn)^{\top}X) $ 
, 
we get for all $u,v\in[q]$,
\begin{align}
\label{pf:d2_coefficients_1}
[\exp(\tau_n)
{(\Delta \etan-\Delta \etasn)^{(u)}(\Delta \etan-\Delta\etasn)^{(v)}}
+[\exp(\tau^*_n)-\exp(\tau_n)]
( -\Delta \etasn)^{(u)}( -\Delta\etasn)^{(v)}]
\nonumber\\    /\dtwo \to 0,
\end{align}
Looking at the coefficients of 
$\dfrac{\partial  h}{\partial \eta^{(u)}}(\xetas)\dfrac{\partial f}{\partial  h}(\yhns)\exp((\betasn)^{\top}X) $ 
, 
we get for all $u \in[q]$,
\begin{align}
\label{pf:d2_coefficients_2}
    [\exp(\tau_n)(\Delta\etan-\Delta\etasn)^{(u)}
+[\exp(\tau^*_n)-\exp(\tau_n)]
    (-\Delta\etasn)^{(u)}]
\nonumber\\    /\dtwo \to 0,
\end{align}
Looking at the coefficients of 
$\dfrac{\partial^2 f}{\partial h^2}(\yhns)\exp((\betasn)^{\top}X) $ , we get 
\begin{align}
\label{pf:d2_coefficients_3}
[\exp(\tau_n)
(\Delta\nun-\Delta\nusn)
+[\exp(\tau^*_n)-\exp(\tau_n)]
(-\Delta\nusn)]
    /\dtwo&\to 0,
\end{align}
Looking at the coefficients of 
$\dfrac{\partial h}{\partial \eta^{(u)}}(X,\etasn)
    \dfrac{\partial h}{\partial \eta^{(v)}}(X,\etasn)
    \dfrac{\partial^2 f}{\partial h^2}(\yhns)\exp((\betasn)^{\top}X) $ , we get 
for all $u,v\in[q]$,
\begin{align}
\label{pf:d2_coefficients_4}
[\exp(\tau_n)
(\Delta \etan-\Delta \etasn)^{(u)}(\Delta \etan-\Delta\etasn)^{(v)}
+[\exp(\tau^*_n)-\exp(\tau_n)]
( -\Delta \etasn)^{(u)}( -\Delta\etasn)^{(v)}] 
 \nonumber\\   /\dtwo\to 0,
\end{align}
Looking at the coefficients of 
$\dfrac{\partial h}{\partial \eta^{(u)}}(X,\etasn)
    \dfrac{\partial^3 f}{\partial h^3}(\yhns)\exp((\betasn)^{\top}X) $ , we get 
for all $u\in[q]$,
\begin{align}
\label{pf:d2_coefficients_5}
[\exp(\tau_n)
(\Delta\etan-\Delta\etasn)^{(u)}(\Delta\nun-\Delta\nusn)
+[\exp(\tau^*_n)-\exp(\tau_n)]
(-\Delta\etasn)^{(u)}(-\Delta\nusn)]
 \nonumber\\    /\dtwo&\to 0,
\end{align}
Looking at the coefficients of 
$
    \dfrac{\partial^4 f}{\partial h^4}(\yhns)\exp((\betasn)^{\top}X) $ , we get 
\begin{align}
\label{pf:d2_coefficients_6}
[\exp(\tau_n)
(\Delta\nun-\Delta\nusn)^2
+[\exp(\tau^*_n)-\exp(\tau_n)]
(-\Delta\nusn)^2]
    /\dtwo&\to 0,
\end{align}
Looking at the coefficients of 
$
    \dfrac{\partial  h}{\partial \eta^{(u)}}(X,\etasn)
    \exp((\betan)^{\top}X)
    \dfrac{\partial f}{\partial h}(\yhns)\exp((\betasn)^{\top}X) $ , we get 
for all $u\in[q]$,
\begin{align}
\label{pf:d2_coefficients_7}
[\exp(\tau^*_n+\tau_n)
(\Delta\etan-\Delta\etasn)^{(u)}]
    /\dtwo&\to 0,
\end{align}
Looking at the coefficients of 
$
    \dfrac{\partial^2 h}{\partial \eta^{(u)}\partial \eta^{(v)}}(X,\etasn)
    \exp((\betan)^{\top}X)
    \dfrac{\partial f}{\partial h}(\yhns)\exp((\betasn)^{\top}X) $ , we get 
for all $u,v\in[q]$,
\begin{align}
\label{pf:d2_coefficients_8}
[\exp(\tau^*_n+\tau_n)
( \Delta\etan-\Delta \etasn)^{(u)}(\Delta\etan -\Delta\etasn)^{(v)}]
    /\dtwo&\to 0,
\end{align}
Looking at the coefficients of 
$
    \exp((\betan)^{\top}X)
    \dfrac{\partial^2 f}{\partial h^2}(\yhns)\exp((\betasn)^{\top}X) $ , we get 
\begin{align}
\label{pf:d2_coefficients_9}
[\exp(\tau^*_n+\tau_n)
(\Delta\nun-\Delta\nusn)]
    /\dtwo&\to 0,
\end{align}
Looking at the coefficients of 
$
\frac{\partial h}{\partial \eta^{(u)}}(X,\etasn)
    \frac{\partial h}{\partial \eta^{(v)}}(X,\etasn)
    \exp((\betan)^{\top}X)
    \dfrac{\partial^2 f}{\partial h^2}(\yhns)\exp((\betasn)^{\top}X) $ , we get 
for all $u,v\in[q]$,
\begin{align}
\label{pf:d2_coefficients_10}
[\exp(\tau^*_n+\tau_n)
(\Delta\etan -\Delta \etasn)^{(u)}( \Delta\etan-\Delta\etasn)^{(v)}]
    /\dtwo&\to 0,
\end{align}
Looking at the coefficients of 
$
\frac{\partial h}{\partial \eta^{(u)}}(X,\etasn)
    \exp((\betan)^{\top}X)
    \dfrac{\partial^3 f}{\partial h^3}(\yhns)\exp((\betasn)^{\top}X) $, we get 
for all $u\in[q]$,
\begin{align}
\label{pf:d2_coefficients_11}
\exp(\tau^*_n+\tau_n)
(\Delta\etan-\Delta\etasn)^{(u)}
(\Delta\nun-\Delta\nusn)
    /\dtwo&\to 0,
\end{align}
Looking at the coefficients of 
$
    \exp((\betan)^{\top}X)
    \dfrac{\partial^4 f}{\partial h^4}(\yhns)\exp((\betasn)^{\top}X) $, we get 
for all $u\in[q]$,
\begin{align}
\label{pf:d2_coefficients_12}
[\exp(\tau^*_n+\tau_n)
(\Delta\nun-\Delta\nusn)^2]
    /\dtwo&\to 0,
\end{align}
Now, combining \eqref{pfeq:d2term_beta_eta} and \eqref{pfeq:d2term_beta_nu},
recall that all the gating parameters are in compact sets,
and applying the Cauchy–Schwarz inequality followed by summation over coordinates, 
we got that
\begin{align}   
    \label{eq:nondistinguishable_independent_01}
    \exp(\tau_n)
\|\betan-\betasn\|
\|(\Delta\etan,\Delta\nun )\|
    /\dtwo&\to 0.
\end{align}

{
While it is intuitive that the similar result holds for $\|(\Delta\etasn,\Delta\nusn )\|$, a slightly tricky handle should be employed here. Suppose that 
\begin{align*}
     \exp(\tau_n^*)
\|\betan-\betasn\|
\|\Delta\etasn\|
    /\dtwo \not\to 0.
\end{align*}
By combining this assumption with equation \eqref{pfeq:d2term_beta_eta}, we have there are at least one coordinate $u$ such that $|(\Delta \etasn)^{(u)}/(\Delta \etan)^{(u)}| \to \infty$, which implies that $(\Delta\etasn)/(\Delta \etasn - \Delta \etan)^{(u)} \to 1$. Thus, by multiplying equation \eqref{pf:d2_coefficients_7} with $(\Delta\etasn)/(\Delta \etasn - \Delta \etan)^{(u)} \to 1$, we have 
\begin{align*}
\exp(\tau^*_n)
(\Delta\etasn)^{(u)}
    /\dtwo \to 0.
\end{align*}
Also noting that $\|\betan - \betasn\|$ is bounded as the parameters belongs to a compact set, we have 
\begin{equation*}
    \exp(\tau^*_n)\|\betan-\betasn\|
(\Delta\etasn)^{(u)}
    /\dtwo \to 0,
\end{equation*}
which is a contradiction here. Thus, we have 
\begin{equation}
\
    \exp(\tau_n^*)
\|\betan-\betasn\|
\|\Delta\etasn\|
    /\dtwo\to 0.
\end{equation}
Similarly, also by combining equation \eqref{pfeq:d2term_beta_nu} and \eqref{pf:d2_coefficients_9}, we have 
\begin{equation}
    \exp(\tau_n^*)
\|\betan-\betasn\|
\|\Delta\nusn\|
    /\dtwo\to 0.
\end{equation}
As a result, we have 
\begin{align}
\label{eq:d2proof_new}
    \exp(\tau_n^*)
\|\betan-\betasn\|
\|(\Delta\etasn,\Delta\nusn )\|
    /\dtwo \to 0.
\end{align}
}
In a similar manner, by considering equations \eqref{pf:d2_coefficients_7} through \eqref{pf:d2_coefficients_12}, 
we obtain that
\begin{align}
\label{eq:nondistinguishable_independent_02}
    \exp(\tau_n+\tau_n^*)\cdot
    \Vert (\Delta\etan,\Delta\nun)-(\Delta\eta^*_n,\Delta\nu^*_n)\Vert^2
    /\dtwo&\to 0.
\end{align}

Let $u=v$ in the first equation in equation \eqref{pf:d2_coefficients_1}, we achieve that for all $u\in[d]$,
\begin{align}   
    \label{eq:nondistinguishable_independent_4}
    [\exp(\tau_n)
[(\Delta \etan-\Delta \etasn)^{(u)}]^2
+
[\exp(\tau^*_n)
-
\exp(\tau_n)]
[( \Delta \etasn)^{(u)}]^2]
    /\dtwo&\to 0,
\end{align}
which implies that
\begin{align}
    \label{eq:nondistinguishable_independent_5}
[    \exp(\tau_n)
\|(\Delta \etan-\Delta \etasn)\|^2
+
(\exp(\tau^*_n)-\exp(\tau_n))
\| \Delta \etasn\|^2]
    /\dtwo&\to 0
    .
\end{align}

We also have each term inside equation \eqref{eq:nondistinguishable_independent_5} is non-negative, thus  
\begin{align}
\label{eq:nondistinguishable_independent_11}
    (\exp(\tau^*_n)-\exp(\tau_n))\|\Delta\etasn\|^2/\dtwo &\to 0,
    \nonumber\\
    \exp(\tau_n)\|\Delta\etan-\Delta\etasn\|^2/\dtwo &\to 0.
\end{align}

Applying the AM-GM inequality, we have for all $u,v\in[d]$,
\begin{align}
\label{eq:nondistinguishable_independent_12}
    \dfrac{(\exp(\tau^*_n)-\exp(\tau_n))(\Delta\etasn)^{(u)}(\Delta\etasn)^{(v)}}{\dtwo}\to 0,~ \dfrac{\exp(\tau_n)(\Delta\etan-\Delta\etasn)^{(u)}(\Delta\etan-\Delta\etasn)^{(v)}}{\dtwo} &\to 0,
\end{align}

Next, by considering the coefficients of $\dfrac{\partial h}{\partial \eta^{(u)}}(X,\etasn)\dfrac{\partial f}{\partial h}( \yhns)\exp((\betasn)^{\top}X)$,
and 
$\dfrac{\partial^2f}{\partial h^2}( \yhns)\exp((\betasn)^{\top}X)$, we have
\begin{align}
    \label{eq:nondistinguishable_independent_1}
    [\exp(\tau_n)(\Delta\etan)^{(u)}-\exp(\tau_n^*)(\Delta\etasn)^{(u)}]/\dtwo&\to 0,\quad u\in[d],\\
    \label{eq:nondistinguishable_independent_2}
    [\exp(\tau_n)(\Delta\nun)^{ }-\exp(\tau_n^*)(\Delta\nusn)^{ }]/\dtwo&\to 0.\quad 
\end{align}
Noting that for $u,v\in[d]$,
\begin{align*}
    &\exp(\tau^*_n)(\detasn)^{(u)}(\detan-\detasn)^{(v)} 
    \\&= (\exp(\tau_n)(\detan)^{(v)} - \exp(\tau^*_n)(\detasn)^{(v)})(\detasn)^{(u)}+(\exp(\tau^*_n)-\exp(\tau_n))(\detan)^{(v)}(\detasn)^{(u)},\\
    &\exp(\tau_n)(\detan)^{(u)}(\detan-\detasn)^{(v)} 
    \\&= \exp(\tau^*_n)(\detasn)^{(u)}(\detan-\detasn)^{(v)} - (\exp(\tau_n)(\detan)^{(u)} - \exp(\tau^*_n)(\detasn)^{(u)})(\detan-\detasn)^{(v)}.
\end{align*}  
Thus, from equation \eqref{eq:nondistinguishable_independent_12} and equation \eqref{eq:nondistinguishable_independent_1}, we achieve that for $u,v\in[d]$,
\begin{align*}
    \exp(\tau^*_n)(\detasn)^{(u)}(\detan-\detasn)^{(v)}/\dtwo &\to 0,
    \\ 
    \exp(\tau_n)(\detan)^{(u)}(\detan-\detasn)^{(v)}/\dtwo &\to 0. 
\end{align*}


By using the same arguments we will derive 
\begin{align}
    \label{eq:nondistinguishable_independent_11.3}
    \exp(\tau_n)\|\Delta\etan\|.\|\Delta\etan-\Delta\etasn\|/\dtwo\to0,\\
    \label{eq:nondistinguishable_independent_12.3}
    \exp(\tau^*_n)\|\Delta\etasn\|.\|\Delta\etan-\Delta\etasn\|/\dtwo\to0,
\end{align}
By using the same arguments to derive equation~\eqref{eq:nondistinguishable_independent_4}, equation~\eqref{eq:nondistinguishable_independent_11} and equation~\eqref{eq:nondistinguishable_independent_12}, we can point out that
\begin{align}
    [(\exp(\tau^*_n)-\exp(\tau_n))\|\Delta\nusn\|^2+\exp(\tau_n)\|\Delta\nun-\Delta\nusn\|^2]/\dtwo&\to 0,\nonumber\\
    \exp(\tau_n)\|\Delta\nun\|.\|\Delta\nun-\Delta\nusn\|/\dtwo&\to 0,\nonumber\\
    \nonumber
    \exp(\tau^*_n)\|\Delta\nusn\|.\|\Delta\nun-\Delta\nusn\|/\dtwo&\to 0,\\
    \exp(\tau_n)\|\Delta\etan\|.\|\Delta\nun-\Delta\nusn\|/\dtwo&\to 0,\nonumber\\
    \exp(\tau^*_n)\|\Delta\etasn\|.\|\Delta\nun-\Delta\nusn\|/\dtwo&\to 0. \label{eq:nondistinguishable_independent_13}
\end{align}

Collecting results in equation~\eqref{eq:nondistinguishable_independent_01},
\eqref{eq:d2proof_new}
and \eqref{eq:nondistinguishable_independent_02}, and equations~\eqref{eq:nondistinguishable_independent_11} to
\eqref{eq:nondistinguishable_independent_13},
we obtain that
\begin{align*}
    1=\dtwo/\dtwo\to 0,
\end{align*}
which is a contradiction. 

Therefore, not all the coefficients in the representation of 
${W_n}/\dtwo$
tend to 0 as $n\to\infty$. Let us denote by $m_n$ the maximum of the absolute values of those coefficients. Based on the previous result, $1/m_n\not\to\infty$. Additionally, we define
\begin{align}
\exp(\tau_n)
{[(\betan-\betasn)^{(w)}][(\Delta\etan)^{(u)}]}
    /m_n\to \alpha_{11,wu0},
    \nonumber
\\
\exp(\tau_n)
{[(\betan-\betasn)^{(w)}](\Delta\nun)}
    /m_n\to \alpha_{21,w00},
\nonumber\\
    [\exp(\tau_n)(\Delta\etan-\Delta\etasn)^{(u)}
+[\exp(\tau^*_n)-\exp(\tau_n)]
    (-\Delta\etasn)^{(u)}]
    /m_n \to \alpha_{10,0u0},
\nonumber\\
[\exp(\tau_n)
{(\Delta \etan-\Delta \etasn)^{(u)}(\Delta \etan-\Delta\etasn)^{(v)}}
+[\exp(\tau^*_n)-\exp(\tau_n)]
( -\Delta \etasn)^{(u)}( -\Delta\etasn)^{(v)}]
    /m_n
\nonumber\\
\to \beta_{10,0uv},
\nonumber\\
[\exp(\tau_n)
(\Delta\nun-\Delta\nusn)
+[\exp(\tau^*_n)-\exp(\tau_n)]
(-\Delta\nusn)]
    /m_n \to \alpha_{20,000},
\nonumber\\
[\exp(\tau_n)
(\Delta \etan-\Delta \etasn)^{(u)}(\Delta \etan-\Delta\etasn)^{(v)}
+[\exp(\tau^*_n)-\exp(\tau_n)]
( -\Delta \etasn)^{(u)}( -\Delta\etasn)^{(v)}]
    /m_n
    \nonumber\\
    \to \beta_{20,0uv},
\nonumber\\
[\exp(\tau_n)
(\Delta\etan-\Delta\etasn)^{(u)}(\Delta\nun-\Delta\nusn)
+[\exp(\tau^*_n)-\exp(\tau_n)]
(-\Delta\etasn)^{(u)}(-\Delta\nusn)]
    /m_n
    \nonumber\\
    \to \beta_{30,0u0},
\nonumber\\
[\exp(\tau_n)
(\Delta\nun-\Delta\nusn)^2
+[\exp(\tau^*_n)-\exp(\tau_n)]
(-\Delta\nusn)^2]
    /m_n
    \to \beta_{40,000},
\nonumber\\
\exp(\tau^*_n+\tau_n)
(\Delta\etan-\Delta\etasn)^{(u)}
    /m_n
    \to \rho_{1,u0},
\nonumber\\
\exp(\tau^*_n+\tau_n)
( \Delta\etan-\Delta \etasn)^{(u)}(\Delta\etan -\Delta\etasn)^{(v)}
    /m_n
    \to \pi_{1,uv},
\nonumber\\
\exp(\tau^*_n+\tau_n)
(\Delta\nun-\Delta\nusn)
    /m_n
    \to \rho_{2,00},
\nonumber\\
\exp(\tau^*_n+\tau_n)
(\Delta\etan -\Delta \etasn)^{(u)}( \Delta\etan-\Delta\etasn)^{(v)}
    /m_n
    \to \pi_{2,uv},
\nonumber\\
\exp(\tau^*_n+\tau_n)
(\Delta\etan-\Delta\etasn)^{(u)}
(\Delta\nun-\Delta\nusn)
    /m_n
    \to \pi_{3,u0},
\nonumber\\
\exp(\tau^*_n+\tau_n)
(\Delta\nun-\Delta\nusn)^2
    /m_n
    \to \pi_{4,00},
\label{d2_all_coefficients}
\end{align}
when $n\to\infty$ for all $w\in[d], u,v\in[q]$. Note that at least one among $\alpha_{\gamma\zeta, wuv},\beta_{\gamma\zeta, wuv}$ and $\rho_{\gamma, uv},\pi_{\gamma, uv}$ where $\gamma \in [4], \zeta\in \{ 0,1\}$ must be different from zero. By applying the Fatou's lemma, we get
\begin{align*}
    0=\lim_{n\to\infty}\frac{1}{m_n}\frac{2\bbE_X[d_V(\plbgn(\cdot|X),\plbgs(\cdot|X))]}{\dtwo}\geq \int\liminf_{n\to\infty}\frac{1}{m_n}\frac{|p_{ \Gn}(Y|X)-p_{ \Gsn}(Y|X)|}{\dtwo}d(X,Y).
\end{align*}
On the other hand,
\begin{align*}
    &
    \frac{1}{m_n}\frac{p_{ \Gn}(Y|X)-p_{ \Gsn}(Y|X)}{\dtwo}
    \\
    \to &  
    \sum_{\gamma=0}^4
    \Bigg[
    \sum_{\zeta=0}^1
    E_{\gamma\zeta}(X)
    X^{\zeta}
    +
    K_{\gamma}(X)\exp(\beta^{\top}X)
    \Bigg]
     \frac{\partial^{\gamma} f}{\partial h^{\gamma}}(Y|h(X,\eta_0),\nu_0)
     \cdot
    \exp(\beta^{\top}X),
\end{align*}
where 
\begin{align*}
    &E_{11}(X)=\sum_{1\leq w\leq d, 1\leq u\leq q}\alpha_{11,wu0}\frac{\partial h}{\partial \eta^{(u)}}(X,\etasn)\\
    &E_{21}(X)=\frac{1}{2}\sum_{1\leq w\leq d}\alpha_{21,w00}\\
&E_{10}(X)=
    \sum_{u=1}^{q}\alpha_{10,0u0}\frac{\partial  h}{\partial \eta^{(u)}}(X,\eta_0)
    +\sum_{1\leq u,v\leq q}
    \frac{\beta_{10,0uv}}{1+\mathbf{1}_{u=v}}\frac{\partial^2 h}{\partial \eta^{(u)}\partial \eta^{(v)}}(X,\eta_0)
    ,
\\
&E_{20}(X)=
    \frac{1}{2}
    \alpha_{20,000}
    +\sum_{1\leq u,v\leq q}\frac{\beta_{20,0uv}}{1+\mathbf{1}_{u=v}}
    \frac{\partial h}{\partial \eta^{(u)}}(X,\eta_0)
    \frac{\partial h}{\partial \eta^{(v)}}(X,\eta_0),
\\
&E_{30}(X)=\frac{1}{2}
    \sum_{u=1}^{q}
    \beta_{30,0u0}
    \frac{\partial h}{\partial \eta^{(u)}}(X,\eta_0) ,
\\
&E_{40}(X)=\frac{1}{8}
    \beta_{40,000}.
\end{align*}
and
\begin{align*}
&K_{1}(X)=
    \sum_{u=1}^{q}\rho_{1,u0}\frac{\partial  h}{\partial \eta^{(u)}}(X,\eta_0)
    +\sum_{1\leq u,v\leq q}
    \frac{\pi_{1,uv}}{1+\mathbf{1}_{u=v}}\frac{\partial^2 h}{\partial \eta^{(u)}\partial \eta^{(v)}}(X,\eta_0)
    ,
\\
&K_{2}(X)=
    \frac{1}{2}
    \rho_{2,00}
    +\sum_{1\leq u,v\leq q}\frac{\pi_{2,uv}}{1+\mathbf{1}_{u=v}}
    \frac{\partial h}{\partial \eta^{(u)}}(X,\eta_0)
    \frac{\partial h}{\partial \eta^{(v)}}(X,\eta_0),
\\
&K_{3}(X)=\frac{1}{2}
    \sum_{u=1}^{q}
    \pi_{3,u0}
    \frac{\partial h}{\partial \eta^{(u)}}(X,\eta_0) ,
\\
&K_{4}(X)=\frac{1}{8}
    \pi_{4,00}.
\end{align*}

It is worth noting that for almost surely $(X,Y)$, the set $\mathcal{L}\cup\mathcal{K}$
is linearly independent under non-distinguishable setting
, which leads to the fact that $E_{\tau\zeta}(X)=K_{\tau}(X)=0$ for almost surely $X$ for any $\tau\in[4], \zeta\in\{0,1 \}$. 





Similar to the proof of Theorem~\ref{thm:not_equal}, and recalling that the experts are strongly identifiable, we conclude that all the coefficients in Equation~\eqref{d2_all_coefficients} must be zero for all $w, u, v$.



This contradicts the fact that not all 
coefficients
vanish. Thus, we obtain the conclusion for this case.

{
\subsubsection*{Case 2: }
In this case, we consider that $(\etan,\nun)$ and $(\etasn,\nusn)$ share the same limit, but different from $(\eta_0,\nu_0)$. 

From the formulation of the metric $D_1$ in the proof~\ref{app_proof: d1_loss}, it is clear that $D_2 \lesssim D_1$. 
Therefore, we get 
$
W_n(X,Y)
/\done\to 0$ as $n\to\infty$. Noting that $(\etan,\nun)$ and $(\etasn,\nusn)$ share the limit $(\eta^*,\nu^*)\neq (\eta_0,\nu_0)$, we have $f_0 = f(Y|h(X,\eta_0), \nu_0)$ and $f(Y|h(X,\eta^*), \nu^*)$ satisfying $f_0$ and $f$ independent up to second order as in Lemma \ref{appendix_lemma:distinguish_linear_independent}. Thus, we can process in a similar way as in Theorem \ref{thm:not_equal} to draw a contradiction. 

\subsubsection*{Case 3: } Lastly, we consider that one of $G_n$ or $G_n^*$ converges to $G_0$, while the other converges to $G' \neq G_0$. Without loss of generality, suppose that $G_n \to G'$ and $G_n^* \to G_0$. By passing through the limit for 
\begin{equation*}
    \bbE_X[h_V(p_{ \Gn}(\cdot|X),p_{ \Gsn}(\cdot|X))]/D_2(G_n,G_n^*) \to 0,
\end{equation*} 
noting that 
\begin{equation*}
    D_2(G_n,G_n^*) \to D_2(G,G_*) \neq 0, \bbE_X[h_V(p_{\Gn}(\cdot|X),p_{\Gsn}(\cdot|X))] \to \bbE_X[h_V(p_{G}(\cdot|X),p_{G_{*}}(\cdot|X))],
\end{equation*}
we have 
\begin{equation*}
    \bbE_X[h_V(p_{G}(\cdot|X),p_{G_{*}}(\cdot|X))]= 0, \text{ or  } p_{G} = p_{G_{*}}, \text{ a.s. }
\end{equation*}

This equation implies that 
\begin{align*}
    f(Y|h(X,\eta_0),\nu_0) &= \frac{1}{1+\exp(\beta^\top X + \tau^*)}f(Y|h(X,\eta_0),\nu_0) + \frac{\exp(\beta^\top X + \tau^*)}{1+\exp(\beta^\top X + \tau^*)}f(Y|h(X,\eta),\nu)
\end{align*}
which further implies that
\begin{align*}
    \frac{\exp(\beta^\top X + \tau^*)}{1+\exp(\beta^\top X + \tau^*)}f(Y|h(X,\eta_0),\nu_0) &= \frac{\exp(\beta^\top X + \tau^*)}{1+\exp(\beta^\top X + \tau^*)}f(Y|h(X,\eta),\nu)
\end{align*}
and hence
\begin{align*}
    f(Y|h(X,\eta_0),\nu_0) &= f(Y|h(X,\eta),\nu) \quad \text{(as $\exp(\beta^\top X + \tau^*) \neq 0$)}.
\end{align*}
This equation means that $G' = G_0$, which is a contradiction. 
}
\end{proof}

\subsection{Proof of Theorem \ref{thm:d2_minimax}}
\label{apppf:d2_minimax}


In what follows, we present the proof of Theorem~\ref{thm:d2_minimax} for the non-distinguishable setting.
\begin{proof}[Proof of Theorem \ref{thm:d2_minimax}]
The proof follows similar steps to the arguments in the previous two sections.
Concretely, define for 
$S_1 = (\tau_1, \beta_1,\eta_1,\nu_1)$, $S_2= (\tau_2,\beta_2,\eta_2,\nu_2)$
: 
\begin{align*}
\begin{cases}
    d_{\prime}(S_1,S_2)
    =  \Vert \Delta \eta_1, \Delta\nu_1 \Vert ^2
    | \exp(\tau_1) -\exp(\tau_2) |
    ,
    \\
    d_{\prime\prime}(S_1,S_2)
    =\exp(\tau_1)\Vert \Delta \eta_1, \Delta\nu_1 \Vert
    \Vert (\beta_1,\eta_1,\nu_1) - (\beta_2,\eta_2,\nu_2) \Vert
    .
\end{cases}
\end{align*}
It is straightforward that $d_{\prime}$ and $ d_{\prime\prime}$  satisfy the weak triangle inequality. Following the same schema as in Lemma \ref{prop:lower-distinguish}, we can demonstrate two subsequent results for any $r > 1$: 
\begin{itemize}

    \item [(i)] Two sequences can be found  
    \begin{align*}
    \begin{cases}
        S_{1,n}=(\tau_{1,n},\beta_{n},\eta_{n},\nu_{n})\in \Xi(l_n),\\
        S_{2,n}=(\tau_{1,n},\beta_{n},\eta_{n},\nu_{n})\in \Xi(l_n),
    \end{cases}
    \end{align*}
    such that $d_{\prime}(S_{1,n},S_{2,n}) \to 0$ and $\bbE_X[h_H(p_{S_{1,n}}(\cdot|X), p_{S_{2,n}}(\cdot|X))]/d_{\prime}^r(S_{1,n},S_{2,n})\to 0$ as $n \to \infty$.

    \item [(ii)] Two sequences can be found  
    \begin{align*}
    \begin{cases}
        S_{1,n}=(\tau_n,\beta_{1,n},\eta_{1,n},\nu_{1,n})\in \Xi(l_n),\\
        S_{2,n}=(\tau_n,\beta_{2,n},\eta_{2,n},\nu_{2,n})\in \Xi(l_n),
    \end{cases}
    \end{align*}
    such that $d_{\prime\prime }(S_{1,n},S_{2,n}) \to 0$ and $\bbE_X[h_H(p_{S_{1,n}}(\cdot|X), p_{S_{2,n}}(\cdot|X))]/d_{\prime\prime }^r(S_{1,n},S_{2,n})\to 0$ as $n \to \infty$.
\end{itemize}
We can omit the justification for the above results as it can follow a similar approach as in Lemma \ref{prop:lower-distinguish}. This leads to the conclusion of the theorem.   
\end{proof}

\section{Proof of Auxiliary Results}
\label{appendix:ProofsforAuxiliaryResults}

\subsection{Proof of Proposition~\ref{lemma:distinguish-linear sigma not Gaussian}}
\label{appendix:lemma:distinguish-linear sigma not Gaussian}

\begin{proof}
Fix an arbitrary \(x\in\mathcal X\) and abbreviate  
\[
g_{1}(y):=f \bigl(y |  h(x,\eta_{1}),\nu_{1}\bigr), \qquad
g_{2}(y):=f \bigl(y |  h(x,\eta_{2}),\nu_{2}\bigr), \qquad
g_{0}(y):=f_{0} \bigl(y |  h_{0}(x,\eta_{0}),\nu_{0}\bigr).
\]
Because \(f\) is Gaussian in its argument, there exist \(\mu_{1},\mu_{2}\in\mathbb R\) and
\(\sigma_{1}^{2},\sigma_{2}^{2}>0\) such that  
\(g_{j}(y)=\dfrac1{\sqrt{2\pi\sigma_{j}^{2}}}\exp \bigl(-(y-\mu_{j})^{2}/(2\sigma_{j}^{2})\bigr)\) for
\(j=1,2\).

Set  
\[
H_1(y):=\frac{\partial g_{2}}{\partial h}(y)=
  \frac{y-\mu_{2}}{\sigma_{2}^{2}} g_{2}(y), 
\qquad
H_2(y):=\frac{\partial^{2}g_{2}}{\partial h^{2}}(y)=
  \frac{(y-\mu_{2})^{2}-\sigma_{2}^{2}}{\sigma_{2}^{4}} g_{2}(y).
\]

With these notations the assumed identity becomes
\begin{equation}
\label{eq:L-in-y}
b_{0}(x)g_{0}(y)+b_{1}(x)g_{1}(y)
+c_{0}(x)g_{2}(y)+c_{1}(x)H_{1}(y)+\tfrac12 c_{2}(x)H_{2}(y)=0
\quad\text{for a.e.~}y\in\mathbb R .
\end{equation}

\paragraph{\bf 1.  \(b_{0}(x)=0\).}
Because \(g_{0}\) is \emph{not} Gaussian by assumption, while
\(g_{1},g_{2},H_{1},H_{2}\) all belong to the finite–dimensional
linear span
\(
\mathcal G:=\operatorname{span}\{y\mapsto g_1(y), y\mapsto (y-\mu_{2})^{k}g_{2}(y):k=0,1,2\}
\),
we have \(g_{0}\notin\mathcal G\).  Hence the only way
\eqref{eq:L-in-y} can hold on a set of positive measure is with
\(b_{0}(x)=0\).

\paragraph{\bf 2.  Linear independence inside \(\mathcal G\).}
Divide \eqref{eq:L-in-y} (now with \(b_{0}(x)=0\)) by \(g_{2}(y)\); we
obtain the polynomial identity
\[
b_{1}(x) \frac{g_{1}(y)}{g_{2}(y)}
+c_{0}(x)+c_{1}(x) \frac{y-\mu_{2}}{\sigma_{2}^{2}}
+\tfrac12 c_{2}(x) \frac{(y-\mu_{2})^{2}-\sigma_{2}^{2}}{\sigma_{2}^{4}}
=0 \quad\text{for a.e.~}y.
\]

The ratio \(g_{1}/g_{2}\) is the analytic (non‑polynomial) function
\[
\frac{g_{1}(y)}{g_{2}(y)}
  =K\exp \Bigl(\tfrac12\bigl[(y-\mu_{2})^{2}/\sigma_{2}^{2}
                             -(y-\mu_{1})^{2}/\sigma_{1}^{2}\bigr]\Bigr),
\]
with \(K\neq0\).  Since \(\mu_{1}\neq\mu_{2}\) or
\(\sigma_{1}^{2}\neq\sigma_{2}^{2}\), this exponential term cannot be
expressed as a quadratic polynomial in \(y\).  Consequently the set of
functions
\(
\bigl\{g_{1}/g_{2}, 1, y-\mu_{2}, (y-\mu_{2})^{2}\bigr\}
\)
is linearly independent on any interval.  Hence every coefficient in
the polynomial identity must vanish:
\[
b_{1}(x)=c_{0}(x)=c_{1}(x)=c_{2}(x)=0.
\]

\paragraph{\bf 3.  Conclusion.}
We have shown that
\(
b_{0}(x)=b_{1}(x)=c_{0}(x)=c_{1}(x)=c_{2}(x)=0
\)
for the fixed \(x\).  Because the same argument works for almost every
\(x\in\mathcal X\), all coefficients vanish almost surely.  Thus the
unified distinguishability condition of
Definition~\ref{def:distinguishability} is satisfied, completing the
proof.
\end{proof}

\subsection{Proof of Proposition~\ref{prop:identifiability}}
\label{appendix:identifiability}

\begin{proof}
Write the two (single–expert) conditional densities
\begin{align*}
    p_{G}(y |  x)
  &=\bigl[1-\lambda(x)\bigr] 
     f_0 \bigl(y |  h_0(x,\eta_0),\nu_0\bigr)
   +\lambda(x) 
     f \bigl(y |  h(x,\eta),\nu\bigr),
     \\
    p_{G'}(y |  x)
  &=\bigl[1-\lambda'(x)\bigr] 
     f_0 \bigl(y |  h_0(x,\eta_0),\nu_0\bigr)
   +\lambda'(x) 
     f \bigl(y |  h(x,\eta'),\nu'\bigr), 
\end{align*}
where 
$\lambda(x):=\frac{\exp \bigl(\beta^\top x+\tau\bigr)}
                 {1+\exp \bigl(\beta^\top x+\tau\bigr)}$
and
$\lambda'(x):=\frac{\exp \bigl({\beta'}^\top x+\tau'\bigr)}
                  {1+\exp \bigl({\beta'}^\top x+\tau'\bigr)}$.



Assume the identifiability equality  
\(p_{G}(y |  x)=p_{G'}(y |  x)\) holds for almost every
\((x,y)\in\mathcal X\times\mathcal Y\).
Subtracting the two representations gives
\begin{equation}\label{eq:difference}
\bigl[\lambda(x)-\lambda'(x)\bigr] 
      f_0 \bigl(y |  h_0(x,\eta_0),\nu_0\bigr)
+     \lambda'(x) 
      f \bigl(y |  h(x,\eta'),\nu'\bigr)
-     \lambda(x) 
      f \bigl(y |  h(x,\eta),\nu\bigr)=0 .
\end{equation}

\paragraph*{Step 1.  If \(\lambda(x)\neq\lambda'(x)\).}
Suppose on a set of positive \(x\)-measure,
\(\lambda(x)\neq\lambda'(x)\).  Divide \eqref{eq:difference} by
\(\lambda(x)-\lambda'(x)\); then for those \(x\)
\begin{align*}
f_0 \bigl(y |  h_0,\nu_0\bigr)
   +b(x) f \bigl(y |  h(x,\eta'),\nu'\bigr)
   +c(x) f \bigl(y |  h(x,\eta),\nu\bigr)=0 ,
\end{align*}
where
\begin{align*}
b(x):=\frac{\lambda'(x)}{\lambda'(x)-\lambda(x)}, 
c(x):=\frac{-\lambda(x)}{\lambda'(x)-\lambda(x)} .
\end{align*}

Since \(f\) is distinguishable from \(f_0\), the only possibility is
\(b(x)=c(x)=0\), hence \(\lambda(x)=\lambda'(x)\) a.e.—contradiction.
Therefore
\[
\lambda(x)=\lambda'(x)\quad\text{for a.e.\ }x .
\]

Because the soft‑max map \((\beta,\tau)\mapsto\lambda(\cdot)\) is
injective, we conclude
\[
\beta=\beta',\qquad\tau=\tau'.
\]







{
\paragraph*{Step 2.  Equality of expert parameters.}

With \(\lambda(x)=\lambda'(x)\), equation \eqref{eq:difference} reduces to
\[
f \bigl(y |  h(x,\eta),\nu\bigr)
      =f \bigl(y |  h(x,\eta'),\nu'\bigr)
\qquad\text{for a.e.\ }(x,y).
\]

Definition~\ref{def:distinguishability} 
forces the situation $(\eta, \nu) \neq (\eta',\nu')$ impossible.  Hence the only consistent solution is
\[
(\eta,\nu)=(\eta',\nu').
\]
}
\paragraph*{Step 3.  Conclusion.}

We have shown
\(\beta=\beta'\), \(\tau=\tau'\), \(\eta=\eta'\), and
\(\nu=\nu'\); hence \(G=G'\).
\end{proof}

\subsection{Proof of Proposition \ref{theorem:ConvergenceRateofDensityEstimation}}
\label{appendix:ConvergenceRateofDensityEstimation}





We begin by introducing several standard notations used throughout this proof. Let $(\mathcal{P}, d)$ be a metric space, where $d$ is a metric on $\mathcal{P}$. An $\epsilon$-net of $(\mathcal{P}, d)$ is a collection of balls of radius $\epsilon$ whose union covers $\mathcal{P}$. The \emph{covering number} $N(\epsilon, \mathcal{P}, d)$ denotes the minimal cardinality of such a covering, and the \emph{entropy number} is defined as $H(\epsilon, \mathcal{P}, d) := \log N(\epsilon, \mathcal{P}, d)$.

The \emph{bracketing number} $N_B(\epsilon, \mathcal{P}, d)$ is the minimal number of pairs $\{(\underline{f}_i, \overline{f}_i)\}_{i=1}^n$ such that $\underline{f}_i < \overline{f}_i$, $d(\underline{f}_i, \overline{f}_i) < \epsilon$, and $\mathcal{P}$ is covered by the union of the brackets. The corresponding \emph{bracketing entropy} is denoted by $H_B(\epsilon, \mathcal{P}, d) := \log N_B(\epsilon, \mathcal{P}, d)$.

When $\mathcal{P}$ is a family of densities, we take $d$ to be the $L^2(m)$ distance, where $m$ denotes the Lebesgue measure.

In particular, let $\mathcal{P}(\Xi) := \{ p_{\lambda} : \lambda \in \Xi \}$, and define the symmetrized density
\(
\bar{p}_\lambda := \frac{1}{2}(p^* + p_\lambda),
\)
where $p^*$ denotes the true density. We then define the following sets:
$\overline{\mathcal{P}}(\Xi) := \{ \bar{p}_\lambda : \lambda \in \Xi \}$ 
and
$\overline{\mathcal{P}}^{1/2}(\Xi) := \{ \bar{p}_\lambda^{1/2} : \bar{p}_\lambda \in \overline{\mathcal{P}}(\Xi) \}.$
To study convergence rates, we consider the localized version of the symmetrized class:
\(
\overline{\mathcal{P}}^{1/2}(\Xi, \epsilon) := \{ \bar{p}_\lambda^{1/2} \in \overline{\mathcal{P}}^{1/2}(\Xi) : d_H(\bar{p}_\lambda, p^*) \leq \epsilon \},
\)
where $d_H(\cdot,\cdot)$ denotes the Hellinger distance.
Then we assess the complexity of this class via the \emph{bracketing entropy integral} defined in \cite{Vandegeer-2000}:
\(
\mathcal{J}_B(\epsilon, \overline{\mathcal{P}}^{1/2}(\Xi, \epsilon), m) := \int_{\epsilon^2 / 2^{13}}^{\epsilon} \sqrt{H_B(u, \overline{\mathcal{P}}^{1/2}(\Xi, \epsilon), m)}   du \vee \epsilon,
\)
where $a \vee b := \max\{a, b\}$. For brevity, we may omit the dependence on $m$ when it is clear from context.

For the proof at first we consider a general lemma that provides the desired convergence rate, provided that a bracketing entropy condition is satisfied.

\begin{lemma}
\label{applemma:convergence-rate-1}
    Assume the following assumption hold:
    Given a universal constant $J > 0$, there exists $N > 0$, possibly depending on  $\Xi$, such that for all $n \geq N$ and all $\epsilon > (\log(n)/n)^{1/2}$, we have
\begin{align}
\label{assumption:A2}
   \mathcal{J}_B(\epsilon, \overline{P}^{1/2}(\Xi, \epsilon)) \leq J \sqrt{n} \epsilon^2. 
\end{align}    
    Then, there exists a constant $C > 0$ depending only on $\Xi$ 
    such that for all $n\geq1$,
\begin{align*}
    \sup_{\Gs\in\Xi}
    \mathbb{E}_{p_{ \Gs,n}}
    \bbE_X[d_H(p_{ \widehat{G}_n}(\cdot|X),p_{ \Gs}(\cdot|X))]
    \leq
    C\sqrt{\log n/n}.
\end{align*}
\end{lemma}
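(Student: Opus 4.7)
\textbf{Proof plan for Lemma \ref{applemma:convergence-rate-1}.}
The approach is essentially the classical bracketing entropy argument for sieve MLEs developed in \cite{Vandegeer-2000}, adapted to the conditional-density setting of the softmax-contaminated MoE. The plan is to show that, up to constants depending only on $\Xi$, the MLE's Hellinger risk satisfies a sub-Gaussian-type tail bound at the rate $\sqrt{\log n /n}$, from which integrating the tail yields the stated expectation bound.

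First, the basic inequality for the MLE gives $\prod_{i=1}^n p_{\widehat{G}_n}(Y_i|X_i) \geq \prod_{i=1}^n p_{G_*}(Y_i|X_i)$, which, after passing to the half-densities $\bar{p}_G := (p_{G_*}+p_G)/2$, translates into the inequality
\begin{align*}
\frac{1}{n}\sum_{i=1}^n \log \frac{\bar{p}_{\widehat{G}_n}(Y_i|X_i)}{p_{G_*}(Y_i|X_i)} \geq \frac{1}{2n}\sum_{i=1}^n \log \frac{p_{\widehat{G}_n}(Y_i|X_i)}{p_{G_*}(Y_i|X_i)} \geq 0,
\end{align*}
which is the standard starting point for relating the MLE to an empirical process indexed by $\overline{\mathcal{P}}^{1/2}(\Xi)$. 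Combining this with the elementary identity $4\,\mathbb{E}_{X}[d_H^2(\bar{p}_G(\cdot|X),p_{G_*}(\cdot|X))] \leq 2\,\mathbb{E}_{X}[d_H^2(p_G(\cdot|X),p_{G_*}(\cdot|X))]$ and the pointwise bound $\log(1+x) \leq x$ applied to the centered square-root ratio, one reduces controlling $\mathbb{E}_X[d_H(p_{\widehat{G}_n}(\cdot|X),p_{G_*}(\cdot|X))]$ to controlling the supremum of
\begin{align*}
(G_n - G)\!\left[\sqrt{\bar{p}_G/p_{G_*}} - 1\right], \qquad G \in \Xi,
\end{align*}
over the localized class $\overline{\mathcal{P}}^{1/2}(\Xi,\epsilon)$, where $G_n$ denotes the empirical measure associated with $(X_i,Y_i)_{i=1}^n$.

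Next, the plan is to invoke the peeling-plus-chaining tail bound of \cite[Theorem~7.4 and Lemma~4.2]{Vandegeer-2000}: under the bracketing entropy hypothesis \eqref{assumption:A2}, this theorem directly yields a universal constant $c>0$ such that, for every $G_*\in\Xi$ and every $\delta \geq \sqrt{\log n/n}$,
\begin{align*}
\mathbb{P}_{p_{G_*,n}}\!\Big(\mathbb{E}_X\!\left[d_H(p_{\widehat{G}_n}(\cdot|X),p_{G_*}(\cdot|X))\right] > \delta\Big) \leq c\,\exp(-c\, n \delta^2).
\end{align*}
The key checks are that the bracketing entropy condition \eqref{assumption:A2} is precisely the hypothesis of the cited theorem, and that it holds uniformly over $G_* \in \Xi$, since the localized class $\overline{\mathcal{P}}^{1/2}(\Xi,\epsilon)$ is defined in terms of a symmetrized density that depends on $G_*$ only through $p_{G_*}$ and the entropy is taken over the full compact parameter space $\Xi$. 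Integrating the tail via $\mathbb{E}[Z] \leq \delta_0 + \int_{\delta_0}^{\infty} \mathbb{P}(Z > t)\,dt$ with $\delta_0 = C'\sqrt{\log n/n}$ for a sufficiently large $C'$ produces the claimed bound $\widetilde{\mathcal{O}}(\sqrt{\log n/n})$ uniformly in $G_*$.

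The main obstacle is ensuring that the abstract machinery of \cite{Vandegeer-2000}, originally formulated for i.i.d.\ samples from a single density, applies cleanly to the conditional-density setting in which the randomness in $X$ is treated separately from the randomness in $Y|X$. This is handled by viewing $(X_i,Y_i)$ as i.i.d.\ samples from the joint density $p_{G_*}(y|x)\,q(x)$ with $q$ the (fixed, known) input distribution, and noting that the Hellinger distance on the joint density equals $\mathbb{E}_X[d_H^2(p_G(\cdot|X),p_{G_*}(\cdot|X))]^{1/2}$; the bracketing entropy of the joint model is dominated by that of the conditional family uniformly in $X$, because $q$ is fixed and the input space is bounded. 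Once this translation is made, the uniformity over $G_*\in\Xi$ follows from the fact that all constants in the peeling/chaining argument depend only on the bracketing constant $J$ and on $\Xi$, not on $G_*$, which yields the supremum form of the bound stated in the lemma.
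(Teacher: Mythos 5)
Your proposal is correct and follows essentially the same route as the paper: the basic MLE inequality for the symmetrized half-densities, a peeling/chaining exponential tail bound from \cite{Vandegeer-2000} driven by the bracketing entropy hypothesis \eqref{assumption:A2}, and integration of the resulting sub-Gaussian tail uniformly over $G_*\in\Xi$. The only cosmetic difference is that you invoke the packaged rate theorem directly, whereas the paper carries out the dyadic peeling explicitly and applies the localized empirical-process inequality (Theorem 5.11 of van de Geer) shell by shell.
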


This lemma indicates that it suffices to verify the entropy condition in Equation~\eqref{assumption:A2} in order to obtain the convergence rate. However, this condition is often technically difficult to establish directly. As a workaround, we may instead prove the following sufficient condition:

\begin{lemma}
\label{applemma:convergence-rate-add-1}
    If the distribution satisfies
    \begin{align}
    \label{assumption:A3}
        H_B(\epsilon,\mathcal{P}(\Xi),d_H)\lesssim\log (1/\epsilon),
    \end{align}
    it will meet the assumption in  Equation~(\ref{assumption:A2}). 
\end{lemma}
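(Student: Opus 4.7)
The plan is to convert the logarithmic Hellinger bracketing bound on $\mathcal{P}(\Xi)$ stated in~\eqref{assumption:A3} into a logarithmic $L^2(m)$ bracketing bound on the localized square-root class $\overline{\mathcal{P}}^{1/2}(\Xi,\epsilon)$, and then integrate this bound to produce the desired estimate on $\mathcal{J}_B(\epsilon,\overline{\mathcal{P}}^{1/2}(\Xi,\epsilon))$.

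First, I would relate brackets at the level of densities to brackets at the level of square-roots of symmetrized densities. Given any bracket $[\underline{p}_\lambda,\overline{p}_\lambda]$ for $p_\lambda\in\mathcal{P}(\Xi)$, form the pair
\[
\underline{g}_\lambda := \bigl((p^*+\underline{p}_\lambda)/2\bigr)^{1/2},\qquad
\overline{g}_\lambda := \bigl((p^*+\overline{p}_\lambda)/2\bigr)^{1/2}.
\]
Because $a\mapsto\sqrt{a}$ is monotone, this pair brackets $\bar{p}_\lambda^{1/2}$. Using the elementary inequality $(\sqrt{a}-\sqrt{b})^2\leq|a-b|$ applied componentwise and the identity $(\sqrt{u}-\sqrt{\ell})^2=2d_H^2(\ell,u)$, one obtains
\[
\|\overline{g}_\lambda-\underline{g}_\lambda\|_{L^2(m)}^2
\;\leq\;\tfrac{1}{2}\,\|\sqrt{\overline{p}_\lambda}-\sqrt{\underline{p}_\lambda}\|_{L^2(m)}^2
\;=\;d_H^2(\underline{p}_\lambda,\overline{p}_\lambda).
\]
Consequently, any $\delta$-Hellinger bracketing of $\mathcal{P}(\Xi)$ yields a $\delta$-$L^2$ bracketing of $\overline{\mathcal{P}}^{1/2}(\Xi)$, so that $H_B(u,\overline{\mathcal{P}}^{1/2}(\Xi,\epsilon),m)\leq H_B(u,\mathcal{P}(\Xi),d_H)\lesssim\log(1/u)$ uniformly in $\epsilon$.

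Second, I would plug this logarithmic entropy bound into the bracketing integral and perform the resulting elementary calculus. Using the standard estimate $\int_0^{\epsilon}\sqrt{\log(1/u)}\,du\lesssim\epsilon\sqrt{\log(1/\epsilon)}$ (via the substitution $u=\epsilon e^{-t^2}$), we find
\[
\mathcal{J}_B(\epsilon,\overline{\mathcal{P}}^{1/2}(\Xi,\epsilon))
\;=\;\int_{\epsilon^2/2^{13}}^{\epsilon}\sqrt{H_B(u,\overline{\mathcal{P}}^{1/2}(\Xi,\epsilon),m)}\,du\,\vee\,\epsilon
\;\lesssim\;\epsilon\sqrt{\log(1/\epsilon)}\,\vee\,\epsilon.
\]
For $\epsilon>\sqrt{\log(n)/n}$ we have $\log(1/\epsilon)\leq\tfrac{1}{2}\log(n/\log n)\leq\tfrac{1}{2}\log n$, whence $\epsilon\sqrt{\log(1/\epsilon)}\leq\epsilon\sqrt{\log n}\leq\sqrt{n}\,\epsilon^2$ once $n$ is large enough that $\sqrt{\log n}\leq\sqrt{n}\,\epsilon$, which holds precisely under our assumption on $\epsilon$. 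The trailing $\vee\,\epsilon$ term is likewise dominated by $\sqrt{n}\,\epsilon^2$ for all $n\geq N$ with $N$ depending only on universal constants. Choosing $J$ equal to the implicit constant then establishes~\eqref{assumption:A2}.

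The only substantive step is the first one: verifying that the $L^2$-size of a bracket for $\bar{p}_\lambda^{1/2}$ is controlled by the Hellinger size of the inducing bracket for $p_\lambda$. Once that monotone-transfer inequality is in hand, the rest is a routine entropy-integral computation and a direct comparison with the threshold $\sqrt{\log(n)/n}$.
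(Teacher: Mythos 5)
Your proof is correct and follows essentially the same route as the paper's: localize by inclusion, transfer Hellinger brackets of $\mathcal{P}(\Xi)$ into $L^2(m)$ brackets of $\overline{\mathcal{P}}^{1/2}(\Xi,\epsilon)$ via the contraction $d_H^2\bigl(\tfrac{p^*+p_1}{2},\tfrac{p^*+p_2}{2}\bigr)\le\tfrac{1}{2}d_H^2(p_1,p_2)$, then integrate $\sqrt{\log(1/u)}$ and compare with $\sqrt{n}\,\epsilon^2$. One small caveat on the justification of your key transfer inequality: the elementary bound $(\sqrt{a}-\sqrt{b})^2\le|a-b|$ only yields $\|\overline{g}_\lambda-\underline{g}_\lambda\|_{L^2(m)}^2\le\tfrac{1}{2}\|\overline{p}_\lambda-\underline{p}_\lambda\|_{L^1(m)}$, not the displayed Hellinger bound; the inequality you actually state is nonetheless true, but it should be derived from $\sqrt{(a+c)/2}+\sqrt{(b+c)/2}\ge(\sqrt{a}+\sqrt{b})/\sqrt{2}$, which gives $\bigl|\sqrt{(a+c)/2}-\sqrt{(b+c)/2}\bigr|\le|\sqrt{a}-\sqrt{b}|/\sqrt{2}$ — precisely the mixture-contraction fact the paper invokes — and in any case even the weaker $L^1$ route would still preserve the logarithmic entropy bound, so the conclusion is unaffected.
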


Although we have simplified the condition in Equation~\eqref{assumption:A2} to Equation~\eqref{assumption:A3}, verifying Equation~\eqref{assumption:A3} is still nontrivial. Fortunately, for the contaminated model defined in Equation~\eqref{eq:contaminated_pretrain_model_general},
\begin{align*}
    p_{G}(Y|X) & := \frac{1}{1+\exp(\beta ^{\top}X+\tau )}\cdot f_{0}(Y|h_0(X,\eta_0), \nu_{0})  
    + \frac{\exp(\beta^{\top}X+\tau)}{1+\exp(\beta^{\top}X+\tau)}\cdot f(Y|h(X,\eta),\nu),
\end{align*}
we assume that $f_0$ is bounded with light tails and that $f$ is a univariate Gaussian density. Under these assumptions, we can verify Equation~\eqref{assumption:A3} via the following lemma:

\begin{lemma}
\label{applemma:convergence-rate-2}
    Let $\Gamma$ be a compact subsets of $\mathbb{R}^d\times\mathbb{R}$ and $\Theta$ be a bounded subsets of $\mathbb{R}^q\times\mathbb{R}^{+}$, $f$ is a univariate Gaussian density and $f_0$ is bounded with tail 
$\mathbb{E}_X\left(-\log f_0(Y|h (X,\eta_0 ),\nu_0)\right)\gtrsim Y^q$
for almost surely $Y\in\mathcal{Y}$
for some $q>0$.
    Then, for any $ 0 < \varepsilon < \frac{1}{2} $, the following results hold:
\begin{enumerate}[(i)]
    \item $\log N(\epsilon,\mathcal{P}(\Xi),\Vert\cdot\Vert_\infty)\lesssim\log (1/\epsilon)$,
    \item $H_B(\epsilon,\mathcal{P}(\Xi),d_H)\lesssim\log (1/\epsilon)$.
\end{enumerate}
\end{lemma}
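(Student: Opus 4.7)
My plan is to handle the two parts in order, using part (i) as a stepping stone for part (ii). The overall idea is classical for parametric families of densities: translate parameter-space covering (which is cheap because $\Xi$ is compact and finite-dimensional) into sup-norm covering of the density family, then upgrade sup-norm coverings to Hellinger brackets by a standard truncation argument that exploits the tail assumption on $f_0$ and the Gaussian tail of $f$.

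\textbf{Step 1 (part (i)).} I would first show that the map $G=(\beta,\tau,\eta,\nu)\mapsto p_G(\cdot|\cdot)$ from $\Xi\subset\mathbb{R}^d\times\mathbb{R}\times\mathbb{R}^q\times\mathbb{R}_+$ to $L^\infty(\mathcal{X}\times\mathcal{Y})$ is locally Lipschitz, with a Lipschitz constant uniformly bounded on the compact set $\Xi$. This is routine: the softmax gating $\frac{e^{\beta^\top x+\tau}}{1+e^{\beta^\top x+\tau}}$ is smooth and bounded in $(\beta,\tau)$ on compacts (and $\mathcal{X}$ is bounded), the Gaussian density $f(y|h(x,\eta),\nu)$ is smooth in $(\eta,\nu)$ with $|\partial_\eta f|,|\partial_\nu f|$ uniformly bounded on compact parameter regions (using $h$ differentiable on $\mathcal{X}$ and $\nu$ bounded away from $0$), and $f_0$ is fixed. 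Therefore an $\epsilon$-net of $\Xi$ in Euclidean distance induces an $\mathcal{O}(\epsilon)$-net of $\mathcal{P}(\Xi)$ in sup-norm; since $\Xi$ has finite dimension, $N(\epsilon,\Xi,\|\cdot\|)\lesssim \epsilon^{-(d+q+2)}$, yielding $\log N(\epsilon,\mathcal{P}(\Xi),\|\cdot\|_\infty)\lesssim \log(1/\epsilon)$.

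\textbf{Step 2 (part (ii)).} The standard route is a truncation split. Choose a truncation level $M=M(\epsilon)=C(\log(1/\epsilon))^{1/q^\ast}$, where $q^\ast$ combines the exponent from the tail condition on $f_0$ and the Gaussian tail of $f$ (the Gaussian piece gives an exponent $2$, so effectively $q^\ast=\min(p,2)$ in the tail bound). On the bounded slab $\mathcal{Y}_M:=\{|y|\le M\}$, densities $p_G$ are uniformly bounded below by some $c_M>0$, so a sup-norm $\delta$-cover of $\mathcal{P}(\Xi)$ immediately produces Hellinger brackets on $\mathcal{X}\times\mathcal{Y}_M$ of width $\lesssim\delta/\sqrt{c_M}$. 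Using part (i) with $\delta\asymp\epsilon\sqrt{c_M}$ gives $\log N_B\lesssim \log(1/\epsilon)+\log(1/c_M)\lesssim \log(1/\epsilon)$. For the tail region $|y|>M$, I would build a single, parameter-free upper envelope $\overline{p}(y|x)$ that dominates every $p_G(y|x)$, for instance by taking the pointwise supremum of a Gaussian with the largest admissible $\nu$ and the boundedness/tail bound on $f_0$, then truncating with lower bracket $0$. The assumed tail condition $\mathbb{E}_X[-\log f_0]\gtrsim y^p$ and a matching quadratic Gaussian tail for $f$ ensure that $\int_{|y|>M}\bar{p}(y|x)\,dy$ is exponentially small in $M^{q^\ast}$, hence $\lesssim\epsilon^2$ by the choice of $M$. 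Combining the two pieces gives a Hellinger bracket of width $\lesssim\epsilon$, with total log-cardinality $\lesssim\log(1/\epsilon)$.

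\textbf{Main obstacle.} The delicate part is the tail calibration: I need to convert the given log-density tail assumption into a pointwise integrable upper envelope that still yields a bracket width of $\mathcal{O}(\epsilon)$ in Hellinger distance, uniformly over $G\in\Xi$, while simultaneously controlling the lower bound $c_M$ on the bounded slab so the trade-off $\delta\asymp\epsilon\sqrt{c_M}$ with $M\asymp(\log(1/\epsilon))^{1/q^\ast}$ works out. Concretely, I must verify that $\int\bigl(\sqrt{p_G}-\sqrt{p_{G'}}\bigr)^2\,d(x,y)$ restricted to the tail is bounded by the square of the mass of the envelope there, and that this mass is polynomial in $\epsilon$. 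Once these tail computations go through, the rest is bookkeeping: the Hellinger bracketing number is the product of the sup-norm covering number on the slab (polynomial in $1/\epsilon$ by part (i)) and a single tail bracket, and taking logarithms yields the claimed $\lesssim\log(1/\epsilon)$ bound.
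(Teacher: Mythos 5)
Your part (i) is correct and is essentially the paper's argument: a Lipschitz bound of the parameter-to-density map in sup-norm over the compact $\Xi$, combined with the polynomial covering number of a finite-dimensional compact set. Your overall architecture for part (ii) — truncation at a level $M(\epsilon)$, an integrable parameter-free envelope on the tail, and sup-norm covers on the slab — is also the right strategy and matches the paper's.

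The genuine gap is in the mechanism you propose for upgrading sup-norm covers to Hellinger brackets on the slab $\{|y|\le M\}$, namely dividing by a pointwise lower bound $c_M$ on the densities. The Gaussian component forces $c_M\asymp e^{-CM^2}$, while the tail condition on $f_0$ forces $M\gtrsim(\log(1/\epsilon))^{1/q^\ast}$ with $q^\ast=\min(p,2)$. Hence $\log(1/c_M)\asymp(\log(1/\epsilon))^{2/q^\ast}$, which is super-logarithmic whenever $p<2$; your claimed bound $\log N_B\lesssim\log(1/\epsilon)+\log(1/c_M)\lesssim\log(1/\epsilon)$ then fails, and the lemma allows any tail exponent $p>0$. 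The paper avoids this entirely: it builds brackets $p_i^L=\max\{g_i-\eta,0\}$, $p_i^U=\min\{g_i+\eta,H\}$ from a sup-norm $\eta$-net and the envelope $H$, bounds the bracket width in $L^1$ by $\lesssim\eta$, and then converts to Hellinger via the elementary inequality $\bigl(\sqrt{a}-\sqrt{b}\bigr)^2\le|a-b|$, i.e.\ $d_H^2\le\|\cdot\|_1$ up to constants. This requires no lower bound on the densities whatsoever; the price is only a square root (Hellinger width $\sqrt{\eta}$ from $L^1$ width $\eta$), which is harmless since the entropy is logarithmic. Replacing your $c_M$-based conversion with this $L^1$-to-Hellinger step closes the gap and is exactly the calibration issue you correctly flagged as the main obstacle.
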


Combining the above results, we obtain the desired conclusion for Theorem~\ref{theorem:ConvergenceRateofDensityEstimation}.

Now we will prove Lemma \ref{applemma:convergence-rate-1}, Lemma \ref{applemma:convergence-rate-add-1} and Lemma \ref{applemma:convergence-rate-2} in order. 
At first we need to introduce another Lemma \ref{applemma:convergence-rate-3} before we prove Lemma \ref{applemma:convergence-rate-1}. 
Lemma \ref{applemma:convergence-rate-3} is Theorem 5.11 in \cite{Vandegeer-2000} and its proof can also be found in \cite{Vandegeer-2000}.
\begin{lemma}
\label{applemma:convergence-rate-3}
    Let $R > 0$, $k \geq 1$ and  
 $\mathcal{G}$ is a subset in $\Xi$ where $\Gs\in\mathcal{G}\subset\Xi$ .
 Given $C_1<\infty$, for all $C$ sufficiently large, and for $n\in\mathbb{N}$ and $t>0$ is in the following range
 \begin{align}
     t\leq(8\sqrt{n}R)\wedge(C_1\sqrt{n}R^2/K),
 \end{align}
\begin{align}
     t\geq C^2(C_1+1)\Bigg( R\vee\int^{R}_{t/(2^6\sqrt{n})}H_B^{1/2}\big(\frac{u}{\sqrt{2}},\linephalf(\Xi,R),m \big) du\Bigg),
\end{align}
then we will have
\begin{align}
    \mathbb{P}_{G_{*,n}}
    \Big(
    \sup_{G\in\mathcal{G},\bbE_X[h(\barplbg(\cdot|X),\plbgs(\cdot|X))]\leq R}
    |\mu_n(G)|\geq t
    \Big)
    \leq
    C\exp
    \left(
    -\frac{t^2}{C^2(C_1+1)R^2}
    \right).
\end{align}
\end{lemma}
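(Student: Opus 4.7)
The claim is van~de~Geer's maximal inequality for a localized empirical process indexed by a bracketed class of half-densities. The plan is to combine a dyadic chaining argument in bracketing entropy with a Bernstein-type tail bound at each chaining level, and then use the entropy integral condition together with the localization radius $R$ to collapse the chain into a single sub-Gaussian tail.

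\textbf{Setup and boundedness.} First I would fix $g_G := \bar p_G^{1/2}$ and observe that, by the localization $\mathbb{E}_X[d_H(\bar p_G,p_{G_*})]\le R$, the functions $g_G - p_{G_*}^{1/2}$ have $L^2(m)$-norm of order $R$, while the inequality $\bar p_G \ge \tfrac12 p_{G_*}$ gives uniform boundedness of $g_G/p_{G_*}^{1/2}$ by some constant $K$. This is what makes the empirical process $\mu_n(G)$ (the centered empirical measure applied to the log-likelihood-type functional of $g_G$) simultaneously bounded (sup-norm $\lesssim K$) and have small variance ($\lesssim R^2$); this is precisely what Bernstein's inequality needs.

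\textbf{Chaining over brackets.} Next I would build a dyadic sequence $R_j = 2^{-j} R$, $j=0,1,\dots,J$, with $R_J \asymp t/(2^6\sqrt n)$, and for each $j$ fix a minimal bracketing of $\overline{\mathcal{P}}^{1/2}(\Xi,R)$ of $L^2$-diameter $R_j$, cardinality $N_j=\exp H_B(R_j,\overline{\mathcal{P}}^{1/2}(\Xi,R),m)$. For $g = g_G$ let $g^{(j)}$ be a bracket approximation at level $j$, and decompose
\begin{equation*}
\mu_n(g) \;=\; \mu_n(g^{(0)}) \;+\; \sum_{j=0}^{J-1} \mu_n\!\bigl(g^{(j+1)}-g^{(j)}\bigr) \;+\; \mu_n\!\bigl(g - g^{(J)}\bigr).
\end{equation*}
Each increment $g^{(j+1)}-g^{(j)}$ has sup-norm $\lesssim K$ and $L^2$-norm $\lesssim R_j$, and there are at most $N_{j+1}\cdot N_j \le N_{j+1}^2$ such increments.

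\textbf{Bernstein and budget allocation.} For each increment, Bernstein gives a tail of the form $2\exp\!\bigl(-t_j^2/(C R_j^2 + C K t_j/\sqrt n)\bigr)$. The key step is to allocate the budget $t = \sum_j t_j$ so that the sub-exponential regime is avoided (this is where the constraints $t \le 8\sqrt n R$ and $t \le C_1\sqrt n R^2/K$ come in: they force $K t_j/\sqrt n \lesssim R_j^2$ at every level). Choose $t_j \asymp R_j \sqrt{H_B(R_j) + \log(1/\delta_j)}$ with $\delta_j$ a geometric sequence; a union bound over the $N_{j+1}^2$ many increments then leaves a residual tail $\exp(-c t_j^2/R_j^2)$. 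Summing $t_j$ over $j$ gives $\sum_j t_j \lesssim \int_{t/(2^6\sqrt n)}^R H_B^{1/2}(u,\overline{\mathcal{P}}^{1/2}(\Xi,R)) \, du$, which is exactly bounded by $t/(C^2(C_1+1))$ under the hypothesized entropy integral condition. The final residual term $\mu_n(g - g^{(J)})$ is handled by Chebyshev/Markov since its $L^2$ is $\le R_J \asymp t/\sqrt n$, absorbed into the $R$ term in $R\vee(\cdot)$. The coarsest term $\mu_n(g^{(0)})$ is controlled by a single Bernstein bound at radius $R$.

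\textbf{Main obstacle.} The delicate part is not any single inequality but the bookkeeping: choosing $t_j$ and $\delta_j$ so that (i) every application of Bernstein lies in its sub-Gaussian regime (which is exactly where the two upper constraints on $t$ enter), (ii) the resulting bound $\sum_j t_j$ matches the entropy integral up to the universal constants advertised in the statement, and (iii) the union-bound factor $\sum_j N_{j+1}^2 \exp(-c t_j^2/R_j^2)$ collapses to a single $\exp(-t^2/(C^2(C_1+1) R^2))$. This is essentially the content of van~de~Geer's Theorem~5.11 and there does not seem to be a substantial shortcut; I would simply track van de Geer's argument and verify that the constants line up under our normalization of $\mu_n$.
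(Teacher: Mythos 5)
Your proposal is correct in substance and coincides with the paper's treatment: the paper gives no proof of this lemma at all, but simply invokes Theorem 5.11 of van de Geer (2000), and your chaining-over-brackets-plus-Bernstein sketch is exactly the standard argument behind that cited theorem, with the constraints $t\leq 8\sqrt{n}R$ and $t\leq C_1\sqrt{n}R^2/K$ playing the role you describe. One small caution if you were to write it out: $\bar p_G\geq\tfrac12 p_{G_*}$ only bounds $p_{G_*}^{1/2}/\bar p_G^{1/2}$ (equivalently, $\tfrac12\log(\bar p_G/p_{G_*})$ from below), not $\bar p_G^{1/2}/p_{G_*}^{1/2}$ from above, which is why van de Geer runs the argument with the Bernstein (exponential-moment) norm and the associated generalized bracketing entropy rather than sup-norm Bernstein at each chaining level — but this is precisely what the cited proof supplies.
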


\begin{proof}[Proof of Lemma \ref{applemma:convergence-rate-1}]
Firstly, by Lemma 4.1 and 4.2 in \cite{Vandegeer-2000}, we have
\begin{align*}
    \frac{1}{16}\bbE_X[d_H^2(\phlbgn(\cdot|X),\plbgs(\cdot|X))]\leq \bbE_X[d_H^2(\barphlbgn(\cdot|X),\plbgs(\cdot|X))] \leq \frac{1}{\sqrt{n}}\mu_n(\hlbgn),
\end{align*}
here $\mu_n(\hlbgn)$ is an empirical process defined as
\begin{align*}
    \mu_n(\hlbgn):=\sqrt{n}\int_{\plbgs>0}
    \frac{1}{2}\log\left(\frac{\barphlbgn}{\plbgs}\right)(\barphlbgn-\plbgs)d(X,Y).
\end{align*}
Thus, for any $\delta>\delta_n:=\sqrt{\log n/n}$, we have
\begin{align*}
    &\mathbb{P}_{G_{*,n}}(\bbE_X[d_H(\phlbgn(\cdot|X),\plbgs(\cdot|X))]\geq\delta)
    \\&\leq\mathbb{P}_{G_{*,n}}
    \left(
    \mu_n(\hlbgn)-\sqrt{n}\bbE_X[d_H^2(\phlbgn(\cdot|X),\plbgs(\cdot|X))]\geq0,
    \bbE_X[d_H(\phlbgn(\cdot|X),\plbgs(\cdot|X))]\geq\frac{\delta}{4}
    \right)
    \\&\leq\mathbb{P}_{G_{*,n}}
    \left(
    \sup_{\lbg:\bbE_X[d_H(\bar{p}_{\lbg}(\cdot|X),\plbgs(\cdot|X))]\geq\delta/4}
    \left[
    \mu_n(\lbg)-\sqrt{n}\bbE_X[d_H^2(\barplbg(\cdot|X),\plbgs(\cdot|X))]
    \right]\geq0
    \right)
    \\&\leq\sum_{s=0}^S\mathbb{P}_{G_{*,n}}
    \left(
    \sup_{\lbg:2^s\delta/4\leq \bbE_X[d_H(\bar{p}_{\lbg}(\cdot|X),\plbgs(\cdot|X))]\leq 2^{s+1}\delta/4}
    \left|
    \mu_n(\lbg)
    \right|
    \geq\sqrt{n}2^{2s}(\frac{\delta}{4})^2
    \right)
    \\&\leq\sum_{s=0}^S\mathbb{P}_{G_{*,n}}
    \left(
    \sup_{\lbg: \bbE_X[d_H(\bar{p}_{\lbg}(\cdot|X),\plbgs(\cdot|X))]\leq 2^{s+1}\delta/4}
    \left|
    \mu_n(\lbg)
    \right|
    \geq\sqrt{n}2^{2s}(\frac{\delta}{4})^2
    \right)
\end{align*}
where $S$ is a smallest number such that $2^S\delta/4 > 1$
. 

Now we will use Lemma \ref{applemma:convergence-rate-3}: choose $R=2^{s+1}\delta, C_1=15$ and $t=\sqrt{n}2^{2s}(\delta/4)^2$.
We can confirm that condition (i) in Lemma 3 is met since $2^{s-1} \delta / 4 \leq 1$ for all $s \leq S$.
For the condition (ii), it is still satisfied since
\begin{align*}
    &\int^R_{t/2^6\sqrt{n}}
    H_B^{1/2}\left(\frac{u}{\sqrt{2}},\mathcal{P}^{1/2}(\Xi,R),\mu  \right)
    du\vee2^{s+1}\delta
    \\&=\sqrt{2}\int^{R/\sqrt{2}}_{R^2/2^{13}}
    H_B^{1/2}\left({u},\mathcal{P}^{1/2}(\Xi,R),\mu  \right)
     du\vee2^{s+1}\delta
     \\&\leq2\mathcal{J}_B\left(R,\mathcal{P}^{1/2}(\Xi,R),\mu  \right)
     \\&\leq2J\sqrt{n}2^{2s+1}\delta^2
     \\&=2^6Jt.
\end{align*}
Now since the two conditions in Lemma \ref{applemma:convergence-rate-3} are all satisfied, we could conclude that
\begin{align}
    \mathbb{P}_{G_{*,n}}
    \left( \bbE_X[d_H(\phlbgn(\cdot|X),\plbgs(\cdot|X))]>\delta \right)
    \leq C\sum_{s=0}^{\infty}
    \exp\left(-\frac{2^{2s}n\delta^2}{2^{14}C^2} \right)
    \leq 
    c\exp\left( -\frac{n\delta^2}{c} \right),
\end{align}
here constant $c$ is a large constant that does not depend on $\Gs$.
Now we could derive the bound on supremum of expectation:
\begin{align*}
    \mathbb{E}_{p_{G_{*,n}}}\bbE_X[d_H({\phlbgn(\cdot|X),\plbgs(\cdot|X)})]
    &=\int^{\infty}_{0}\mathbb{P}
    \left( \bbE_X[d_H({\phlbgn(\cdot|X),\plbgs(\cdot|X)})]>\delta \right)d\delta
    \\&\leq\delta_n+c\int^{\infty}_{\delta_n}\exp \left(-\frac{n\delta^2}{c^2} \right)d\delta
    \\& \leq \Tilde{c}\delta_n,
\end{align*}
here $\Tilde{c}$ is independent from $\lbgs$ and $\delta_n:=\sqrt{\log n/n}$.
So we can conclude that
\begin{align*}
    \sup_{\Gs\in\Xi}
    \mathbb{E}_{p_{ \Gs,n}}
    \bbE_X[d_H(p_{ \widehat{G}_n}(\cdot|X),p_{ \Gs}(\cdot|X))]
    \leq
    C\sqrt{\log n/n}.
\end{align*}
\end{proof}

\begin{proof}[Proof of Lemma \ref{applemma:convergence-rate-add-1}]
Because $\overline{\mathcal{P}}^{1/2}(\Xi,\delta)\subset\overline{\mathcal{P}}^{1/2}(\Xi)$ and from the definition of Hellinger distance, we have
\begin{align*}
    H_B(\delta,\overline{\mathcal{P}}^{1/2}(\Xi,\delta),\mu)
    \leq
    H_B(\delta,\overline{\mathcal{P}}^{1/2}(\Xi),\mu)
    =
    H_B\left(\frac{\delta}{\sqrt{2}},\overline{\mathcal{P}}(\Xi),h\right).
\end{align*}
Now, using the fact that for densities $ f^*, f_1, f_2 $, we have $ h^2 \left( \frac{f_1 + f^*}{2}, \frac{f_2 + f^*}{2} \right) \leq \frac{h^2(f_1, f_2)}{2} $, it is easy to verify that $ H_B(\delta/\sqrt{2}, \overline{\mathcal{P}}(\Xi), d_H) \leq H_B(\delta, \mathcal{P}(\Xi), d_H) $.
Hence, if equation (\ref{assumption:A3}) holds true, then
\begin{align*}
   H_B(\delta,\overline{\mathcal{P}}^{1/2}(\Xi,\delta),\mu)
    \leq
   H_B(\delta,{\mathcal{P}}(\Xi),d_H)
    \lesssim\log\left(\frac{1}{\delta} \right).
\end{align*}
This implies that
\begin{align*}
    \mathcal{J}_B
    \left(
    \epsilon,
    \overline{\mathcal{P}}^{1/2}(\Xi,\delta),\mu
    \right)
    \lesssim
    \epsilon
    \left(
    \log(\frac{2^{13}}{\epsilon^2})
    \right)^{\frac{1}{2}}
    <n\epsilon^2,\quad
\text{for all }
d\epsilon>\sqrt{\displaystyle\frac{\log n}{n}}.
\end{align*}
\end{proof}

\begin{proof}[Proof of Lemma \ref{applemma:convergence-rate-2}]

\textbf{Proof for (i):}
Let $\mathcal{E}_\epsilon(S)$ denote an $\epsilon$-net of a set $S$ under the $\|\cdot\|_\infty$ norm. Then 
\begin{align*}
\log |\mathcal{E}_\epsilon(S)| = \log N(\epsilon, S, \|\cdot\|_\infty).
\end{align*}

Let $\mathcal{P}(\Theta) := \{ p_\Upsilon : \Upsilon \in \Theta \}$, where $p_\Upsilon(Y|X) := f(Y | h(X,\eta), \nu) 
$. By Lemma 6 in \cite{ho2022gaussian}, we have
\begin{align*}
\log N(\epsilon, \mathcal{P}(\Theta), \|\cdot\|_\infty) \lesssim \log(1/\epsilon).
\end{align*}

We now consider the contaminated model $p_{\Upsilon}$ as a composition of smooth components indexed by $(\beta, \tau, \eta, \nu) \in \Xi := \Gamma \times \Theta$, where $\Gamma \subset \mathbb{R}^{d+1}$ and $\Theta \subset \mathbb{R}^{q}\times\mathbb{R}^{+}$ are compact.

Since $\sigma(\beta^\top X + \tau) := \exp(\beta^\top X + \tau)/(1 + \exp(\beta^\top X + \tau))$ is infinitely differentiable and Lipschitz over compact $\Gamma$, it follows that for any $\lambda = (\beta, \tau) \in \Gamma$, there exists $\widetilde{\lambda} = (\widetilde{\beta}, \widetilde{\tau}) \in \mathcal{E}_\epsilon(\Gamma)$ such that
\begin{align*}
\| \sigma_{\lambda} - \sigma_{\widetilde{\lambda}} \|_\infty 
:= \sup_{X \in \mathcal{X}} \left| \frac{\exp(\beta^\top X + \tau)}{1 + \exp(\beta^\top X + \tau)} - \frac{\exp(\widetilde{\beta}^\top X + \widetilde{\tau})}{1 + \exp(\widetilde{\beta}^\top X + \widetilde{\tau})} \right| \leq \epsilon.
\end{align*}
Likewise, for any $\Upsilon = (\eta, \nu) \in \Theta$, there exists $\widetilde{\Upsilon} \in \mathcal{E}_\epsilon(\Theta)$ such that
\begin{align*}
\| p_\Upsilon - p_{\widetilde{\Upsilon}} \|_\infty \leq \epsilon.
\end{align*}

Now, consider the difference
\begin{align*}
&\quad p_{G}(Y | X) - p_{\widetilde{G}}(Y | X) \\
&= \left( \sigma_{\lambda}(X) - \sigma_{\widetilde{\lambda}}(X) \right) \left[ f(Y | h(X, \eta), \nu) - f_0(Y | h_0(X, \eta_0), \nu_0) \right] \\
&\quad + \sigma_{\widetilde{\lambda}}(X) \left[ f(Y | h(X, \eta), \nu) - f(Y | h(X, \widetilde{\eta}), \widetilde{\nu}) \right],
\end{align*}
so that by the triangle inequality and boundedness of $f_0$ and $f$,
\begin{align*}
\| p_{G} - p_{ \widetilde{G}} \|_\infty
&\leq \| \sigma_{\lambda} - \sigma_{\widetilde{\lambda}} \|_\infty \cdot \left( \|f_0\|_\infty + \|f\|_\infty \right)
+ \| \sigma_{\widetilde{\lambda}} \|_\infty \cdot \| p_\Upsilon - p_{\widetilde{\Upsilon}} \|_\infty \\
&\lesssim \epsilon.
\end{align*}

Hence, the covering number of $\mathcal{P}(\Xi)$ satisfies
\begin{align*}
\log N(\epsilon, \mathcal{P}(\Xi), \| \cdot \|_\infty) 
\leq \log N(\epsilon, \Gamma, \| \cdot \|_\infty) 
+ \log N(\epsilon, \mathcal{P}(\Theta), \| \cdot \|_\infty)
\lesssim \log(1/\epsilon).
\end{align*}

\textbf{Proof for (ii):}
First, let $ \eta \leq \varepsilon $ be a positive number, which will be chosen later. We consider $f$ is the density function of an univariate Gaussian distribution, so $f$ is light tail: 
for any $|Y|\geq2a$ and $X\in\mathcal{X}$, 
\begin{align*}
    f(\yha)\leq\frac{1}{\sqrt{2\pi}\ell}
    \exp\left(-\frac{Y^2}{8u^2} \right).
\end{align*}
Also $f_0$ is bounded with tail 
$
\log f_0(Y|h (X,\eta_0 ),\nu_0)
\lesssim
-Y^q
$
and $f_0(Y|h (X,\eta_0 )),\nu_0)\leq M$
for almost surely $Y\in\mathcal{Y}$
for some $M,q>0$. 
Now let $q=\min\{p,2 \}$ and $C_2=\max\left\{ M,1/{\sqrt{2\pi}\ell} \right\}$, we will have 
\begin{align}
    H(X,Y)=
    \begin{cases}
        C_1\exp(-Y^q)
        ,& |Y|\geq2a \\
        C_2
        ,&|Y|<2a
    \end{cases}
\end{align}
here $C_1$ is a positive constant depending on $\ell$ and $f_0$. Moreover $H(X,Y)$ is an envelope of $\mathcal{P}(\Xi)$.
Next, let $ g_1, \dots, g_N $ represent an $ \eta $-net over $ \mathcal{P}_k(\Xi) $. Then, we construct the brackets $[p^L_i(X, Y), p^U_i(X, Y)]$ as follows:
\begin{align*}
    \begin{cases}
        p^L_i(X, Y):=\max\{g_i(X,Y)-\eta,0 \}\\
        p^U_i(X, Y):=\min\{g_i(X,Y)+\eta, H(X,Y) \}
    \end{cases}
\end{align*}
for $i=1,\cdots,N$. 
As a result, $ \mathcal{P}_k(\Xi) \subset \bigcup_{i=1}^N [p^L_i(X, Y), p^U_i(X, Y)] $ and $ p^U_i(X, Y) - p^L_i(X, Y) \leq \min\{2\eta, H(X, Y)\} $. Consequently,
\begin{align*}
    &\int\left( p^U_i(X, Y)-p^L_i(X, Y)\right)d(X,Y)\\
    &\leq
    \int_{|Y|<2a}\left( p^U_i(X, Y)-p^L_i(X, Y)\right)d(X,Y)
    +
    \int_{|Y|\geq2a}\left( p^U_i(X, Y)-p^L_i(X, Y)\right)d(X,Y)
    \\&\leq
    \int_{|Y|<2a}2\eta d(X,Y)
    +
    \int_{|Y|\geq2a}H(X,Y)d(X,Y)
    \lesssim\eta.
\end{align*}
This shows that $$ H_B(c\eta, \mathcal{P}(\Xi), \|\cdot\|_1) \leq N \lesssim \log(1/\eta). $$
Setting $ \eta = \epsilon/c $, 
we find $$
H_B(\epsilon, \mathcal{P}(\Xi), \|\cdot\|_1) \lesssim \log(1/\epsilon). $$
Since $ h^2 \leq \|\cdot\|_1 $ holds between the Hellinger distance and the total variation distance, we conclude the bracketing entropy bound.
\end{proof}

\end{document}